\theoremstyle{plain}
\newtheorem{theorem}{Theorem}[section]
\newtheorem{lemma}[theorem]{Lemma}
\newtheorem{corollary}[theorem]{Corollary}
\theoremstyle{definition}
\newtheorem{definition}[theorem]{Definition}
\theoremstyle{remark}
\newtheorem{remark}[theorem]{Remark}
\DeclareMathOperator*{\argmax}{arg\,max}
\icmltitlerunning{Federated Combinatorial Optimization with Multi-Agent Multi-Armed Bandits}
\begin{document}

\twocolumn[
% \icmltitle{Combinatorial Multi-Agent Multi-Armed Bandits Framework}

% \icmltitle{Online Multi-Agent Combinatorial Optimization with Bandit Feedback}

% \icmltitle{Federated Learning Framework \\ for Combinatorial Multi-Agent Multi-Armed Bandits}

% \icmltitle{Federated Learning for Combinatorial Optimization \\ with Multi-Agent and Bandit Feedback}

% \icmltitle{Federated Combinatorial Multi-Agent Multi-Armed Bandits}

% \icmltitle{Federated Learning for Combinatorial Optimization\\ with Multi-Agent Multi-Armed Bandits}

% \icmltitle{Federated Combinatorial Optimization with Multi-Agent Multi-Armed Bandits}

\icmltitle{Federated Combinatorial Multi-Agent Multi-Armed Bandits}

% It is OKAY to include author information, even for blind
% submissions: the style file will automatically remove it for you
% unless you've provided the [accepted] S^\starion to the icml2023
% package.

% List of affiliations: The first argument should be a (short)
% identifier you will use later to specify author affiliations
% Academic affiliations should list Department, University, City, Region, Country
% Industry affiliations should list Company, City, Region, Country

% You can specify symbols, otherwise they are numbered in order.
% Ideally, you should not use this facility. Affiliations will be numbered
% in order of appearance and this is the preferred way.
% \icmlsetsymbol{equal}{*}

\begin{icmlauthorlist}
\icmlauthor{Fares Fourati}{yyy}
\icmlauthor{Mohamed-Slim Alouini}{yyy}
\icmlauthor{Vaneet Aggarwal}{comp}
% \icmlauthor{Firstname4 Lastname4}{sch}
% \icmlauthor{Firstname5 Lastname5}{yyy}
% \icmlauthor{Firstname6 Lastname6}{sch,yyy,comp}
% \icmlauthor{Firstname7 Lastname7}{comp}
% %\icmlauthor{}{sch}
% \icmlauthor{Firstname8 Lastname8}{sch}
% \icmlauthor{Firstname8 Lastname8}{yyy,comp}
% \icmlauthor{}{sch}
% \icmlauthor{}{sch}
\end{icmlauthorlist}

\icmlaffiliation{yyy}{Department of Computer, Electrical and Mathematical Science and Engineering, King Abdullah University of Science and Technology (KAUST), Thuwal, KSA.}
\icmlaffiliation{comp}{School of Industrial Engineering, Purdue University, West
Lafayette, IN 47907, USA}
% \icmlaffiliation{sch}{School of ZZZ, Institute of WWW, Location, Country}

\icmlcorrespondingauthor{Fares Fourati}{fares.fourati@kaust.edu.sa}

% You may provide any keywords that you
% find helpful for describing your paper; these are used to populate
% the "keywords" metadata in the PDF but will not be shown in the document
\icmlkeywords{Machine Learning, ICML}

\vskip 0.3in
]

% this must go after the closing bracket ] following \twocolumn[ ...

% This command actually creates the footnote in the first column
% listing the affiliations and the copyright notice.
% The command takes one argument, which is text to display at the start of the footnote.
% The \icmlEqualContribution command is standard text for equal contribution.
% Remove it (just {}) if you do not need this facility.

\printAffiliationsAndNotice{}  % leave blank if no need to mention equal contribution
% \printAffiliationsAndNotice{\icmlEqualContribution} % otherwise use the standard text.

\begin{abstract}
This paper introduces a federated learning framework tailored for online combinatorial optimization with bandit feedback. In this setting, agents select subsets of arms, observe noisy rewards for these subsets without accessing individual arm information, and can cooperate and share information at specific intervals.
Our framework transforms any offline resilient single-agent $(\alpha-\epsilon)$-approximation algorithm—having a complexity of $\tilde{\mathcal{O}}\left(\frac{\psi}{\epsilon^\beta}\right)$, where the logarithm is omitted, for some function $\psi$ and constant $\beta$—into an online multi-agent algorithm with $m$ communicating agents and an $\alpha$-regret of no more than $\tilde{\mathcal{O}}\left(m^{-\frac{1}{3+\beta}} \psi^\frac{1}{3+\beta} T^\frac{2+\beta}{3+\beta}\right)$.
Our approach not only eliminates the $\epsilon$ approximation error but also ensures sublinear growth with respect to the time horizon $T$ and demonstrates a linear speedup with an increasing number of communicating agents. Additionally, the algorithm is notably communication-efficient, requiring only a sublinear number of communication rounds, quantified as $\tilde{\mathcal{O}}\left(\psi T^\frac{\beta}{\beta+1}\right)$. 
Furthermore, the framework has been successfully applied to online stochastic submodular maximization using various offline algorithms, yielding the first results for both single-agent and multi-agent settings and recovering specialized single-agent theoretical guarantees. We empirically validate our approach to a stochastic data summarization problem, illustrating the effectiveness of the proposed framework, even in single-agent scenarios.
\end{abstract}

\section{Introduction}
\label{intro}

The \textit{Multi-Armed Bandits} (MAB) \cite{slivkins2019introduction, lattimore2020bandit} model online decision-making, where at every time step, an agent plays an arm and observes its associated reward. In combinatorial MAB, the agent can play a set of arms, instead of one arm, at each time step and receive a reward for that selection. When the agent only learns about the reward linked to the played set, it is known as \textit{full-bandit feedback} or \textit{bandit feedback}. If the agent gains additional insights into how each arm contributes to the overall reward, it is called \textit{semi-bandit feedback}. Dealing with bandit feedback is more challenging since agents have less knowledge than in the semi-bandit feedback setting. In this work, we consider bandit feedback \cite{fourati2023randomized, nie2023framework, fourati2024combinatorial}, which has several applications, such as recommender systems, revenue maximization \cite{fourati2023randomized}, influence maximization \cite{nie2023framework, fourati2024combinatorial}, and data summarization, as shown in this work. 

Federated learning (FL), an emerging machine learning paradigm, involves collaborative learning among multiple agents. In this process, selected agents share their local updates with a central server, which then aggregates these updates and sends the consolidated output back to each participating agent \cite{konevcny2016federated, mcmahan2017communication, li2020federated, hosseinalipour2020federated, elgabli2022fednew, fourati2023filfl}. While the problem has been studied in the context of continuous optimization, we address combinatorial optimization \citep{korte2011combinatorial} in a federated online stochastic setting with bandit feedback. For example, the multi-agent setting under consideration can be applied to recommender systems, where each agent aims to recommend a set of products and then shares its findings with a server. We provide a general FL framework to adapt combinatorial offline single-agent approximation algorithms to online multi-agent settings, presenting the first results for the regret bounds in this setup.

We consider a setting with $m' \geq 1$ agents connected through a network. Among them, only a randomly selected subset of $m \leq m'$ agents can cooperate and share information in communication rounds, possibly through a server. This setup accommodates scenarios of partial participation, which are more practical in some real-world settings due to inherent availability and communication constraints. When $m = m'$, the scenario reverts to a full-participation setting. All agents aim to solve the same online stochastic combinatorial problem within a time horizon of $T$. They conduct local exploration and then, if selected, share local estimations. Each agent $i$ can play any subset of arms in parallel with others and receives noisy rewards for that set. In a combinatorial setting, the number of possible actions becomes exponentially large with the number of base arms, making sharing estimations for all possible actions prohibitive. Therefore, we consider a more practical approach wherein, in each communication round, selected agents share only a single action (subset) estimation.

Our work is the first to provide a general multi-agent framework for adapting combinatorial offline single-agent approximation algorithms to a FL setting to solve stochastic combinatorial multi-agent MAB (C-MA-MAB) problems with only bandit feedback. The proposed approach provably achieves a regret of at most $\tilde{\mathcal{O}}(m^{-\frac{1}{3+\beta}} \psi^\frac{1}{3+\beta} T^{\frac{2+\beta}{3+\beta}})$, which is sub-linear in the time horizon $T$ and decreases with an increasing number of agents $m$, for some constant $\beta$ and some function $\psi$ that govern the complexity of the considered offline approximation algorithm. The framework does not require the agents to communicate every step; instead, it only needs $\tilde{\mathcal{O}}(\psi T^\frac{\beta}{\beta+1})$ communication times. Our proposed framework can serve to solve online combinatorial problems for both single-agent $(m^{'} = m  = 1)$ and multi-agent $(m^{'} >1)$ scenarios. Notably, our framework enjoys a linear speedup with an increasing number of agents, translating to decreased regrets.

We note that for a single agent, under bandit feedback, various offline-to-online transformations have been proposed for submodular maximization \cite{nie2022explore, fourati2023randomized, fourati2024combinatorial}. Additionally, \citet{nie2023framework} studied a framework for general combinatorial optimization in which any resilient offline algorithm with an $\alpha$-approximation guarantee can be adapted to an online algorithm with sublinear $\alpha$-regret guarantees. Similarly, an offline resilient $(\alpha-\epsilon)$-approximation algorithm can provide sublinear $(\alpha-\epsilon)$-regret guarantees for an online algorithm for any $\epsilon \geq 0$. This approach leads to a linear $\alpha$-regret online algorithm when $\epsilon > 0$. Recently, \citet{fourati2024combinatorial} addressed this issue of linear regret in the case of monotone submodular optimization with a cardinality constraint. They adapted a sub-optimal offline $(1-1/e-\epsilon)$-approximation algorithm—whose complexity grows with $\log(\frac{1}{\epsilon})$—to a sub-linear $(1-1/e)$-regret online algorithm with bandit feedback, successfully eliminating the $\epsilon$ approximation error while ensuring sub-linearity with respect to the horizon $T$.

In this work, we generalize and extend the results previously established by \citet{nie2023framework} and \citet{fourati2024combinatorial}. We demonstrate that any sub-optimal single-agent offline approximation algorithm, with an approximation factor of $(\alpha-\epsilon)$ where $\epsilon \geq 0$, and ensuring resilience, can be adapted to a multi-agent setting with bandit feedback. This adaptation provides sub-linear $\alpha$-regret guarantees, in contrast to the sub-linear $(\alpha-\epsilon)$-regret guarantees provided by \citet{nie2023framework}. This approach eliminates the $\epsilon$ error while maintaining sub-linearity with respect to $T$ across any combinatorial problem, any reward, and under any constraints.

We note that in contrast to previous works that dealt with specific assumptions about reward functions and constraints on actions—such as assuming stochastic submodular rewards \cite{fourati2023randomized} or monotone rewards with cardinality constraints \cite{nie2022explore, fourati2024combinatorial}—our work considers general reward functions without making any assumptions about the reward type or constraints. Furthermore, our work explores the use of any offline algorithm $\mathcal{A}(\epsilon)$ as a subroutine with complexity in the general form of $\mathcal{O}\left(\frac{\psi}{\epsilon^\beta}\right)$ or $\mathcal{O}\left(\frac{\psi}{\epsilon^\beta}\log\left(\frac{1}{\epsilon}\right)\right)$, where $\psi$ is a function of the problem characteristics, $\beta \geq 0$ some constant, and $\epsilon$ is an approximation error factor, ensuring that our results apply to any algorithm with these characteristics.

In addition, unlike previous works that focused on combinatorial single-agent scenarios \cite{fourati2023randomized, fourati2024combinatorial, nie2022explore, nie2023framework}, we address a combinatorial multi-agent setting where collaboration is permitted, possibly with partial participation, and optimize the worst-case regret for any agent, thereby recovering and generalizing the single-agent setting. Additionally, although our proposed algorithm has parameters, such as $\epsilon^\star$ and $r^\star$, none are considered hyperparameters. We derive closed-form values for these parameters as functions of the problem parameters, such as the range $T$, the number of available agents $m$, and the subroutine-related value of the constant $\beta$, and the function $\psi$, to minimize the expected cumulative $\alpha$-regret. 

\textbf{Contributions:}  We introduce a novel FL framework for online combinatorial optimization, adapting single-agent offline algorithms to tackle online multi-agent problems with bandit feedback. The paper demonstrates the adaptability of any single-agent sub-optimal offline $(\alpha-\epsilon)$-approximation algorithm, ensuring resilience, for a C-MA-MAB with $\alpha$-regret guarantees of $\tilde{\mathcal{O}}(m^{-\frac{1}{3+\beta}} T^\frac{2+\beta}{3+\beta})$, lifting the $\epsilon$ error, providing sub-linearity in the horizon $T$, and ensuring linear speedup with an increasing number of agents $m$. The framework only requires sub-linear communication rounds of $\tilde{\mathcal{O}}\left(T^\frac{\beta}{\beta+1}\right)$, which becomes at most logarithmic with respect to $T$ when $\beta=0$. Furthermore, we leverage our theoretical results to address online submodular maximization, present the first specialized regret for the non-monotone cardinality constraint case, and recover bounds for different offline algorithms, demonstrating tighter regrets than previous specialized works. Finally, we showcase the applicability of our framework to both single-agent and multi-agent stochastic data summarization problems against MAB baselines, highlighting its effectiveness in practical scenarios.

\begin{table*}[t]
\small
\centering
\begin{tabular}{|c|c|c|c|c|c|}
\hline 

\begin{tabular}[c]{@{}c@{}} Offline Algorithm  \end{tabular}   &  \begin{tabular}[c]{@{}c@{}} Offline \\ Complexity \end{tabular}   &   \begin{tabular}[c]{@{}c@{}} Offline \\ Factor \end{tabular}                                                             & \begin{tabular}[c]{@{}c@{}}Online \\ Factor $\alpha$\end{tabular} & Prior $\alpha$-regret                                                                                              & \begin{tabular}[c]{@{}c@{}} Our $\alpha$-regret Bound\end{tabular}                                                       \\ \hline 
\begin{tabular}[c]{@{}c@{}} \textsc{RandomizedUSM} \\ \cite{buchbinder2015tight} \end{tabular} & $\mathcal{O}\left(n\right)$  & $1/2$  & $1/2$                                                       & $\tilde{\mathcal{O}}\left(nT^\frac{2}{3}\right)^{*} $                            & $\tilde{\mathcal{O}}\left(m^{-\frac{1}{3}}nT^\frac{2}{3}\right)$                                       
                                              \\ \hline
\begin{tabular}[c]{@{}c@{}} \textsc{Greedy} \\ \cite{nemhauser1978analysis}   \end{tabular} & $\mathcal{O}\left(nk\right)$ & $1-\frac{1}{e}$ & $1-\frac{1}{e}$                                                    & $\tilde{\mathcal{O}}\left(k n^\frac{1}{3} T^\frac{2}{3}\right)^{**}$         & $\tilde{\mathcal{O}}\left(m^{-\frac{1}{3}}k n^\frac{1}{3} T^\frac{2}{3}\right)$  
\\ \hline

\begin{tabular}[c]{@{}c@{}} \textbf{\textsc{General}} \\ \small{Excluding the next rows}\end{tabular}  & $\mathcal{O}(\psi)$  & $\alpha$ &    $\alpha$  &     $\Tilde{\mathcal{O}}\left(\delta^\frac{2}{3}\psi^\frac{1}{3} T^\frac{2}{3}\right)^{**}$      & $\Tilde{\mathcal{O}}\left(m^{-\frac{1}{3}} \delta^\frac{2}{3}\psi^\frac{1}{3}  T^\frac{2}{3}\right)$     
\\ \hline

\begin{tabular}[c]{@{}c@{}} \textsc{Stochastic-Greedy} \\ \small{\cite{mirzasoleiman2015lazier}}   \end{tabular} & $\mathcal{O}\left(n \log(\frac{1}{\epsilon})\right)$ &$1-\frac{1}{e}-\epsilon$ & $1-\frac{1}{e}$      & $\tilde{\mathcal{O}}\left(k^\frac{2}{3} n^\frac{1}{3} T^\frac{2}{3}\right)^{\dagger} $                                                    & $\tilde{\mathcal{O}}\left(m^{-\frac{1}{3}} k^\frac{2}{3} n^\frac{1}{3} T^\frac{2}{3}\right)$      
\\ \hline

\begin{tabular}[c]{@{}c@{}} \textbf{\textsc{More General}} \\ \small{Excluding the next rows} \end{tabular} & $\mathcal{O}(\psi \log^{\gamma}(\frac{1}{\epsilon}))$ &      $\alpha - \epsilon$  &    $\alpha$ & \textbf{None}   & $\Tilde{\mathcal{O}}\left(m^{-\frac{1}{3}} \delta^\frac{2}{3}\psi^\frac{1}{3}  T^\frac{2}{3}\right)$                          
\\ \hline
\begin{tabular}[c]{@{}c@{}} \textsc{RandomSampling} \\ \cite{buchbinder2017comparing}   \end{tabular} & $\mathcal{O}\left(\frac{n}{\epsilon^2} \log(\frac{1}{\epsilon})\right)$ &  $\frac{1}{e}-\epsilon$  & $\frac{1}{e}$        & \textbf{None}                                                   & $\Tilde{\mathcal{O}}\left(m^{-\frac{1}{5}} k^\frac{2}{5} n^\frac{1}{5}  T^\frac{4}{5}\right)$                                                                                         
                  
\\ \hline
\begin{tabular}[c]{@{}c@{}} \textbf{\textsc{Most General}} \\ \small{Including the previous rows} 
 \end{tabular} & $\mathcal{O}( \frac{\psi}{\epsilon^\beta} \log^{\gamma}(\frac{1}{\epsilon}))$  & $\alpha - \epsilon$ &   $\alpha$ &     \textbf{None}    & $\Tilde{\mathcal{O}}\left(m^{-\frac{1}{3+\beta}} \delta^\frac{2}{3+\beta}\psi^\frac{1}{3+\beta}  T^\frac{2+\beta}{3+\beta}\right)$                         
\\ \hline
\end{tabular}
\caption{\small The table summarizes the results of combinatorial optimization under bandit feedback. 
We use $\tilde{\mathcal{O}}$ to simplify expressions. Key parameters include horizon $T$, number of communicating agents $m$, base arm count $n$, and cardinality constraint $k$. Each row presents a specific offline algorithm or a class transformation with a given complexity, offline approximation factor, and a target online factor $\alpha$. For the general rows, we consider classes of offline algorithms, with an approximation error factor $\epsilon$, with general complexity forms, with general constants, $\psi \geq 0$, $\beta \geq 0$, $\gamma \in \{0,1\}$, and $\delta \geq 0$. $^*$ \cite{fourati2023randomized}, $^{**}$ \cite{nie2023framework}, $^{\dagger}$ \cite{fourati2024combinatorial}. 
}
\label{tab:related:work}
\end{table*}

\section{Problem Statement}
\label{prob_state}

We formally present the problem as follows. We denote $\Omega$ as the ground set of $n$ base arms. We consider a set $\mathbb{A}'$ with $m' \geq 1$ agents, where only a randomly selected subset $\mathbb{A}$ of $m \geq 1$ agents communicates in the communication rounds. We examine decision-making problems within a fixed period $T$, where, at every time step $t$, agent $i$ chooses a subset $S_{i,t} \subseteq \Omega$. Let $\mathbb{S} \subseteq 2^\Omega$ represent the set of all permitted subsets, depending on the problem constraints. 

In each time step $t$, agent $i$ plays an action $S_{i,t} \in \mathbb{S}$ and acquires a noisy reward $f_t(S_{i,t})$. We assume that the reward $f_t$ is a realization of a stochastic function with a mean of $f$, bounded in $[0,1]$ \footnote{Results can be directly extended to a general function $f(\cdot)$ with a minimum value $f_{min}$ and a maximum value $f_{max}$ by considering a normalized function $g(\cdot) = (f(\cdot) - f_{min})/(f_{max}- f_{min})$.}, and i.i.d. conditioned on a given action. Thus, over a horizon $T$, agent $i$ achieves a cumulative reward of $\sum_{t=1}^Tf_t(S_{i,t})$. We define the expected reward function for a given action $S$ as 
$f(S) = \mathbb{E}[f_t(S)]$, hence $S^{\star}=\argmax_{S\subseteq \mathbb{S}}f(S)$ denote the optimal set in expectation. 

In offline settings, attention is on the algorithm's complexity and worst-case expected output approximation guarantees. Conversely, in the online setting, attention is on cumulative rewards, where agents seek to minimize their expected cumulative regrets over time. One standard metric to assess the algorithm's online performance is to contrast the agent to an ideal learner that knows and consistently plays the best choice in expectation $S^{\star}$. However, the significance of such a comparison becomes questionable if the optimization of $f$ over $\mathbb{S}$ is NP-hard, if the horizon is not exponentially large in the problem parameters \citep{fourati2023randomized, fourati2024combinatorial, nie2023framework}. Hence, if a polynomial time offline algorithm $\mathcal{A}(\epsilon)$, happen to be an $(\alpha-\varepsilon)$-approximation\footnote{An algorithm $\mathcal{A}(\epsilon)$ is an $(\alpha-\epsilon)$-approximation algorithm for maximizing a deterministic function $f:\mathbb{S}\to \mathbb{R}$ after $N$ oracle calls to $f$ satisfies $\mathbb{E}[f(\Theta)]\geq (\alpha-\epsilon) f(S^{\star})$, where $S^{\star}$ the optimal subset under $f$ and the expectation is over the randomness of $\mathcal{A}(\epsilon)$.} algorithm, for a given error $\epsilon \geq 0$, and an approximation ratio of $(\alpha-\epsilon)\leq 1$, for specific combinatorial objectives, a common approach involves comparing the agent's cumulative reward to $\sum_{t=1}^T (\alpha-\epsilon) f(S^{\star})$ and denoting the difference as the $(\alpha-\epsilon)$-regret \citep{nie2023framework}. In this work, we compare the learner's cumulative reward to a (tighter) agent that achieves $\sum_{t=1}^T \alpha f(S^{\star})$, and we denote the difference as the $\alpha$-regret, which is defined for every agent $i$ as follows:
\begin{equation}
    \mathcal{R}_i(T) =  \sum_{t=1}^T (\alpha f_t(S^{\star}) - f_t(S_{i,t})). \label{eq:reg:1e}
\end{equation}  
The cumulative $\alpha$-regret $\mathcal{R}_i(T)$ is random, given that it is a sum of $\alpha$-regrets, which are functions of stochastic rewards and depend on the chosen subsets. In this work, we aim to minimize the expected cumulative $\alpha$-regret, where the expectation encompasses the oracle noise and the randomness of the series of actions.

\section{Related Work} 

In Table~\ref{tab:related:work}, we provide a comprehensive overview comparing our $\alpha$-regret guarantees with existing ones across various adaptations of offline approximations for different combinatorial problems. Each row corresponds to a case involving an offline algorithm or a general class of algorithms with a specified complexity form, offline approximation factor, and target online regret factor $\alpha$. The last two columns present previously established $\alpha$-regret bounds and our own for adapting the proposed offline algorithm or class of algorithms as a subroutine to the online setting. The third row presents the \textsc{General} transformation, which generalizes the previous two cases, with a complexity of the form $\mathcal{O}(\psi)$ for any function $\psi$. The fifth row presents the \textsc{More General} case, with a complexity of the form $\mathcal{O}(\psi \log^{\gamma}(\frac{1}{\epsilon}))$, where $\gamma \in \{0,1\}$, further generalizing the previous cases. The last row, presenting the \textsc{Most General}, encompasses all previous rows, with a complexity of the form $\mathcal{O}(\frac{\psi}{\epsilon^\beta} \log^{\gamma}(\frac{1}{\epsilon}))$, where $\beta \geq 0$, including the \textsc{General} when $\beta=\gamma=0$, the \textsc{More General} when $\beta=0$, and the example in the sixth row.

\textbf{Combinatorial Single-Agent Examples:} We note that online submodular optimization with bandit feedback has been considered in \cite{nie2022explore, fourati2023randomized, fourati2024combinatorial}. These results are differentiated from our work in Table \ref{tab:related:work}. For single-agent non-monotone submodular rewards (See Section \ref{sec:appl-CETC:submod}) under bandit feedback, \citet{fourati2023randomized} adapts the \textsc{RandomizedUSM} in \cite{buchbinder2015tight}, achieving sub-linear $\frac{1}{2}$-regret. Additionally, for single-agent monotone submodular rewards under bandit feedback with a cardinality constraint $k$, \citet{nie2022explore} adapts the \textsc{Greedy} in \cite{nemhauser1978analysis}, and \citet{fourati2024combinatorial} adapts the \textsc{Stochastic-Greedy} in \cite{mirzasoleiman2015lazier}, with both achieving sub-linear $(1-\frac{1}{e})$-regret. Our work not only recovers all the results above in the single-agent setting but also generalizes them to the multi-agent setting, showing a decreasing regret with a factor of $m^{-\frac{1}{3}}$.

\textbf{Multi-Agent:} Previous works have proposed solutions for solving offline distributed submodular maximization. For example, partitioning the arms among the agents and running a greedy algorithm on each agent was proposed in previous works \cite{barbosa2015power, mirzasoleiman2013distributed}. While this is practical in some settings, it is designed for deterministic objectives and leads to lower approximation guarantees (half the ratio for monotone submodular maximization). 
Moreover, regret analysis for multi-agent (non-combinatorial) MAB problems has been investigated \cite{chawla20Gossiping,wang2020optimal,agarwal2022multi}. In \cite{chawla20Gossiping},  gossip style communication approach is used between agents to achieve ${\tilde{\mathcal{O}}}((\frac{n}{m}+2)^{\frac{1}{3}}T^{\frac{2}{3}})$ regret. \citet{wang2020optimal} present an algorithm for a distributed bandit setting where all the agents communicate with a central node, which is shown to achieve a regret of $\tilde{\mathcal{O}}(\sqrt{nT/m})$. \citet{agarwal2022multi} propose another algorithm, which splits the arms among the different agents, such that each
learner plays arms only within a subset of arms and the best-communicated arm indices from other agents in the previous round. This achieves a regret of ${\tilde{\mathcal{O}}}(\sqrt{(\frac{n}{m}+m)T})$ while reducing the communication significantly. We note that these works do not directly extend to combinatorial bandits since the confidence-bound based approaches here cannot work for combinatorial bandits since $\mathcal{O}(2^n)$ sets cannot be explored. Recent works have considered FL for contextual bandits \cite{li2022asynchronous, he2022simple, li2023learning}; however, these works do not apply to our setting. Our work is the first to present an FL framework for general combinatorial multi-agent MAB with bandit feedback.  

\textbf{Combinatorial Single-Agent Frameworks:} Previous frameworks have been proposed for combinatorial single-agent MAB problems \cite{niazadeh2021online, nie2023framework}. \citet{niazadeh2021online} proposes a framework aiming to adjust an iterative greedy offline algorithm into an online version within an adversarial bandit setting. However, their approach requisites the offline algorithm to possess an iterative greedy structure. In contrast, our framework treats the offline algorithm as a black-box algorithm. Furthermore, unlike our work, \citet{niazadeh2021online} imposes a condition known as Blackwell reproducibility on the offline algorithm, in addition to the resilience property. A closely related work is the single-agent framework by \citet{nie2023framework} called C-ETC, which adapts offline algorithms with robustness guarantees to stochastic combinatorial single-agent MAB, generalizing some previous works for submodular maximization \cite{nie2022explore, fourati2023randomized}. However, C-ETC fails to generalize the more recent work of \citet{fourati2024combinatorial}. Our framework generalizes and outperforms the results of \citet{nie2023framework} in many ways. First, it extends to the multi-agent scenario involving $m \geq 1$ communicating agents, including the single-agent scenario. Additionally, while the C-ETC framework cannot replicate the results of \citet{fourati2024combinatorial} for submodular maximization, ours not only recovers all these previous works, including the framework, but also achieves even tighter regret guarantees that decrease with an increasing number of selected agents.

\section{Combinatorial MA-MAB Framework}
\label{sec:alg}

We first define resilient approximation algorithms and then present our proposed offline-to-online framework.

\subsection{Offline Resilient-Approximation}
\label{sec:robust}  

We introduce the concept of resilient approximation, a metric that allows us to evaluate how an offline approximation algorithm reacts to controlled variations in function evaluations. 
We demonstrate that this specific characteristic alone can ensure that the offline algorithm can be modified to tackle stochastic C-MA-MAB settings, with only bandit feedback and achieving a sub-linear regret. Moreover, this adaptation does not rely on the algorithm's structure but treats it as a black-box algorithm. 

We define the $\xi$-controlled-estimation $\bar{f}$ of a reward function $f$ to deal with controlled variations.

\begin{definition}[$\xi$-controlled-estimation]\label{def:controlled_estimation}
For a set function $f:\mathbb{S}\to \mathbb{R}$ defined over a finite domain $\mathbb{S} \subseteq 2^\Omega$, a set function $\bar{f}:\mathbb{S}\to \mathbb{R}$, for a $\xi \geq 0$, is a $\xi$-controlled estimation of $f$ if $|f(S)-\bar{f}(S)|\leq \xi$ for all $S\in \mathbb{S}$.
\end{definition}

An estimation $\bar{f}$ is a $\xi$-controlled-estimation for a set function $f$ if it consistently stays within a small positive range $\xi$ of the actual values across all possible sets in $\mathbb{S}$. Given such a $\xi$-controlled-estimation for a set function $f$, we define $(\alpha, \beta, \gamma, \psi, \delta)$-resilient-approximation as follows:

\begin{definition}[$(\alpha, \beta, \gamma, \psi, \delta)$-resilient approximation]\label{def:resilient}
For any $\epsilon \geq 0$, an algorithm $\mathcal{A}(\epsilon)$ is an $(\alpha, \beta, \gamma, \psi, \delta)$-resilient approximation for maximizing a function $f: \mathbb{S} \subseteq 2^\Omega \to \mathbb{R}$ if, after making a number of calls $N(\beta, \gamma, \psi, \epsilon)$ to a $\xi$-controlled estimator $\bar{f}$, its output $\Theta$ satisfies the following condition:
$\mathbb{E}[f(\Theta)]\geq (\alpha - \epsilon) f(S^{\star}) -\delta \xi$, where $S^{\star}$ is the optimal set under $f$, and the expectation is over the randomness of $\mathcal{A}(\epsilon)$. Here, $N(\beta, \gamma, \psi, \epsilon)$ is defined as $\psi$ when $\epsilon=0$, and as $\psi \frac{1}{\epsilon^\beta} \log^{\gamma}(\frac{1}{\epsilon})$ otherwise.
\end{definition}

\begin{remark}
Several offline approximation algorithms are designed to solve specific combinatorial problems under particular reward and constraint assumptions. In \cref{appendix:application}, we study various ones and demonstrate their resilience, characterizing each by their corresponding parameters: $\alpha, \beta, \gamma, \psi,$ and $\delta$. Given the resilience of an offline algorithm, one can directly apply \cref{thm:main} to ascertain the corresponding worst-case online guarantees when extending that offline algorithm to online settings.
\end{remark}

\begin{remark}
An equivalent definition to resilient approximation was proposed by \citet{nie2023framework} as a robust approximation. However, the resilient approximation provides more information about the algorithm's complexity. Specifically, an $(\alpha-\epsilon, \delta)$-robust-approximation algorithm that requires a number of oracle calls equal to $\frac{\psi}{\epsilon^\beta} \log^\gamma(\frac{1}{\epsilon})$, then it is an $(\alpha, \beta, \gamma, \psi, \delta)$-resilient-approximation algorithm. Furthermore, an $(\alpha, \beta, \gamma, \psi, \delta)$-resilient-approximation algorithm is an $(\alpha-\epsilon, \delta)$-robust-approximation algorithm, with $\epsilon \geq 0$.
\end{remark}

Generally, the offline algorithms are designed for deterministic (noiseless) functions. However, in real-world applications, access to a noiseless reward function is not always possible, often due to the inherently stochastic nature of the problem. For example, recommending the same set of products to different people, or even to the same person at different times, may not yield consistent outcomes and rewards. This noise could also stem from using a stochastic approximation of the oracle function for computational efficiency, as seen in the case of stochastic data summarization discussed in \cref{data_summ_section}. Therefore, in such cases, resilience against noisy rewards is necessary. 

We demonstrate in \cref{thm:main} that resilience is, in fact, sufficient to ensure that the offline algorithm can be adjusted to achieve online sub-linear regret guarantees. In what follows, we explain how we use a resilient approximation algorithm $\mathcal{A}(\epsilon)$ as a subroutine in our proposed framework.

\subsection{Offline-to-Online Multi-Agent Framework}

\begin{algorithm}[t]
\caption{C-MA-MAB}
\label{alg:cmamab}
\begin{algorithmic}
    \STATE {\bfseries Input:}  Horizon $T$, actions $\mathbb{S}$, agents $\mathbb{A}'$, number $m$, $(\alpha, \beta, \gamma, \psi, \delta)$-resilient-approximation algorithm $\mathcal{A}(\epsilon)$
    \STATE  Initialize $r_i^\star \leftarrow \left\lceil m^{-1} \left(\delta \sqrt{\log(T)} \left(\frac{Tm}{\psi}\right)^\frac{1}{\beta+1} \right)^{\frac{2+2\beta}{3+\beta}} \right \rceil $
    \STATE  Initialize $\epsilon^\star \leftarrow (\frac{\psi r_i^\star}{T})^{\frac{1}{\beta+1}} \mathbf{1}_{\{\beta>0 \text{ OR } \gamma>0\}} $.
    \STATE 
    \STATE  \# Multi-Agent Exploration Time
    \STATE Server starts running $\mathcal{A}(\epsilon^\star)$, $j \leftarrow 0$
    \WHILE{$\mathcal{A}(\epsilon^\star)$ queries the value of some  $A\subseteq \mathbb{S}$}%
    \STATE Server broadcasts action $A$ to the agents, $j \leftarrow j + 1$
    \STATE Server randomly selects a set $\mathbb{A} \subseteq \mathbb{A}'$ of $m$ agents
        \FOR{agent $i$ in $\mathbb{A}'$ in parallel}
            \STATE For $r_i^\star$ times, play action $A$
            \STATE If $i \in \mathbb{A}$, agent tracks \& upload local mean $\bar{f}_i$ %
            % \STATE Broadcast $\bar{f}_i$
        \ENDFOR
        \STATE Server calculates the mean $\bar{f}$ and feeds it to $\mathcal{A}(\epsilon^\star)$
    \ENDWHILE
    \STATE %
    \STATE \# Multi-Agent Exploitation Time 
    \STATE Server broadcasts $\Theta$, the final output of $\mathcal{A}(\epsilon^\star)$
    \FOR{agent $i$ in $\mathbb{A}'$ in parallel}
        \FOR{\emph{remaining time of the agent $i$}}
            \STATE Play action $\Theta$
        \ENDFOR
    \ENDFOR
\end{algorithmic}
\end{algorithm}

We present our proposed C-MA-MAB Framework; see \cref{alg:cmamab}. This framework is applicable for both single-agent and multi-agent settings, wherein the design of our algorithm ensures that the single-agent setting is simply a special case of the multi-agent scenario. For a set $\mathbb{S}$ of possible actions, a set of $m' \geq 1$ agents $\mathbb{A}'$, a number $m \geq 1$ of communicating agents, and a time horizon $T$, our algorithm can adapt any off-the-shelf, offline single-agent combinatorial $(\alpha, \beta, \gamma, \psi, \delta)$-resilient-approximation algorithm $\mathcal{A}(\epsilon)$ to the C-MA-MAB setting. This adaptation comes with theoretical guarantees, as detailed in \cref{thm:main}, applicable under any reward type or action constraint.

While the offline algorithm may be a function of some variable $\epsilon \geq 0$, which trades off its complexity and approximation guarantees, our proposed algorithm finds $\epsilon^\star$ which optimizes this trade-off based on the problem parameters and its complexity to minimize regret and uses $\mathcal{A}(\epsilon^\star)$ for exploration instead. Furthermore, in the exploration time, whenever the offline algorithm $\mathcal{A}(\epsilon^\star)$ requests the value oracle for action $A$, each agent of the $m'\geq 1$ agents plays the action $A$ for $r_i^\star =  r^\star/m$ times, where $r^\star$ is another parameter chosen based on the setting and the subroutine complexity to minimize regret, with
\begin{equation}
\label{r^*}
    r^{\star}=m\left\lceil m^{-1}\left(\delta \sqrt{\log(T)} \left(\frac{Tm}{\psi}\right)^\frac{1}{\beta+1} \right)^{\frac{2+2\beta}{3+\beta}}\right\rceil.
\end{equation}
Furthermore, we choose $\epsilon^\star$ as follows:
\begin{equation}
\label{epsilon_star}
    \epsilon^\star = (\frac{\psi r^\star}{T m})^{\frac{1}{\beta+1}} \mathbf{1}_{\{\beta>0 \text{ OR } \gamma>0\}}.
\end{equation}

Only the randomly selected $m$ agents track and broadcast their local estimations to the server, i.e., each agent $i \in \mathbb{A}$ sends its local estimation $\bar{f}_i$, then the server aggregates these estimations in one global estimation $\bar{f}$ of rewards for $A$ and then returns $\bar{f}$ to $\mathcal{A}(\epsilon^\star)$. Finally, in the exploitation phase, all the agents in $\mathbb{A}'$ play $\Theta$, the output from algorithm $\mathcal{A}(\epsilon^\star)$, for the remaining time. 

\begin{remark}
C-MA-MAB has low storage complexity. In every step, an agent needs to store, at most, $n$ indices and a real value representing the empirical mean of one action. Only the action $\Theta$ is stored during exploitation time, and no additional computation is required. During exploration time, each agent needs to store only the proposed action $A$ and update its associated empirical mean; everything is deleted once the server proposes another action. Furthermore, the proposed framework does not require the combinatorial offline algorithm $\mathcal{A}(\epsilon)$ to have any particular structure and employs $\mathcal{A}(\epsilon^\star)$ as a black-box algorithm. Consequently, it shares the same complexity as the subroutine $\mathcal{A}(\epsilon^\star)$. Moreover, $\mathcal{A}(\epsilon^\star)$ is executed on the server, alleviating the computational overhead for the agents.
\end{remark}

\begin{remark}
In the multi-agent setting, we assume that the server can either be a separate entity or one of the agents playing the role of the server. In the single-agent setting, without loss of generality, the sole agent can be considered as the server. The server orchestrates communication by selecting clients and recommending which actions to explore, based on the offline subroutine, in a synchronized manner. Recent studies have explored federated linear and kernelized contextual bandits with asynchronous communication \cite{li2022asynchronous, he2022simple, li2023learning}. Future research might investigate general combinatorial optimization with asynchronous communication.
\end{remark}

\begin{remark}
The proposed C-MA-MAB uses the time horizon $T$ to compute $r^\star$ and $\epsilon^\star$. When the exact time horizon is unknown, the results can be enhanced by employing the concept of an anytime algorithm through the use of the geometric doubling trick by establishing a geometric sequence of time intervals, denoted as $T_i$, where $T_i=T_0 2^i$ for $i \in \mathbb{N}$, where $T_0$ is a sufficiently large value to ensure proper initialization. From Theorem 4 in \cite{besson2018doubling}, it follows that the regret bound preserves the $T^{\frac{2+\beta}{3+\beta}}$ dependence with only changes in constant factors. 
\end{remark}

\section{Theoretical Analysis}

We upper-bound the required communication rounds, we lower-bound the probability of the empirical mean being $\xi$-controlled-estimation of the expectation (\textit{good event}), and upper-bound the expected cumulative $\alpha$-regret.

\subsection{Communication Analysis}

By the design of the algorithm, the $m$ randomly selected agents, after locally estimating the quality of a suggested action $A$, communicate only one value, representing the local estimation $\bar{f}_i$. The agents exploit the decided set $\Theta$ during the exploitation phase without further communication. During exploration, the selected agents must communicate their local estimation for every requested action $A$. Therefore, the number of communication rounds is upper-bounded by the number of requested actions, i.e., the required oracle calls $N(\beta, \gamma, \psi, \epsilon^\star),$ which we upper-bound in the following lemma, which we prove in Appendix \ref{sec:appd:proof:communication}.

\label{sec:appd:proof:queries}
\begin{lemma} \label{lem:queries}
The number of communication times, i.e., the number of oracle queries $N(\beta, \gamma, \psi, \epsilon^\star)$ of the subroutine $(\alpha, \beta, \gamma, \psi, \delta)$-resilient-approximation algorithm $\mathcal{A}(\epsilon^\star)$ satisfies:
$N(\beta, \gamma, \psi, \epsilon^\star) %
 \leq \mathcal{O}(\psi T^\frac{\beta}{\beta+1}\log^{\gamma}(T)).$
\end{lemma}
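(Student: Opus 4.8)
The plan is to unwind Definition~\ref{def:resilient}, which states that the subroutine $\mathcal{A}(\epsilon^\star)$ issues $N(\beta,\gamma,\psi,\epsilon^\star)=\psi$ oracle queries when $\epsilon^\star=0$ and $N(\beta,\gamma,\psi,\epsilon^\star)=\psi\,(\epsilon^\star)^{-\beta}\log^\gamma(1/\epsilon^\star)$ otherwise, and then substitute the closed forms \eqref{r^*}--\eqref{epsilon_star}. Since $\epsilon^\star$ carries the indicator $\mathbf{1}_{\{\beta>0\text{ OR }\gamma>0\}}$, the natural split is on whether $\beta=\gamma=0$ or not.

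If $\beta=\gamma=0$ then $\epsilon^\star=0$, so $N(\beta,\gamma,\psi,\epsilon^\star)=\psi$, which is exactly the claimed bound $\mathcal{O}(\psi T^{\beta/(\beta+1)}\log^\gamma T)=\mathcal{O}(\psi)$ in that case (the degenerate situation $\delta=0$, which forces $r^\star=0$ and hence $\epsilon^\star=0$, falls under the same reasoning). Otherwise $\epsilon^\star=\left(\tfrac{\psi r^\star}{Tm}\right)^{1/(\beta+1)}>0$. The only fact I need about $r^\star$ is the crude bound $r^\star\ge m$, which is immediate from \eqref{r^*} since $r^\star=m\lceil\,\cdot\,\rceil$ and the ceiling is at least $1$ (its argument being a product of positive quantities). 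Hence $\tfrac{\psi r^\star}{Tm}\ge\tfrac{\psi}{T}$, so $1/\epsilon^\star\le (T/\psi)^{1/(\beta+1)}$. Because $x\mapsto\psi x^\beta\log^\gamma x$ is nondecreasing for $x\ge1$, substituting this upper bound for $1/\epsilon^\star$ yields
\[
N(\beta,\gamma,\psi,\epsilon^\star)\;\le\;\psi\Big(\tfrac{T}{\psi}\Big)^{\beta/(\beta+1)}\log^\gamma\!\Big(\big(\tfrac{T}{\psi}\big)^{1/(\beta+1)}\Big)\;=\;\psi^{1/(\beta+1)}\,T^{\beta/(\beta+1)}\,(\beta+1)^{-\gamma}\log^\gamma(T/\psi).
\]
Finally, using $\psi\ge1$ (it counts oracle calls, so $\psi^{1/(\beta+1)}\le\psi$ and $\log(T/\psi)\le\log T$) and $(\beta+1)^{-\gamma}\le1$, the right-hand side collapses to $\psi\,T^{\beta/(\beta+1)}\log^\gamma T$, giving the lemma.

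The computation is routine; the one step needing care is the monotonicity argument, which is valid only when $1/\epsilon^\star\ge1$, i.e.\ $\psi r^\star\le Tm$, equivalently $\epsilon^\star\le1$. This is exactly the regime in which the formula of Definition~\ref{def:resilient} is meaningful (for $\epsilon^\star>1$ the factor $\log^\gamma(1/\epsilon^\star)$ is nonpositive), so nothing separate is required there, but I would state the $\epsilon^\star\le1$ hypothesis (equivalently, "$T$ large enough") explicitly. A secondary bookkeeping point is to confirm that $r^\star/m=r_i^\star$ is the integer ceiling from \cref{alg:cmamab}, so that $r^\star\ge m$ holds with no slack, and that $\tfrac{\psi r_i^\star}{T}=\tfrac{\psi r^\star}{Tm}$ so that the algorithm's initialization of $\epsilon^\star$ agrees with \eqref{epsilon_star}.
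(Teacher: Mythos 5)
Your proof is correct and follows essentially the same route as the paper's: case-split on whether $\epsilon^\star=0$, substitute the closed form of $\epsilon^\star$ into $N=\psi(\epsilon^\star)^{-\beta}\log^\gamma(1/\epsilon^\star)$, and use $r^\star\ge m$ (i.e.\ $r_i^\star\ge1$) together with $\psi\ge1$ to absorb the remaining factors into $\psi\,T^{\beta/(\beta+1)}\log^\gamma(T)$. The paper splits into three cases where you merge two of them, and it leaves implicit the $\epsilon^\star\le1$ monotonicity caveat and the $\psi r_i^\star\ge1$ step that you state explicitly, but the substance is identical.
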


From \cref{lem:queries} it follows that for cases where $\beta=0$, which applies to several offline algorithms \cite{nemhauser1978analysis, khuller1999budgeted, sviridenko2004note, buchbinder2015tight, mirzasoleiman2015lazier, yaroslavtsev2020bring}, the communication rounds are at most $\widetilde{\mathcal{O}}(\psi)$, scaling at most logarithmically with $T$. For example, as shown in \cref{cor:RandomUSM:robust}, with $n$ arms and using RandomizedUSM \cite{buchbinder2015tight} as a subroutine (where $\psi$ is $n$, $\beta$ is zero, and $\gamma$ is zero), our framework guarantees a communication complexity of $\mathcal{O}(n)$, not scaling with $T$.

By design of the C-MA-MAB algorithm, after every action queried by the subroutine, the agents have to explore and estimate the values of the proposed action. To do that, each agent has to play the proposed action for $r_i^\star$ times. Therefore, using the result from \cref{lem:queries} on the number of required communication rounds and by the definition of $r_i^\star$, we can derive that the required exploration steps for every agent, i.e., $\mathcal{O}(r_i^\star \psi T^\frac{\beta}{\beta+1}\log^{\gamma}(T))$, which is decreasing with an increasing number of agents $m$.

\subsection{Estimation Analysis}

In C-MA-MAB, every agent plays each action queried by the subroutine $\mathcal{A}(\epsilon^\star)$ the same number of times. These repetitions provide an estimation of the action values. We define a \textit{good event} $\mathcal{E}$ when the empirical mean estimation $\bar{f}$ is a $\xi$-controlled-estimation of the reward expectation $f$, on the played actions during exploration time, with $\xi := \sqrt{\log(T)/r^\star}$. For every communication round $j$, each action $A_j$ queried by the $(\alpha, \beta, \gamma, \psi, \delta)$-resilient-approximation $\mathcal{A}(\epsilon^\star)$, where $j \in \{1, \cdots, N(\beta, \gamma, \psi, \epsilon^\star)\}$, we define the event $\mathcal{E}_{j}$ as:
\begin{align}
    \mathcal{E}_{j} \triangleq \{\big|\bar{f}(A_j)-f(A_j) \big|\leq \xi\}. \label{eq:iteration_event}
\end{align}

Therefore, the \textit{good event} $\mathcal{E}$, which considers the empirical mean estimation $\bar{f}$ is a $\xi$-controlled estimate of the reward expectation $f$ for every communication round $j$, i.e., considers the realization of $\mathcal{E}_{j}$ for every $j \in \{1, \cdots, N(\beta, \gamma, \psi, \epsilon^\star)\}$, which is expressed as follows:
\begin{align}
    \mathcal{E} = \mathcal{E}_{1} \cap \dots \cap \mathcal{E}_{N(\beta, \gamma, \psi, \epsilon^\star)}. \label{eq:clean_event}
\end{align}

Each action $A$ queried by the offline algorithm have been explored for $r^\star$ number of times among the agents. These $r^\star$ rewards are i.i.d. with expectation $f(A)$ and confined within the $[0,1]$ range. Consequently, we can bound the deviation of the empirical mean $\bar{f}(A_j)$ from the expected value $f(A_j)$ for every action undertaken.
Thus, we upper bound the probability of the \textit{good event} in the following lemma, which we prove in \cref{sec:appd:proof:clean-event}.
\begin{lemma} \label{lem:probcleanevents}
The probability of the \textit{good event} $\mathcal{E}$, \eqref{eq:clean_event}, when using an $(\alpha, \beta, \gamma, \psi, \delta)$-resilient-approximation algorithm $\mathcal{A}(\epsilon^\star)$ as a subroutine satisfies:
\begin{align}
    \mathbb{P}(\mathcal{E}) 
    & \geq 1 - 2N(\beta, \gamma, \psi, \epsilon^\star)T^{-2} . \nonumber
\end{align}
\end{lemma}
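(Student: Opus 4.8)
The plan is to bound the probability of the complement of the good event via a union bound over the $N(\beta, \gamma, \psi, \epsilon^\star)$ queried actions, reducing the claim to a single-action concentration statement. Concretely, by \eqref{eq:clean_event} and the union bound,
\begin{align}
\mathbb{P}(\mathcal{E}^c) = \mathbb{P}\Big(\bigcup_{j=1}^{N(\beta, \gamma, \psi, \epsilon^\star)} \mathcal{E}_j^c\Big) \leq \sum_{j=1}^{N(\beta, \gamma, \psi, \epsilon^\star)} \mathbb{P}(\mathcal{E}_j^c), \nonumber
\end{align}
so it suffices to show $\mathbb{P}(\mathcal{E}_j^c) = \mathbb{P}(|\bar f(A_j) - f(A_j)| > \xi) \leq 2T^{-2}$ for each fixed $j$, with $\xi = \sqrt{\log(T)/r^\star}$.

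First I would fix a communication round $j$ and note, as the paragraph preceding the lemma observes, that the aggregated empirical mean $\bar f(A_j)$ is an average of $r^\star$ i.i.d. samples of the stochastic reward $f_t(A_j)$ (the $m$ selected agents each contribute $r_i^\star = r^\star/m$ plays, and conditioned on the action the rewards are i.i.d. with mean $f(A_j)$), each confined to $[0,1]$. Then Hoeffding's inequality gives
\begin{align}
\mathbb{P}\big(|\bar f(A_j) - f(A_j)| > \xi\big) \leq 2\exp(-2 r^\star \xi^2) = 2\exp(-2\log(T)) = 2T^{-2}. \nonumber
\end{align}
Substituting this into the union bound yields $\mathbb{P}(\mathcal{E}^c) \leq 2 N(\beta, \gamma, \psi, \epsilon^\star) T^{-2}$, and hence $\mathbb{P}(\mathcal{E}) \geq 1 - 2 N(\beta, \gamma, \psi, \epsilon^\star) T^{-2}$, which is the claim.

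There is one subtlety that I expect to be the main thing to get right rather than a genuine obstacle: the actions $A_j$ are chosen \emph{adaptively} by $\mathcal{A}(\epsilon^\star)$ based on the previously returned empirical means, so $A_j$ is a random set rather than a fixed one, and the number of rounds $N$ may itself be random if the subroutine is randomized. This is handled by conditioning: for any fixed realization of the sequence of queried actions (equivalently, conditioning on the filtration generated by $\mathcal{A}(\epsilon^\star)$'s internal randomness and all estimates up to round $j-1$), the fresh $r^\star$ samples used to form $\bar f(A_j)$ are i.i.d. with the correct conditional mean, so the Hoeffding bound $\mathbb{P}(\mathcal{E}_j^c \mid \mathcal{F}_{j-1}) \leq 2T^{-2}$ holds pointwise; taking expectations and summing (using $N \leq N(\beta,\gamma,\psi,\epsilon^\star)$ deterministically, or a tower-property argument over the random stopping time) recovers the stated bound. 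I would also remark that the partial-participation randomness — the server's random choice of the set $\mathbb{A}$ of $m$ agents each round — is immaterial here, since all $m'$ agents play $A$ and only which $m$ of them report is randomized; the reported average is still over $r^\star$ i.i.d. rewards with mean $f(A_j)$.
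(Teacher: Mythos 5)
Your proof is correct and follows essentially the same route as the paper's: Hoeffding's inequality applied to the $r^\star$ i.i.d. rewards per queried action with $\xi=\sqrt{\log(T)/r^\star}$ gives $2T^{-2}$ per round, and a union bound over the $N(\beta,\gamma,\psi,\epsilon^\star)$ rounds yields the claim. Your added remark on conditioning to handle the adaptive choice of $A_j$ is a point of care the paper's proof glosses over, but it does not change the argument.
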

Combining both results of \cref{lem:queries} and \cref{lem:probcleanevents} it follows that the \textit{bad event} happens with a probability of at most $\Tilde{\mathcal{O}}(\psi T^{\frac{\beta}{\beta+1}-2})$, decreasing as $T$ increases.

\subsection{Regret Analysis}
We analyze the expected cumulative $\alpha$-regret for the C-MA-MAB (\cref{alg:cmamab}), with $m$ communicating agents. %

\begin{theorem} \label{thm:main}
For the sequential combinatorial decision-making problem defined in Section \ref{prob_state}, with $T\geq \max\{\psi m,\psi m^{\frac{1+\beta}{2}}/\delta^{\beta+1}\}$, the expected cumulative $\alpha$-regret of the C-MA-MAB presented in Algorithm \ref{alg:cmamab} using an $(\alpha, \beta, \gamma, \psi, \delta)$-resilient-approximation algorithm $\mathcal{A}(\epsilon^\star)$ as subroutine is at most $\Tilde{\mathcal{O}}\left(m^{-\frac{1}{3+\beta}} \delta^\frac{2}{3+\beta}\psi^\frac{1}{3+\beta}  T^\frac{2+\beta}{3+\beta}\right)$.
\end{theorem}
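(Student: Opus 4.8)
The plan is to decompose the expected cumulative $\alpha$-regret of a fixed agent $i$ into a contribution from the exploration phase and a contribution from the exploitation phase, and then to bound each piece, optimizing over the free parameters $r^\star$ and $\epsilon^\star$ at the end to recover the stated rate. During exploration, each communication round contributes a per-agent cost of at most $r_i^\star = r^\star/m$ time steps at a per-step $\alpha$-regret of at most $\alpha f_t(S^\star) \le 1$; combining this with the communication bound of \cref{lem:queries}, the total exploration regret of agent $i$ is $\mathcal{O}(r_i^\star \cdot N(\beta,\gamma,\psi,\epsilon^\star)) = \mathcal{O}\!\left(\tfrac{r^\star}{m}\,\psi\,(\epsilon^\star)^{-\beta}\log^\gamma(1/\epsilon^\star)\right)$, which using the closed form of $\epsilon^\star$ and \cref{lem:queries} is $\tilde{\mathcal{O}}\!\left(\tfrac{r^\star}{m}\,\psi\,T^{\beta/(\beta+1)}\right)$. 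The harder part is the exploitation phase: there the agent repeatedly plays $\Theta$, the output of $\mathcal{A}(\epsilon^\star)$ run on the aggregated empirical mean $\bar f$, so I must control $\alpha f(S^\star) - f(\Theta)$.

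For the exploitation term I would condition on the good event $\mathcal{E}$ of \cref{lem:probcleanevents}. On $\mathcal{E}$, the aggregated empirical mean $\bar f$ is a $\xi$-controlled estimation of $f$ with $\xi = \sqrt{\log(T)/r^\star}$ on every queried action, so the resilience property of Definition~\ref{def:resilient} applies directly: $\mathbb{E}[f(\Theta)\mid \mathcal{E}] \ge (\alpha-\epsilon^\star) f(S^\star) - \delta\xi$ (with a small amount of care needed because resilience is stated for a deterministic $\xi$-controlled estimator while here $\bar f$ is random — I would handle this by noting that conditioned on $\mathcal{E}$ the estimator is a valid $\xi$-controlled estimate pathwise, and taking expectation over the internal randomness of $\mathcal{A}$). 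Hence the per-step exploitation $\alpha$-regret is at most $\alpha f(S^\star) - \mathbb{E}[f(\Theta)\mid\mathcal{E}] \le \epsilon^\star f(S^\star) + \delta\xi \le \epsilon^\star + \delta\xi$, since $f(S^\star)\le 1$. Multiplying by at most $T$ exploitation steps gives an exploitation-regret contribution of $\mathcal{O}\!\left(T(\epsilon^\star + \delta\sqrt{\log(T)/r^\star})\right)$ on $\mathcal{E}$. On the complementary bad event $\mathcal{E}^c$, I bound the per-step $\alpha$-regret crudely by $\alpha \le 1$, contributing at most $T\cdot\mathbb{P}(\mathcal{E}^c)$, which by \cref{lem:probcleanevents} and \cref{lem:queries} is $\tilde{\mathcal{O}}(\psi T^{\beta/(\beta+1)-1})$ and is therefore lower order.

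Collecting the pieces, the expected cumulative $\alpha$-regret of agent $i$ is, up to logarithmic factors,
\[
\tilde{\mathcal{O}}\!\left(\frac{r^\star}{m}\,\psi\,T^{\frac{\beta}{\beta+1}} \;+\; T\epsilon^\star \;+\; \frac{\delta T}{\sqrt{r^\star}}\right).
\]
With the choice $\epsilon^\star = (\psi r^\star/(Tm))^{1/(\beta+1)}$, the first two terms are of the same order (this is exactly why $\epsilon^\star$ is chosen this way — it balances exploration cost against the $\epsilon$-approximation loss), reducing the bound to $\tilde{\mathcal{O}}\!\left(\tfrac{r^\star}{m}\psi T^{\beta/(\beta+1)} + \delta T / \sqrt{r^\star}\right)$. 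I then optimize over $r^\star$: setting the two remaining terms equal yields $r^\star \asymp m\big(\delta\sqrt{\log T}\,(Tm/\psi)^{1/(\beta+1)}\big)^{(2+2\beta)/(3+\beta)}$, which is precisely the value in \eqref{r^*}, and substituting back produces the claimed $\tilde{\mathcal{O}}\!\left(m^{-\frac{1}{3+\beta}}\delta^{\frac{2}{3+\beta}}\psi^{\frac{1}{3+\beta}}T^{\frac{2+\beta}{3+\beta}}\right)$. The hypothesis $T \ge \max\{\psi m, \psi m^{(1+\beta)/2}/\delta^{\beta+1}\}$ is what I expect to need to ensure $r^\star \ge 1$ (so the ceiling in \eqref{r^*} does not blow up the bound) and $\epsilon^\star \le \alpha$ (so the approximation factor stays meaningful); verifying these side conditions, and carefully tracking that the ceiling operation in $r_i^\star$ only costs constant factors given $T$ large, is the main bookkeeping obstacle, while the conceptual core is the clean split into exploration cost, exploitation-on-good-event via resilience, and a negligible bad-event term.
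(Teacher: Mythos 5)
Your proposal is correct and follows essentially the same route as the paper: the same exploration/exploitation decomposition conditioned on the good event, the same use of resilience to bound the per-step exploitation regret by $\epsilon^\star + \delta\xi$, the same balancing of $T\epsilon^\star$ against the exploration cost via the choice of $\epsilon^\star$ and then of the two remaining terms via $r^\star$, and the same law-of-total-expectation treatment of the bad event. The only presentational difference is that the paper works through the three cases $\beta=\gamma=0$, $\beta=0,\gamma=1$, and $\beta>0$ explicitly (and uses the hypothesis on $T$ to guarantee $z\ge m$ so the ceiling costs only a constant factor), whereas you absorb these into $\tilde{\mathcal{O}}$ bookkeeping; the substance is identical.
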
 

The above theorem implies that an offline algorithm does not need to have an $\alpha$-approximation guarantee to be adapted to achieve sublinear $\alpha$-regret guarantees. In fact, an $(\alpha-\epsilon)$-approximation algorithm, denoted as $\mathcal{A}(\epsilon)$ with $\epsilon > 0$, if it is a $(\alpha, \beta, \gamma, \psi, \delta)$-resilient-approximation, can be extended to a sub-linear $\alpha$-regret algorithm. Later, in Section \ref{sec:appl-CETC:submod}, we apply the above theorem to special combinatorial cases.

\begin{remark}
 Linear speedup is evident in our approach, as the collective regret across $m$ agents is  $\Tilde{\mathcal{O}}\left(\delta^\frac{2}{3+\beta}\psi^\frac{1}{3+\beta}  (Tm)^\frac{2+\beta}{3+\beta}\right)$. This mirrors the regret one would observe if all $m$ agents collaborated, sharing a total of $Tm$ time for the central agent. Consequently, the distributed setup incurs no loss, with each agent interacting with the environment $T$ times, and the combined regret reflects that of a single agent allocated $Tm$ time for interaction. 
\end{remark}

\begin{remark}
The $\delta$ function depends on the offline algorithm and refers to a general function of the combinatorial problem parameters, such as the number of main arms, $n$, or the cardinality constraint, $k$. This function serves as the scaling factor for the radius $\xi$ in the lower bound on the expected reward, as given by $(\mathbb{E}[f(\Theta)] \geq (\alpha - \epsilon) f(S^{\star}) - \delta \xi)$, as defined in \cref{def:resilient} for $(\alpha, \beta, \gamma, \psi, \delta)$-resilient-approximation algorithms. In some offline algorithms, we have demonstrated that this scaling function depends on the cardinality constraint $k$. For example, Lemma E.4 shows that $\delta$ equals $4k$ for \textsc{RandomSampling}. In other cases, the function may depend solely on the number of main arms $n$. For instance, Lemma E.1 establishes that $\delta$ is $\frac{5}{2}n$ for the \textsc{RandomizedUSM}. Our analysis accommodates any $\delta$ function defined in terms of the problem parameters.
\end{remark}

\begin{remark}
When algorithm approximations do not depend on $\epsilon$, it implies that their complexity is of the form $\tilde{\mathcal{O}}(\psi)$, hence $\beta=\gamma=0$. It follows that using such an $(\alpha, 0, 0, \psi, \delta)$-resilient-approximation algorithm as a subroutine achieves a regret of at most $\Tilde{\mathcal{O}}\left(m^{-\frac{1}{3}} \delta^\frac{2}{3}\psi^\frac{1}{3} T^\frac{2}{3}\right)$.
\end{remark}

\begin{remark}
A lower bound remains an open question for general combinatorial stochastic rewards under bandit feedback. A lower bound is missing even for the special cases of stochastic submodular rewards under bandit feedback. Some lower bounds have been proposed in restrictive special settings. For example, \citet{niazadeh2021online} showed a $\tilde{\Omega}(T^\frac{2}{3})$ lower bound for adversarial submodular rewards, where the reward could only be observed in user-specified exploration rounds. Moreover, \citet{tajdini2023minimax} demonstrated that for monotone stochastic submodular bandits with a cardinality constraint, for small time horizon $T$, a regret scaling like $T^\frac{2}{3}$ is inevitable when compared to the greedy algorithm in \cite{nemhauser1978analysis}. However, it does not provide a lower bound on $(1-1/e)$-regret. 
\end{remark}

In the following, we provide a sketch of the proof and leave a detailed one in Appendix \ref{appendix:proof:regret}. We separate the proof into two cases. One case when the \textit{good event} $\mathcal{E}$  
happens, which we show in \cref{lem:probcleanevents} happens with high probability and then we generalize the result under any event. 

\subsubsection{regret of an Agent under the Good Event}

We upper-bound the expected $\alpha$-regret conditioned on the \textit{good event} $\mathcal{E}$. However, for simplicity in notation, we employ $\mathbb{E}[\cdot]$ rather than $\mathbb{E}[\cdot|\mathcal{E}]$ in certain instances. We decompose and then bound the expected $\alpha$-regret into two components: one addressing the regret stemming from exploration ($P_1$) and the other from exploitation ($P_2$). 
\begin{align}
    \mathbb{E}[\mathcal{R}_i(T)|\mathcal{E}] 
    & = \sum_{t=1}^T \left(\alpha f( \mathrm{S^\star} ) - \mathbb{E}[f(S_{i,t})]\right)  \\
    & = \underbrace{\sum_{j=1}^{N(\beta, \gamma, \psi, \epsilon^\star)} r_i^\star \left(\alpha f(\mathrm{S^\star})-\mathbb{E}[f(A_j)] \right)}_{\text{$\alpha$-regret from exploration ($P_1$)}} \nonumber \\
    &+ \underbrace{\sum_{t=T_{N(\beta, \gamma, \psi, \epsilon^\star)}+1}^T \left(\alpha f(\mathrm{S^\star})-\mathbb{E}[f(\Theta)] \right)}_{\text{$\alpha$-regret from exploitation ($P_2$)}} \label{eq:prf:main:case1:60} 
\end{align}

We begin with bounding regret from exploration and using that the rewards are within the interval $[0,1]$,
\begin{align}
    P_1 &\leq  \sum_{j=1}^{N(\beta, \gamma, \psi, \epsilon^\star)} \frac{r^\star}{m} \alpha \leq N(\beta, \gamma, \psi, \epsilon^\star)\frac{r^\star}{m}. \label{eq:main:cum-regr-explor}
\end{align}
When the \textit{good event} $\mathcal{E}$ occurs, we know that $|\bar{f}(A)-f(A)|\leq \xi$ for all considered action $A$. Using an $(\alpha, \beta, \gamma, \psi, \delta)$-resilient-approximation $\mathcal{A}(\epsilon^\star)$, with output $\Theta$, we have
\begin{align}
\label{eq:final_exp_reward}
\alpha f(S^\star)-\mathbb{E}[f(\Theta)]\leq \delta \xi + \epsilon^\star f(S^\star). 
\end{align}
Therefore, with $f(\mathrm{S^\star})<1$, we have:
\begin{align}
    P_2 
     & \leq \sum_{t=T_{N(\beta, \gamma, \psi, \epsilon^\star)}+1}^T (\delta \xi + \epsilon^\star)  \leq T(\delta \xi+\epsilon^\star). \label{eq:main:cum-regr-exploit}
\end{align}
Therefore, using Eq. \eqref{eq:main:cum-regr-explor} and Eq. \eqref{eq:main:cum-regr-exploit}, the total expected cumulative regret in Eq. \eqref{eq:prf:main:case1:60} can be bounded as:
\begin{align}
    \mathbb{E}[\mathcal{R}_i(T)|\mathcal{E}] 
    &\leq N(\beta, \gamma, \psi, \epsilon^\star)\frac{r^\star}{m} + T(\delta \xi+\epsilon^\star).
\end{align}
Using the confidence radius $\xi = \sqrt{\log(T)/r^\star}$ and $N(\beta, \gamma, \psi, \epsilon^\star)=\psi \frac{1}{{\epsilon^\star}^\beta} \log^{\gamma}(\frac{1}{\epsilon^\star})$, we have 
\begin{align}
    \mathbb{E}[\mathcal{R}_i(T)|\mathcal{E}] 
    % & \leq  N(\beta, \gamma, \psi, \epsilon^\star)\frac{r^\star}{m} + \epsilon^\star T+ T\delta \sqrt{\log(T)/r^\star} \nonumber  \\
    & \leq  \frac{\psi r^\star \log^{\gamma}(\frac{1}{\epsilon^\star}) }{{\epsilon^\star}^\beta m}   +  T( \sqrt{\frac{\delta^2\log(T)}{r^\star}} + \epsilon^\star). \nonumber %\label{eq:main:upperbound_regret}
\end{align}
We note that the above inequality is correct for all values of $r^\star \geq m$ and $\epsilon^\star \geq 0$ with the convention that $0^0=1$. In our algorithm, we choose the values of $r^\star$ and $\epsilon^\star$ as functions of the problem parameters and on the subroutine complexity parameters. We choose $r^\star$ as defined in Eq. \eqref{r^*} and $\epsilon^\star$ as defined in Eq. \eqref{epsilon_star}. 

Recall that $\beta \geq 0$ and $\gamma \in \{0,1\}$. Therefore, we consider all the possible cases, the first when $\beta = \gamma = 0$, the second when $\beta =0$ and $\gamma = 1$, and the third when $\beta > 0$. For all the cases, when the \textit{good event} $\mathcal{E}$ happens, the expected $\alpha$-regret of our C-MA-MAB with an $(\alpha, \beta, \gamma, \psi, \delta)$-resilient-approximation as subroutine we have
\begin{align}
\mathbb{E}[\mathcal{R}_i(T)|\mathcal{E}] &\leq \Tilde{\mathcal{O}}\left(\delta^\frac{2}{3+\beta}\psi^\frac{1}{3+\beta} m^{-\frac{1}{3+\beta}} T^\frac{2+\beta}{3+\beta} \right). \label{main:final_upperbound_clean}
\end{align}

\subsubsection{Regret of an Agent under Any Event}

Given that the reward $f_t(\cdot)$ is upper bound by 1, the expected cumulative $\alpha$-regret when the event $\bar{\mathcal{E}}$ happens over a range $T$ is upper-bounded as follows:
$
    \mathbb{E}[\mathcal{R}_i(T)|\bar{\mathcal{E}}] \leq T. 
$
Combining the results when \textit{good event} happens and does not happen, using the law of total expectation, and using Eq. \eqref{main:final_upperbound_clean}, \cref{lem:queries}, \cref{lem:probcleanevents}, and $T \geq m\psi$:
\begin{align}
    \mathbb{E}[\mathcal{R}_i(T)] &= \mathbb{E}[\mathcal{R}_i(T)|\mathcal{E}] \cdot \mathbb{P}(\mathcal{E}) +\mathbb{E}[\mathcal{R}_i(T)|\bar{\mathcal{E}}]\cdot \mathbb{P}(\bar{\mathcal{E}}) \nonumber \\
    &\leq \Tilde{\mathcal{O}}\left(\delta^\frac{2}{3+\beta}\psi^\frac{1}{3+\beta} m^{-\frac{1}{3+\beta}} T^\frac{2+\beta}{3+\beta} \right) \nonumber   .
\end{align}
This establishes the result in \cref{thm:main}.

\section{Application to Submodular Maximization} \label{sec:appl-CETC:submod}

We use our C-MA-MAB framework to address scenarios involving stochastic submodular\footnote{A function $f: 2^{\Omega} \rightarrow \mathbb{R}$, over a finite set $\Omega$, is considered submodular if it discloses the characteristic of diminishing returns: for any $A \subseteq B \subset \Omega$ and $v \in \Omega \backslash B$, the inequality $f(A \cup \{v\}) - f(A) \geq f(B \cup \{v\}) - f(B)$ is verified.} rewards, with bandit feedback. Submodular maximization (SM) is an NP-hard problem \cite{nemhauser1978analysis, feige2011maximizing}, which recently has shown growing interest in studying combinatorial MAB \citep{chen2018contextual, niazadeh2021online, nie2022explore, fourati2023randomized, fourati2024combinatorial}. In the following, we present our results for SM for monotone\footnote{A function $f: 2^{\Omega} \rightarrow \mathbb{R}$, over a finite set $\Omega$, is considered monotone if for any $A \subseteq B \subseteq \Omega$ we have $f(A) \leq f(B)$.} and non-monotone rewards with and without a cardinality constraint and leave the knapsack constraint in Appendix \ref{knapsack_appendix}.

For unconstrained SM (USM), \citet{buchbinder2015tight} proposed \textsc{RandomizedUSM}, achieving a $\frac{1}{2}$-approximation. In \cref{lem:RandomUSM:robust} in Appendix \ref{appendix:application}, we generalize Corollary 2 in \cite{fourati2023randomized} to show its resilience and present the following corollary, which recovers the guarantees of the online algorithm in \cite{fourati2023randomized} for a single agent and generalizes it for multi-agent setting.

\begin{corollary}\label{cor:RandomUSM:robust}
    C-MA-MAB, using the \textsc{RandomizedUSM} as a subroutine, needs at most $\mathcal{O}(n)$ communication times and its $\frac{1}{2}$-regret is at most $\tilde{\mathcal{O}}\left(m^{-\frac{1}{3}}nT^\frac{2}{3}\right)$ for USM.
\end{corollary}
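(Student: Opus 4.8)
The plan is to derive the corollary by instantiating \cref{thm:main} and \cref{lem:queries} with the parameters of \textsc{RandomizedUSM}; the only genuinely new step is to certify that \textsc{RandomizedUSM} is an $(\alpha,\beta,\gamma,\psi,\delta)$-resilient-approximation with $\alpha=1/2$, $\beta=0$, $\gamma=0$, $\psi=\mathcal{O}(n)$, and $\delta=\tfrac52 n$, which is the content of \cref{lem:RandomUSM:robust}. Granting this, the corollary is pure substitution: \cref{lem:queries} bounds the number of communication rounds by $\mathcal{O}(\psi T^{\beta/(\beta+1)}\log^\gamma T)=\mathcal{O}(n)$, the $T$-dependence disappearing because $\beta=\gamma=0$ forces $T^{\beta/(\beta+1)}=1$ and $\log^\gamma T=1$; and \cref{thm:main} bounds the $\tfrac12$-regret by $\tilde{\mathcal{O}}(m^{-1/(3+\beta)}\delta^{2/(3+\beta)}\psi^{1/(3+\beta)}T^{(2+\beta)/(3+\beta)})$, which for $\beta=0$ is $\tilde{\mathcal{O}}(m^{-1/3}\delta^{2/3}\psi^{1/3}T^{2/3})$; finally $\delta^{2/3}\psi^{1/3}=\Theta\!\big((\tfrac52 n)^{2/3}n^{1/3}\big)=\mathcal{O}(n)$, giving the claimed $\tilde{\mathcal{O}}(m^{-1/3}nT^{2/3})$.

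For the resilience lemma itself I would recall the double-greedy structure of \textsc{RandomizedUSM} \citep{buchbinder2015tight}: fix an order $u_1,\dots,u_n$ of $\Omega$, set $X_0=\emptyset$ and $Y_0=\Omega$, and at step $i$ compute the marginal gains $a_i$ of inserting $u_i$ into $X_{i-1}$ and $b_i$ of deleting $u_i$ from $Y_{i-1}$, then randomly keep $u_i$ with probability $[a_i]_+/([a_i]_++[b_i]_+)$. This makes exactly $\mathcal{O}(n)$ value-oracle calls, so $\psi=\mathcal{O}(n)$ and $\beta=\gamma=0$ are immediate. When the algorithm is fed a $\xi$-controlled estimate $\bar f$ in place of $f$ (as happens inside C-MA-MAB), each $a_i$ and $b_i$ is evaluated with additive error at most $2\xi$, so every randomized step is executed with slightly perturbed mixing probabilities; following \citet{fourati2023randomized} I would generalize their Corollary 2 to bound the resulting degradation.

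The main obstacle is precisely this error-propagation bookkeeping. I would reproduce the potential-function argument of \citet{buchbinder2015tight}: define the interpolating optimum $\mathrm{OPT}_i=(S^\star\cup X_i)\cap Y_i$ and establish, in the noiseless case, the per-step inequality $\mathbb{E}[f(X_i)+f(Y_i)-f(X_{i-1})-f(Y_{i-1})]\ge \mathbb{E}[f(\mathrm{OPT}_{i-1})-f(\mathrm{OPT}_i)]$, then show that using $\bar f$ both in the gains and in the probabilities costs at most an additive $\mathcal{O}(\xi)$ per step. Summing the $n$ perturbed inequalities and telescoping, the accumulated slack is $\mathcal{O}(n\xi)$; carrying the constants through (a factor $\tfrac52$ emerging from bounding both the gain errors and the probability perturbation) yields $\mathbb{E}[f(\Theta)]\ge\tfrac12 f(S^\star)-\tfrac52 n\,\xi$, i.e.\ $(\tfrac12,0,0,\mathcal{O}(n),\tfrac52 n)$-resilience. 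The only remaining check is that the hypothesis $T\ge\max\{\psi m,\psi m^{(1+\beta)/2}/\delta^{\beta+1}\}$ of \cref{thm:main} specializes here to the mild requirement $T=\Omega(nm)$, so the corollary needs no additional assumption on $T$.
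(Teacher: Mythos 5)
Your proposal is correct and follows essentially the same route as the paper: certify that \textsc{RandomizedUSM} is a $(\tfrac12,0,0,\mathcal{O}(n),\tfrac52 n)$-resilient approximation (the paper's \cref{lem:RandomUSM:robust}, which it obtains by directly citing Corollary~2 of \citet{fourati2023randomized} rather than re-deriving the double-greedy error-propagation argument you sketch), and then substitute $\beta=\gamma=0$, $\psi=\Theta(n)$, $\delta=\tfrac52 n$ into \cref{lem:queries} and \cref{thm:main}. Your substitutions, including the observation that $\delta^{2/3}\psi^{1/3}=\mathcal{O}(n)$ and that the hypothesis on $T$ reduces to $T=\Omega(nm)$, match the paper's.
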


For SM under a cardinality constraint $k$ (SMC) with monotone rewards, the \textsc{Greedy} in \citep{nemhauser1978best} achieves $1-1/e$. In contrast, the \textsc{Stochastic-Greedy} in \citep{mirzasoleiman2015lazier} achieves $1-1/e-\epsilon$, where $\epsilon$ is a parameter balancing accuracy and complexity. We provide the resilience of these two algorithms in Appendix \ref{appendix:application} and present the following results.

\begin{corollary}
\label{cor:greedy}
    C-MA-MAB, using the \textsc{Greedy} as a subroutine, needs $\mathcal{O}(nk)$ communication times and its $(1-1/e)$-regret is at most $\tilde{\mathcal{O}}\left(m^{-\frac{1}{3}} k n^\frac{1}{3} T^\frac{2}{3}\right)$ for monotone SMC.
\end{corollary}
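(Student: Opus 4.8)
\textbf{Proof proposal for Corollary~\ref{cor:greedy}.}

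The plan is to instantiate \cref{thm:main} and \cref{lem:queries} with the parameters of the \textsc{Greedy} algorithm of \citet{nemhauser1978best} for monotone submodular maximization under a cardinality constraint $k$. First I would recall that \textsc{Greedy} builds the output set by performing $k$ rounds, and in each round it evaluates the marginal gain of adding each of the (at most) $n$ remaining base arms, so it makes $N = \mathcal{O}(nk)$ oracle calls and this count does not depend on any accuracy parameter $\epsilon$. Hence in the language of \cref{def:resilient} we have $\psi = nk$, $\beta = 0$, and $\gamma = 0$, with approximation ratio $\alpha = 1 - 1/e$ and no $\epsilon$ term. The communication-round bound then follows directly from \cref{lem:queries}: with $\beta=\gamma=0$ it gives $N(\beta,\gamma,\psi,\epsilon^\star) \leq \mathcal{O}(\psi) = \mathcal{O}(nk)$, which is (up to constants) exactly $\mathcal{O}(nk)$ as claimed (indeed not even scaling with $T$).

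The substantive step is establishing that \textsc{Greedy} is an $(\alpha,\beta,\gamma,\psi,\delta)$-resilient approximation with the right $\delta$ — i.e., that when \textsc{Greedy} is run on a $\xi$-controlled estimate $\bar f$ of a monotone submodular $f$, its output $\Theta$ satisfies $f(\Theta) \geq (1-1/e) f(S^\star) - \delta\xi$ for $\delta = \mathcal{O}(k)$ (this is the content of the relevant lemma in \cref{appendix:application}, which I would cite). The argument is the standard greedy analysis carried through with additive slack: at each of the $k$ greedy steps, the element chosen according to $\bar f$ has true marginal gain within $2\xi$ of the best true marginal gain (one $\xi$ for the chosen element's estimate, one for the estimate of the true-optimal element); propagating this through the usual $(1-1/e)$ telescoping recursion accumulates at most $\mathcal{O}(k\xi)$ additive error, and the monotonicity together with $f(S^\star)\in[0,1]$ keeps the multiplicative part intact. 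So $\delta = \mathcal{O}(k)$.

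Finally I would plug $\psi = nk$, $\beta = 0$, $\gamma = 0$, $\delta = \mathcal{O}(k)$ into the regret bound of \cref{thm:main}, namely $\tilde{\mathcal{O}}\!\left(m^{-\frac{1}{3+\beta}} \delta^{\frac{2}{3+\beta}} \psi^{\frac{1}{3+\beta}} T^{\frac{2+\beta}{3+\beta}}\right)$, which for $\beta=0$ becomes $\tilde{\mathcal{O}}\!\left(m^{-1/3}\, \delta^{2/3}\, \psi^{1/3}\, T^{2/3}\right) = \tilde{\mathcal{O}}\!\left(m^{-1/3}\, k^{2/3}\,(nk)^{1/3}\, T^{2/3}\right) = \tilde{\mathcal{O}}\!\left(m^{-1/3}\, k\, n^{1/3}\, T^{2/3}\right)$, matching the stated bound. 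I would also check that the hypothesis $T \geq \max\{\psi m, \psi m^{(1+\beta)/2}/\delta^{\beta+1}\}$ of \cref{thm:main} is either implied by the regime of interest or can be absorbed, since for $\beta=0$ it reads $T \gtrsim nkm$, a mild large-horizon condition. The only real obstacle is the resilience lemma itself — verifying that the greedy recursion tolerates the controlled perturbation with only an $\mathcal{O}(k\xi)$ loss — but this is routine once the additive-error bookkeeping is set up, and it has been isolated into \cref{appendix:application}.
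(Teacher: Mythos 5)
Your proposal matches the paper's proof: the paper likewise establishes that \textsc{Greedy} is a $(1-\frac{1}{e},0,0,nk,2k)$-resilient approximation (Lemma~\ref{lem:cardinality:greedy:robust}, which simply cites Corollary~4.3 of \citet{nie2022explore} for the $2k\xi$ additive loss you re-derive by hand) and then plugs $\psi=nk$, $\beta=\gamma=0$, $\delta=2k$ into \cref{thm:main} and \cref{lem:queries} exactly as you do. The only cosmetic difference is that your per-step slack bookkeeping is informal (the paper's constant is $\delta=2k$ rather than a generic $\mathcal{O}(k)$), which does not affect the stated $\tilde{\mathcal{O}}$ bound.
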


\begin{corollary}
\label{cor:stochasticgreedy}
    C-MA-MAB, using the \textsc{Stochastic-Greedy}, needs $\tilde{\mathcal{O}}(n)$ communication times and its $(1-\frac{1}{e})$-regret is at most $\tilde{\mathcal{O}}\left(m^{-\frac{1}{3}} k^\frac{2}{3} n^\frac{1}{3} T^\frac{2}{3}\right)$ for monotone SMC.
\end{corollary}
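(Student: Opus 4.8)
The plan is to apply \cref{thm:main} with the specific parameters of \textsc{Stochastic-Greedy} and then simplify. First I would establish, via the resilience lemma proved in Appendix \ref{appendix:application}, that \textsc{Stochastic-Greedy} from \citet{mirzasoleiman2015lazier} is an $(\alpha,\beta,\gamma,\psi,\delta)$-resilient approximation with $\alpha = 1 - \tfrac{1}{e}$, $\beta = 0$, $\gamma = 1$ (its oracle complexity is $\mathcal{O}(n\log(\tfrac{1}{\epsilon}))$, so of the form $\psi\log^{\gamma}(\tfrac{1}{\epsilon})$ with $\beta=0$), $\psi = \mathcal{O}(n)$, and some $\delta = \mathcal{O}(k)$ arising from the error propagation through the greedy iterations under a $\xi$-controlled estimator and the random subsampling step. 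The key point is that since the stochastic greedy draws a random subset of size $\tfrac{n}{k}\log(\tfrac{1}{\epsilon})$ at each of the $k$ iterations, the additive estimation errors accumulate over $k$ rounds, giving the linear-in-$k$ scaling of $\delta$; establishing this bound cleanly is the main obstacle, and it should follow the same template as the \textsc{Greedy} resilience argument but with the extra $(1-\tfrac{1}{e}-\epsilon)$ loss handled by the $\epsilon^\star$ mechanism of \cref{thm:main}.

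Next I would substitute $\beta = 0$, $\gamma = 1$ into \cref{thm:main}. Since $\beta = 0$, the exponents collapse: $m^{-\frac{1}{3+\beta}} = m^{-1/3}$, $\delta^{\frac{2}{3+\beta}} = \delta^{2/3}$, $\psi^{\frac{1}{3+\beta}} = \psi^{1/3}$, and $T^{\frac{2+\beta}{3+\beta}} = T^{2/3}$, so the regret bound becomes $\tilde{\mathcal{O}}(m^{-1/3}\delta^{2/3}\psi^{1/3}T^{2/3})$. Plugging $\delta = \mathcal{O}(k)$ and $\psi = \mathcal{O}(n)$ yields $\tilde{\mathcal{O}}(m^{-1/3} k^{2/3} n^{1/3} T^{2/3})$, which is exactly the claimed $(1-\tfrac{1}{e})$-regret; note the $\gamma=1$ contributes only a $\log^{\gamma}(\tfrac{1}{\epsilon^\star}) = \tilde{\mathcal{O}}(1)$ polylogarithmic factor absorbed into $\tilde{\mathcal{O}}$. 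The condition $T \geq \max\{\psi m, \psi m^{(1+\beta)/2}/\delta^{\beta+1}\}$ reduces, for $\beta=0$, to $T \geq \max\{nm, nm^{1/2}/k\}$, which I would note is satisfied in the regime of interest (or simply state it as an implicit assumption inherited from \cref{thm:main}).

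For the communication count I would invoke \cref{lem:queries}: the number of communication rounds is at most $\mathcal{O}(\psi T^{\beta/(\beta+1)}\log^{\gamma}(T))$, which with $\beta = 0$, $\gamma = 1$, $\psi = \mathcal{O}(n)$ becomes $\mathcal{O}(n\log T) = \tilde{\mathcal{O}}(n)$, matching the corollary's claim of $\tilde{\mathcal{O}}(n)$ communication times. So the full proof is essentially: (i) cite the resilience lemma for \textsc{Stochastic-Greedy}, (ii) apply \cref{thm:main} and \cref{lem:queries} with the instantiated parameters, (iii) arithmetic simplification of the exponents. The only genuinely technical ingredient is step (i) — verifying the resilience parameters, and in particular pinning down $\delta = \Theta(k)$ by tracking how the $\xi$-controlled estimation error compounds across the $k$ greedy steps together with the randomized sampling — which is relegated to the appendix; everything in the main-text corollary proof is then a one-line substitution.
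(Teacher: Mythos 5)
Your proposal matches the paper's proof: it establishes (via Lemma~\ref{lem:lazier_than_lazy}, which generalizes Corollary~1 of \citet{fourati2024combinatorial}) that \textsc{Stochastic-Greedy} is a $(1-\tfrac{1}{e},0,1,n,2k)$-resilient approximation, then instantiates \cref{thm:main} and \cref{lem:queries} with $\beta=0$, $\gamma=1$, $\psi=n$, $\delta=2k$ to obtain the regret bound $\tilde{\mathcal{O}}\left(m^{-\frac{1}{3}}k^{\frac{2}{3}}n^{\frac{1}{3}}T^{\frac{2}{3}}\right)$ and the $\tilde{\mathcal{O}}(n)$ communication count, exactly as you describe. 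Your identification of the resilience parameters, including the $\delta=\Theta(k)$ error accumulation across the $k$ greedy iterations, is consistent with the paper's appendix argument.
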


The $\textsc{Stochastic-Greedy}$ algorithm has sub-optimal approximation guarantees of $(1-1/e-\epsilon)$; thus, using the C-ETC framework from \cite{nie2023framework} can only guarantee sub-linear $(1-1/e-\epsilon)$-regret. Consequently, $(1-1/e)$-regret will be linear in $T$. However, our C-MA-MAB guarantees sublinear $(1-1/e)$-regret, recovering the single agent's results in \cite{fourati2024combinatorial}. Furthermore, the two corollaries above demonstrate that employing a sub-optimal approximation algorithm in terms of the approximation factor, rather than one that achieves optimal approximation guarantees, does not necessarily imply lower regret guarantees, as shown in \cite{fourati2024combinatorial}. 

For non-monotone SMC, the \textsc{RandomSampling} algorithm in \citep{buchbinder2017comparing} achieves $1/e-\epsilon$, where $\epsilon$ is a parameter balancing accuracy and complexity. We provide the resilience of this algorithm in Appendix \ref{appendix:application} and derive the first result for single-agent and multi-agent online stochastic non-monotone SMC with sublinear regrets.

\begin{corollary}
\label{cor:randomsampling}
    C-MA-MAB, using the \textsc{Random Sampling} in \citep{buchbinder2017comparing} as a subroutine, needs $\tilde{\mathcal{O}}(n T^\frac{2}{3})$ communication times and its $(\frac{1}{e})$-regret is at most $\tilde{\mathcal{O}}\left(m^{-\frac{1}{5}} k^\frac{2}{5} n^\frac{1}{5}  T^\frac{4}{5}\right)$ for non-monotone SMC.
\end{corollary}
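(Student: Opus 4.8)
The plan is to instantiate \cref{thm:main} with the specific resilience parameters of the \textsc{RandomSampling} algorithm of \citet{buchbinder2017comparing} and then simplify. First I would locate (in Appendix \ref{appendix:application}, the lemma referenced as ``Lemma E.4'') the statement that \textsc{RandomSampling} is an $(\alpha,\beta,\gamma,\psi,\delta)$-resilient approximation with $\alpha = 1/e$, and with complexity $\mathcal{O}\!\left(\frac{n}{\epsilon^2}\log\frac{1}{\epsilon}\right)$, i.e. $\psi = \mathcal{O}(n)$, $\beta = 2$, $\gamma = 1$; the text further tells us $\delta = 4k$. So the tuple is $\big(\tfrac1e,\,2,\,1,\,\mathcal{O}(n),\,4k\big)$-resilient. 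The hypothesis $T \geq \max\{\psi m, \psi m^{(1+\beta)/2}/\delta^{\beta+1}\}$ of \cref{thm:main} is met for $T$ large enough, so I would note this in passing (or absorb it into the $\tilde{\mathcal O}$).

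Next I would plug these values into the regret bound $\tilde{\mathcal{O}}\!\left(m^{-\frac{1}{3+\beta}}\delta^{\frac{2}{3+\beta}}\psi^{\frac{1}{3+\beta}}T^{\frac{2+\beta}{3+\beta}}\right)$. With $\beta = 2$ the exponents become $3+\beta = 5$, so $m^{-1/5}$, $\delta^{2/5} = (4k)^{2/5} = \tilde{\mathcal O}(k^{2/5})$, $\psi^{1/5} = \tilde{\mathcal O}(n^{1/5})$, and $T^{(2+\beta)/(3+\beta)} = T^{4/5}$. Multiplying gives $\tilde{\mathcal O}\!\left(m^{-1/5}k^{2/5}n^{1/5}T^{4/5}\right)$, exactly the claimed $\tfrac1e$-regret bound. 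For the communication count I would invoke \cref{lem:queries}: the number of communication rounds is at most $\mathcal{O}\!\left(\psi T^{\beta/(\beta+1)}\log^{\gamma}(T)\right) = \mathcal{O}\!\left(n T^{2/3}\log T\right) = \tilde{\mathcal O}(n T^{2/3})$, matching the stated count.

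There is essentially no hard analytic step here — the corollary is a direct specialization — so the only real obstacle is establishing the input to that specialization, namely that \textsc{RandomSampling} genuinely satisfies the resilience definition with the claimed parameters. That verification is the content of the appendix lemma and amounts to tracking how the $\xi$-controlled estimation errors propagate through the (randomized) \textsc{RandomSampling} iterations to yield $\mathbb{E}[f(\Theta)] \geq (\tfrac1e - \epsilon)f(S^\star) - 4k\xi$, together with a count of its oracle calls; I would either cite \citet{buchbinder2017comparing} for the clean $(1/e-\epsilon)$ guarantee and its $\mathcal{O}(n\epsilon^{-2}\log\tfrac1\epsilon)$ query complexity, then bound the error accumulation by noting each of the $\mathcal{O}(k)$ ``rounds'' loses at most $\mathcal{O}(k)$ additive $\xi$-terms. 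Given that lemma, the corollary follows in two lines.
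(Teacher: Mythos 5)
Your derivation is correct and is exactly the paper's route: instantiate \cref{thm:main} and \cref{lem:queries} with the tuple $(\tfrac1e,2,1,n,4k)$ supplied by \cref{lem:random_sampling}, so $3+\beta=5$ gives $\tilde{\mathcal O}(m^{-1/5}k^{2/5}n^{1/5}T^{4/5})$ and $\beta/(\beta+1)=2/3$ gives $\tilde{\mathcal O}(nT^{2/3})$ communications. One caution on your aside about how the resilience lemma itself would be proved: counting ``$\mathcal O(k)$ rounds each losing $\mathcal O(k)$ additive $\xi$-terms'' would yield $\delta=\mathcal O(k^2)$ and hence a $k^{4/5}$ regret; the paper gets $\delta=4k$ because the $2k\xi$ loss from summing marginals over $OPT$ is divided by $k$ in the per-iteration inequality, so each of the $k$ iterations loses only $4\xi$.
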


\begin{figure}[t]
    \centering
    \includegraphics[width=0.39\textwidth]{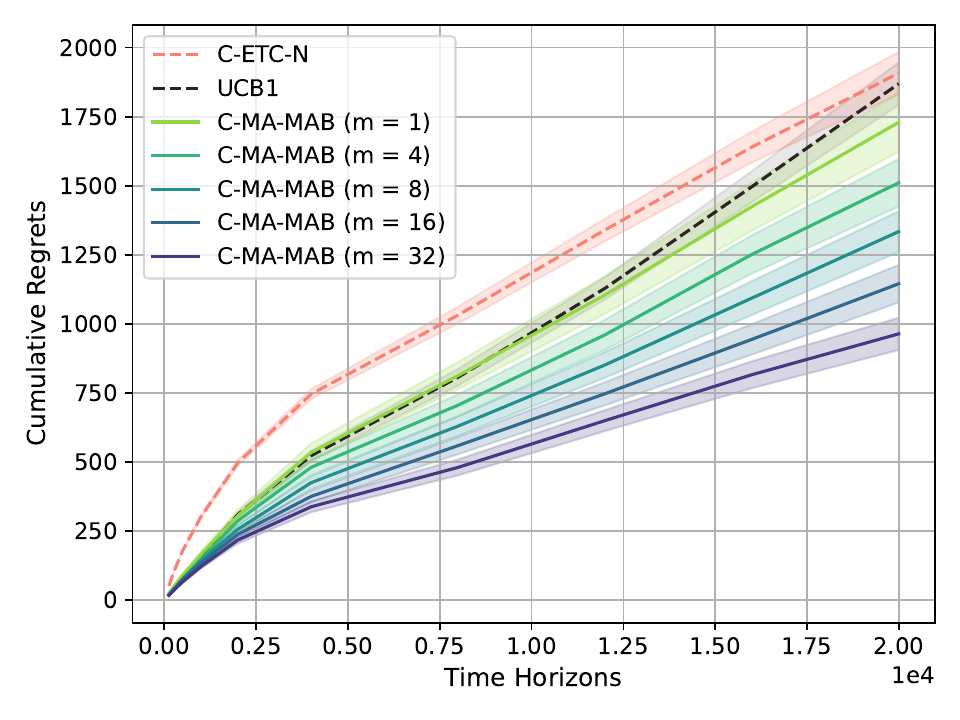}
    \caption{\small Cumulative regrets of summarizing images from CIFAR10 for different horizons $T$ using our C-MA-MAB framework with different number of agents $m$, against C-ETC-N and UCB1.}
    \label{fig:regrets_cifar10}
\end{figure}

\section{Experiments with Data Summarization}
\label{data_summ_section}

We employ our C-MA-MAB on data summarization, a primary challenge in machine learning \citep{mirzasoleiman2013distributed}, mainly when dealing with a large dataset. While this problem has been widely studied with access to a deterministic oracle \cite{lin2011class, mirzasoleiman2013distributed, mirzasoleiman2015lazier, mirzasoleiman2020coresets, sivasubramanian2024gradient}, this work is the first to address online data summarization under a stochastic objective function. We run experiments on FMNIST \cite{xiao2017fashion} and CIFAR10 \cite{krizhevsky2009learning}, present the latter in the main paper, and relegate more details and results to Appendix \ref{additional_experiements}. 

In data summarization, an action $A$ consists of a set of at most $k$ images to summarize a large dataset $\mathcal{D}$.
Adding more images achieves better summarization but follows a diminishing return property. Thus, it falls in the monotone SMC \cite{mirzasoleiman2015lazier}. Evaluating a given action $A$ against a dataset $\mathcal{D}$ may become expensive with a large dataset. Thus, we consider a stochastic objective where the chosen subset is compared only to a random subset $\mathcal{R} \subseteq \mathcal{D}$ drawn uniformly at random from $\mathcal{D}$, using a similarity metric $C$, resulting in noisier but lower complexity evaluations. We do not solve the problem for a given realization $\mathcal{R}$, but we solve it in expectation:
$
\underset{\mathcal{A} \subseteq \mathcal{D}:|\mathcal{A}| \leq k}{\arg \max } \mathbb{E}_{\mathcal{R}} \left[ \sum_{i \in \mathcal{R}} \max _{v \in \mathcal{A}} C(i,v)   \right]. 
$

We test our method when using the $\textsc{Stochastic-Greedy}$ algorithm as a subroutine \cite{mirzasoleiman2015lazier} for one agent and multiple agents and compare it to the proposed algorithm in C-ETC framework for SMC (C-ETC-N) \cite{nie2023framework}, and the upper confidence bound (UCB1) algorithm \cite{auer2002finite}. 

The C-MA-MAB demonstrates sub-linear regret guarantees as depicted in Fig. \ref{fig:regrets_cifar10}. Additionally, it is apparent that, for varying values of $m$, the C-MA-MAB consistently outperforms both C-ETC-N and UCB1, even with a single agent, exhibiting lower regrets over diverse time horizons. Notably, an increase in the number of agents correlates with a reduction in regret for these agents. These observations reinforce the same conclusions drawn from the theoretical analysis.

\section*{Conclusion}
We introduce C-MA-MAB, a framework for single-agent and multi-agent online stochastic combinatorial problems, which adapts resilient offline $(\alpha-\epsilon)$-approximation algorithms to online algorithms under bandit feedback, achieving sublinear $\alpha$-regret bounds with respect to the time horizon $T$, eliminating the $\epsilon$ error, and ensuring a linear speedup. We also present specialized bounds for SM with and without constraints and apply C-MA-MAB to online stochastic data summarization.

\section*{Impact Statement}
This paper presents work whose goal is to advance the field of Machine Learning. There are many potential societal consequences of our work, none which we feel must be specifically highlighted here.

\bibliographystyle{icml2024}
\bibliography{main}

\onecolumn
\appendix

\section{Notation}

In the following, for a given agent $i$ when we discuss any feasible action denoted as $A$, we use $f_t(A)$ to represent the realization of the stochastic reward at time $t$ when taking that action. We denote the expectation of the reward of playing that action as $f(A)$. We also introduce $\bar{f}_t(A)$, which is the empirical mean of rewards received from playing action $A$ up to and including time $t$. 

We omit the subscript $t$ when we write $\bar{f}(A)$, assuming it is clear that action $A$ has been played $r^\star$ times. We use $A_j$, with $j$ ranging from 1 to the total number of the approximation algorithm queries $N(\beta, \gamma, \psi, \epsilon^\star)$, to refer to the j-th action the algorithm queries. Additionally, we define $T_j$, where $j$ also varies from 1 to the total number of the offline algorithm queries $N(\beta, \gamma, \psi, \epsilon^\star)$, as the time step when $A_j$ has been played $r^\star$ times.

\section{Communication Rounds Analysis}
\label{sec:appd:proof:communication}

In this subsection we prove Lemma \ref{lem:queries} which upperbounds the number of oracle queries $N(\beta, \gamma, \psi, \epsilon^\star)$ of the offline algorithm $\mathcal{A}(\epsilon^\star)$ as follows:
\begin{align}
    N(\beta, \gamma, \psi, \epsilon^\star) %
    & \leq \mathcal{O}(\psi T^\frac{\beta}{\beta+1} \log^{\gamma}(T)) \nonumber
\end{align}
We provide examples of the resulting number of oracle queries for different offline algorithms or class transformations in \cref{tab:communications}, as shown in \cref{lem:queries}.
\begin{proof}
Using an $(\alpha, \beta, \gamma, \psi, \delta)$-robust approximation as subroutine, we have after $N(\beta, \gamma, \psi, \epsilon) = \psi \frac{1}{\epsilon^\beta} \log^{\gamma}(\frac{1}{\epsilon})$ oracle calls, for $\epsilon \geq 0$, $\beta \geq 0$, and $\gamma \in \{0,1\}$.%, 

Recall that $\epsilon^\star$ is as follows:
\begin{equation}
    \epsilon^\star = (\frac{\psi r^\star}{T m})^{\frac{1}{\beta+1}} \mathbf{1}_{\{\beta>0 \text{ OR } \gamma>0\}}.
\end{equation}

% Therefore, 
% \begin{align}
%      N(\beta, \gamma, \psi, \epsilon^\star)
%     &=2 \psi (\frac{1}{\epsilon^\star})^\beta \log^{\gamma}(\frac{1}{\epsilon^\star})
% \end{align}

Recall that $\beta \geq 0$ and $\gamma \in \{0,1\}$. Therefore, we consider all the possible cases (three), the first when $\beta = \gamma = 0$, the second when $\beta =0$ and $\gamma > 0$, and the third when $\beta > 0$.

\textbf{Case 1: $\beta = \gamma = 0$.}
Using Eq.\eqref{epsilon_star}, $\epsilon^\star$ becomes the following
\begin{equation}
    \epsilon^\star = 0.
\end{equation}

\begin{align}
     N(\beta, \gamma, \psi, \epsilon^\star)
    &=2 \psi \nonumber \\
    &= \mathcal{O}(\psi T^\frac{\beta}{\beta+1} \log^{\gamma}(T))
\end{align}

\textbf{Case 2: $\beta = 0$ and $\gamma \neq 0$.} 
We have $\gamma \neq 0$ and $\gamma \in \{0,1\}$, thus $\gamma = 1$.

Using Eq.\eqref{epsilon_star}, $\epsilon^\star$ becomes the following
\begin{equation}
    \epsilon^\star = \frac{\psi r^\star}{T m} .
\end{equation}

\begin{align}
     N(\beta, \gamma, \psi, \epsilon^\star)
    &=2 \psi (\frac{1}{\epsilon^\star})^\beta \log^{\gamma}(\frac{1}{\epsilon^\star}) \nonumber\\
    &=2 \psi \log(\frac{1}{\epsilon^\star}) \nonumber\\
    &=2 \psi \log(\frac{T m}{\psi r^\star}) \nonumber\\
    &=2 \psi \log(\frac{T}{\psi r_i^\star}) \nonumber\\
    &\leq 2 \psi \log(T) \nonumber\\
    &= \mathcal{O}(\psi \log(T)) \nonumber \\
    &= \mathcal{O}(\psi T^\frac{\beta}{\beta+1} \log^{\gamma}(T))
\end{align}

\textbf{Case 3: $\beta > 0$.}
Using Eq.\eqref{epsilon_star}, $\epsilon^\star$ becomes the following
\begin{equation}
    \epsilon^\star = (\frac{\psi r^\star}{T m})^{\frac{1}{\beta+1}} .
\end{equation}
Therefore,
\begin{align}
     N(\beta, \gamma, \psi, \epsilon^\star)
    &=2 \psi (\frac{1}{\epsilon^\star})^\beta \log^{\gamma}(\frac{1}{\epsilon^\star}) \nonumber\\
    &=2 \psi (\frac{1}{\beta + 1})^\gamma (\frac{T m}{\psi r^\star})^\frac{\beta}{\beta+1} \log^{\gamma}(\frac{T m}{\psi r^\star}) \nonumber\\
    &=2 \psi (\frac{1}{\beta + 1})^\gamma (\frac{T}{\psi r_i^\star})^\frac{\beta}{\beta+1} \log^{\gamma}(\frac{T}{\psi r_i^\star}) \nonumber\\
    &\leq 2 \psi (\frac{1}{\beta + 1})^\gamma (T)^\frac{\beta}{\beta+1} \log^{\gamma}(T) \nonumber\\
    &= \mathcal{O}(\psi T^\frac{\beta}{\beta+1} \log^{\gamma}(T))
\end{align}
\end{proof}

\begin{table*}[t]
\small
\centering
\begin{tabular}{|c|c|c|}
\hline 
%  \multicolumn{3}{|c|}{Offline Characteristics} & \multicolumn{3}{|c|}{Online Characteristics} \\
% \hline
\begin{tabular}[c]{@{}c@{}} Offline Algorithm  \end{tabular}   &  \begin{tabular}[c]{@{}c@{}} Offline \\ Complexity \end{tabular}   &   \begin{tabular}[c]{@{}c@{}} C-MA-MAB Resulted \\ Communication Complexity \end{tabular}                                                         \\ \hline 
\begin{tabular}[c]{@{}c@{}} \textsc{RandomizedUSM} \\ \cite{buchbinder2015tight} \end{tabular} & $\mathcal{O}\left(n\right)$  & $\mathcal{O}\left(n\right)$                                  
                                              \\ \hline
\begin{tabular}[c]{@{}c@{}} \textsc{Greedy} \\ \cite{nemhauser1978analysis}   \end{tabular} & $\mathcal{O}\left(nk\right)$ & $\mathcal{O}\left(nk\right)$
\\ \hline
\begin{tabular}[c]{@{}c@{}} \textbf{\textsc{General}} \\ \small{Excluding the next rows}\end{tabular}  & $\mathcal{O}(\psi)$  & $\mathcal{O}(\psi)$ 
\\ \hline

\begin{tabular}[c]{@{}c@{}} \textsc{Stochastic-Greedy} \\ \small{\cite{mirzasoleiman2015lazier}}   \end{tabular} & $\mathcal{O}\left(n \log(\frac{1}{\epsilon})\right)$ & $\mathcal{O}(n \log(T))$  
\\ \hline

\begin{tabular}[c]{@{}c@{}} \textbf{\textsc{More General}} \\ \small{Excluding the next rows} \end{tabular} & $\mathcal{O}(\psi \log^{\gamma}(\frac{1}{\epsilon}))$     &       $\mathcal{O}(\psi \log^{\gamma}(T))$          
\\ \hline
\begin{tabular}[c]{@{}c@{}} \textsc{RandomSampling} \\ \cite{buchbinder2017comparing}   \end{tabular} & $\mathcal{O}\left(\frac{n}{\epsilon^2} \log(\frac{1}{\epsilon})\right)$ &  $\mathcal{O}(\psi T^\frac{2}{3} \log(T))$                                                                            
\\ \hline
\begin{tabular}[c]{@{}c@{}} \textbf{\textsc{Most General}} \\ \small{Including the previous rows} 
 \end{tabular} & $\mathcal{O}( \frac{\psi}{\epsilon^\beta} \log^{\gamma}(\frac{1}{\epsilon}))$  & $\mathcal{O}(\psi T^\frac{\beta}{\beta+1} \log^{\gamma}(T))$                      
\\ \hline
\end{tabular}
\caption{\small The table shows the resulted communication complexity with different offline algorithms. We use $\tilde{\mathcal{O}}$ to simplify expressions. Key parameters include horizon $T$, number of communicating agents $m$, base arm count $n$, and cardinality constraint $k$. Each row presents a specific offline algorithm or a class transformation with a given offline complexity. For the general rows, we consider classes of offline algorithms, with an approximation error factor $\epsilon$, with general complexity forms, with general constants, $\psi \geq 0$, $\beta \geq 0$, and $\gamma \in \{0,1\}$.  
% {\color{red} Have the paper references for the offline complexity in the table.}
}
\label{tab:communications}
\end{table*}

\section{Estimation Analysis}
\label{sec:appd:proof:clean-event}
% \subsection{Bounding the Probability of the \textit{bad event} ($\mathbb{P}(\mathcal{\bar{E}}))$}

Hoeffding's inequality \cite{hoeffding1994probability} is a powerful technique for bounding probabilities of bounded random variables. We state the inequality, then we use it to show that $\mathcal{E}$ happens with high probability.

\begin{lemma} [Hoeffding's inequality]
\label{lem:hoeffding}
Let $X_1, \cdots, X_n$ be independent random variables bounded in the interval $[0, 1]$, and let $\bar{X}$ denote their empirical mean. Then we have for any $\xi >0$,
\begin{align}
    \mathbb{P}\left( \big|\bar{X} -  \mathbb{E}[\bar{X}] \big| \geq \xi  \right) \leq 2 \mathrm{exp} \left( - 2 n \xi^2  \right). 
\end{align}
\end{lemma}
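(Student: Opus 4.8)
\textbf{Proof plan for Lemma~\ref{lem:hoeffding} (Hoeffding's inequality).}

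The statement to be proved is the classical Hoeffding concentration bound, so the plan is to reproduce the standard Chernoff-method argument. First I would center the variables: set $Y_k = X_k - \mathbb{E}[X_k]$, so that each $Y_k$ has mean zero and lies in an interval of length $1$ (namely $[-\mathbb{E}[X_k],\, 1-\mathbb{E}[X_k]]$). The quantity of interest is $\mathbb{P}(|\bar X - \mathbb{E}[\bar X]| \geq \xi) = \mathbb{P}\bigl(|\sum_{k=1}^n Y_k| \geq n\xi\bigr)$, and by a union bound over the two tails it suffices to bound $\mathbb{P}(\sum_k Y_k \geq n\xi)$ and the symmetric event, each by $\exp(-2n\xi^2)$.

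Next I would apply the exponential Markov (Chernoff) bound: for any $s > 0$,
\begin{align}
\mathbb{P}\Bigl(\sum_{k=1}^n Y_k \geq n\xi\Bigr) \leq e^{-sn\xi}\, \mathbb{E}\Bigl[e^{s\sum_k Y_k}\Bigr] = e^{-sn\xi}\prod_{k=1}^n \mathbb{E}\bigl[e^{sY_k}\bigr],
\end{align}
using independence to factor the MGF. The crucial ingredient is Hoeffding's lemma: a zero-mean random variable supported in an interval of length $\ell$ satisfies $\mathbb{E}[e^{sY}] \leq e^{s^2 \ell^2/8}$; here $\ell = 1$, so each factor is at most $e^{s^2/8}$. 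This gives $\mathbb{P}(\sum_k Y_k \geq n\xi) \leq e^{-sn\xi + ns^2/8}$.

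Finally I would optimize over $s$: the exponent $-sn\xi + ns^2/8$ is minimized at $s = 4\xi$, yielding $-2n\xi^2$, hence $\mathbb{P}(\sum_k Y_k \geq n\xi) \leq e^{-2n\xi^2}$. The symmetric tail bound follows by applying the same argument to $-Y_k$, and adding the two gives the factor $2$ in the statement. The main obstacle — really the only non-routine step — is establishing Hoeffding's lemma itself (the sub-Gaussian MGF bound for bounded variables); this is proved by writing $Y$ as a convex combination of the interval endpoints via convexity of $e^{sy}$, reducing to a one-variable estimate on $\log\bigl(\mathbb{E}[e^{sY}]\bigr)$ whose second derivative is bounded by $\ell^2/4$, then Taylor-expanding. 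If one prefers, one may simply invoke Hoeffding's lemma as a known result and the proof is immediate.
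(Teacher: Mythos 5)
Your proof is correct: it is the standard Chernoff-method argument (centering, two-sided union bound, MGF factorization by independence, Hoeffding's lemma with interval length $\ell=1$, and optimization at $s=4\xi$ giving the exponent $-2n\xi^2$). The paper does not prove this lemma at all — it simply cites Hoeffding's original work and uses the bound as a black box — so your writeup supplies the canonical proof of exactly the stated inequality, and there is no discrepancy to reconcile.
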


We use the above Lemma to prove Lemma \ref{lem:probcleanevents} which bounds the probability of the \textit{good event} $\mathcal{E}$ as follows:
\begin{align}
    \mathbb{P}(\mathcal{E}) %
    & \geq 1-\frac{2N(\beta, \gamma, \psi, \epsilon^\star)}{T^2}. \nonumber
\end{align}

\begin{proof}
Applying the Hoeffding bound in \cref{lem:hoeffding} to the empirical mean $\bar{f}(A_i)$ of $A_i$ resulted from $r^\star$ independent rewards and with $\xi=\sqrt{\log(T)/r^\star}$ provides 
\begin{align}
    \mathbb{P} (\bar{\mathcal{E}}_i)&=\mathbb{P}\left[\big|\bar{f}(A_i)-f(A_i) \big| \geq \xi \right] \nonumber\\
    &\leq 2 \mathrm{exp} \left( - 2 r^\star \xi^2  \right) \nonumber\\
    &= 2 \mathrm{exp} \left( - 2 r^\star (\log(T)/ r^\star) \right) \nonumber\\
    &= 2 \mathrm{exp} \left( -  2\log(T)  \right) \nonumber\\
    &= \frac{2}{T^2}. \label{eq:probbnd:single}
\end{align}
Then, we can bound the probability of the \textit{good event} as follows:
\begin{align}
    \mathbb{P}(\mathcal{E}) &= \mathbb{P}(\mathcal{E}_1\cap \dots \cap \mathcal{E}_{N(\beta, \gamma, \psi, \epsilon^\star)}) \nonumber\\
    &=1-\mathbb{P}(\bar{\mathcal{E}}_1\cup \dots \cup \bar{\mathcal{E}}_{N(\beta, \gamma, \psi, \epsilon^\star)}) \tag{De Morgan's Law}\\
    &\geq 1-\sum_{i=1}^{N(\beta, \gamma, \psi, \epsilon^\star)} \mathbb{P}(\bar{\mathcal{E}}_{i})\tag{union bounds} \\
    &\geq 1-\frac{2 N(\beta, \gamma, \psi, \epsilon^\star)}{T^2}.  \tag{using \eqref{eq:probbnd:single}}
\end{align}
% Then, using $N(\beta, \gamma, \psi, \epsilon^\star) = \psi (\frac{1}{\epsilon^\star})^\beta \log^{\gamma}(\frac{1}{\epsilon^\star})$ we can bound the probability of \textit{bad event}s 
% \begin{align}
%     \mathbb{P}(\bar{\mathcal{E}}) &= 1 - \mathbb{P}(\mathcal{E}) \\
%     &\leq \frac{2N(\beta, \gamma, \psi, \epsilon^\star)}{T^2}.  
% \end{align}
\end{proof}

\section{regret Analysis}
\label{appendix:proof:regret}

We establish \cref{thm:main} outlined in \cref{sec:robust} of the main paper. This theorem asserts that for the sequential decision-making scenario delineated in Section \ref{prob_state}, the expected cumulative $\alpha$-regret of the C-MA-MAB, employing an $(\alpha, \beta, \gamma, \psi, \delta)$-resilient-approximation algorithm $\mathcal{A}(\epsilon)$ as a component, is bounded by
$\Tilde{\mathcal{O}}\left(\delta^\frac{2}{3+\beta}\psi^\frac{1}{3+\beta} m^{-\frac{1+\beta}{3+\beta}} T^\frac{2+\beta}{3+\beta}\right)$.

We separate the proof into two cases. One case when the \textit{good event} $\mathcal{E}$  
happens, which we show in \cref{lem:probcleanevents} happens with high probability and then we generalize the result under any event.

\subsection{regret of an Agent under the Good Event}

We upper-bound the expected $\alpha$-regret given the occurrence of the \textit{good event} $\mathcal{E}$. All expectations are conditioned on $\mathcal{E}$ throughout this section. However, for the sake of simplicity in notation, we employ $\mathbb{E}[\cdot]$ rather than $\mathbb{E}[\cdot|\mathcal{E}]$ in certain instances.

We break down the anticipated $\alpha$-regret given $\mathcal{E}$ into two parts: one dealing with regret arising from exploration and the other from exploitation. It is crucial to remember that $f_t(A_t)$ denotes the stochastic reward obtained by selecting action $A_t$, a variable dependent on the historical means of actions in previous rounds.
\begin{align}
    \mathbb{E}[\mathcal{R}_i(T)|\mathcal{E}] 
    & = \sum_{t=1}^T (\alpha f( \mathrm{S^\star} ) - \mathbb{E}[f_t(S_{i,t})])  \nonumber\\
    & = \sum_{t=1}^T (\alpha f( \mathrm{S^\star} ) - \mathbb{E}[\mathbb{E}[ f_t(S_{i,t})  |S_{i,t}  ]])  \\
    % & = \alpha T f( \mathrm{S^\star} ) -\sum_{t=1}^T \mathbb{E}[f_t(A_t)] \nonumber\\
    % & = \alpha T f( \mathrm{S^\star} ) -\sum_{t=1}^T \mathbb{E}[   \mathbb{E}[ f_t(A_t)  |A_t  ]] \tag{law of total expectation}\\
    & = \sum_{t=1}^T \left(\alpha f( \mathrm{S^\star} ) - \mathbb{E}[f(S_{i,t})]\right) \tag{$f(\cdot)$ defined as expected reward}\\
    % & = \sum_{t=1}^T \left(\alpha f(\mathrm{S^\star})-\mathbb{E}[f(A_t)]\right) \tag{rearranging} \\
    & = \sum_{j=1}^{N(\beta, \gamma, \psi, \epsilon^\star)} r_i^\star \left(\alpha f(\mathrm{S^\star})-\mathbb{E}[f(A_j)]\right) + \sum_{t=T_{N(\beta, \gamma, \psi, \epsilon^\star)}+1}^T \left(\alpha f(\mathrm{S^\star})-\mathbb{E}[f(A_t)]\right) %
    \nonumber\\
    &=\underbrace{\sum_{j=1}^{N(\beta, \gamma, \psi, \epsilon^\star)} r_i^\star \left(\alpha f(\mathrm{S^\star})-\mathbb{E}[f(A_j)]\right)}_{\text{Agent $i$ Exploration regret}}+\underbrace{\sum_{t=T_{N(\beta, \gamma, \psi, \epsilon^\star)}+1}^T \left(\alpha f(S^\star)-\mathbb{E}[f(\Theta)]\right)}_{\text{Agent $i$ Exploitation regret}}. \label{eq:prf:mainthm:case1:60}
\end{align}

We establish distinct bounds for the regret stemming from exploration and exploitation individually.

\paragraph{Bounding the agent $i$ exploration regret:}  

\begin{align}
    \sum_{j=1}^{N(\beta, \gamma, \psi, \epsilon^\star)} r_i \left(\alpha f(\mathrm{S^\star})-\mathbb{E}[f(A_i)]\right) &\leq  \sum_{j=1}^{N(\beta, \gamma, \psi, \epsilon^\star)} \frac{r^\star}{m} \left(\alpha-0\right) \tag{rewards in $[0,1]$}\\
    &\leq N(\beta, \gamma, \psi, \epsilon^\star)\frac{r^\star}{m}. \label{eq:cum-regr-explor}
\end{align}

\paragraph{Bounding exploitation regret:}

When the \textit{good event} $\mathcal{E}$ occurs, we know that $|\bar{f}(A)-f(A)|\leq \xi$ for all considered action $A$. For C-MA-MAB framework, with exploitation set $\Theta$, with an $(\alpha, \beta, \gamma, \psi, \delta)$-resilient-approximation $\mathcal{A}(\epsilon)$ as a subroutine, we have after $N(\beta, \gamma, \psi, \epsilon) = \psi \frac{1}{\epsilon^\beta} \log^{\gamma}(\frac{1}{\epsilon})$ oracle calls, for $\epsilon \geq 0$, $\beta \geq 0$, and $\gamma \in \{0,1\}$,
\begin{align}
\mathbb{E}[f(\Theta)]\geq (\alpha - \epsilon) f(\mathrm{S^\star}) -\delta \xi.
\end{align}

Therefore, using $\epsilon^\star$, we have
\begin{align}
\label{eq:ap:final_exp_reward}
\alpha f(S^\star)-\mathbb{E}[f(\Theta)]\leq \delta \xi + \epsilon^\star f(S^\star). 
\end{align}

Therefore, we can bound the exploitation regret as follows:
\begin{align}
    \sum_{t=T_{N(\beta, \gamma, \psi, \epsilon^\star)}+1}^T \left(\alpha f(\mathrm{S^\star})-\mathbb{E}[f(S)]\right) %
    & \leq \sum_{t=T_{N(\beta, \gamma, \psi, \epsilon^\star)}+1}^T (\delta \xi + \epsilon^\star  f(\mathrm{S^\star})) \tag{using \eqref{eq:ap:final_exp_reward}} \nonumber \\
     & \leq \sum_{t=T_{N(\beta, \gamma, \psi, \epsilon^\star)}+1}^T (\delta \xi + \epsilon^\star) \tag{rewards are bounded in $[0,1]$} \nonumber \\
    & \leq T(\delta \xi+\epsilon^\star). \label{eq:cum-regr-exploit}
\end{align}
\paragraph{Bounding total regret:}
% Then the expected cumulative regret \eqref{eq:prf:mainthm:case1:60} can be bounded  as
\begin{align}
    \mathbb{E}[\mathcal{R}_i(T)|\mathcal{E}] 
    &= \sum_{i=1}^{N(\beta, \gamma, \psi, \epsilon^\star)} \frac{r^\star}{m} \left(\alpha f(\mathrm{S^\star})-\mathbb{E}[f(A_i)]\right)+\sum_{t=T_{N(\beta, \gamma, \psi, \epsilon^\star)}+1}^T \left(\alpha f(\mathrm{S^\star})-\mathbb{E}[f(S)]\right) \tag{copying \eqref{eq:prf:mainthm:case1:60}} \\
    & \leq N(\beta, \gamma, \psi, \epsilon^\star)\frac{r^\star}{m} + T(\delta \xi+\epsilon^\star) \tag{using \eqref{eq:cum-regr-explor}, \eqref{eq:cum-regr-exploit}}
\end{align}
Using the defined confidence radius $\xi = \sqrt{\log(T)/r^\star}$, we have 

\begin{align}
    \mathbb{E}[\mathcal{R}_i(T)|\mathcal{E}] 
    & \leq  N(\beta, \gamma, \psi, \epsilon^\star)\frac{r^\star}{m} + \epsilon^\star T+ T\delta \sqrt{\log(T)/r^\star} \nonumber  \\
    & \leq \psi \frac{1}{{\epsilon^\star}^\beta} \log^{\gamma}(\frac{1}{\epsilon^\star})\frac{r^\star}{m}  + \epsilon^\star T + T\delta \sqrt{\log(T)/r^\star} \label{eq:upperbound_regret}
\end{align}

We note that the inequality in Eq.\eqref{eq:upperbound_regret} is correct for all values of $r^\star \geq m$ and $\epsilon^\star \geq 0$ with the convention that $0^0=1$.

In our algorithm, we choose the following values of $r^\star$ and $\epsilon^\star$ as functions of the problem parameters such as the range $T$ and the number of available agents $m$ as well as on the subroutine algorithm approximation resilience parameter guarantees, that are $\beta$, $\gamma$, $\psi$, and $\delta$. Specifically, using an $(\alpha, \beta, \gamma, \psi, \delta)$-robust approximation as subroutine, we choose $r^\star$ as follows:
\begin{equation}
    r^{\star}= m \left\lceil m^{-1} \left(\delta \sqrt{\log(T)} \left(\frac{Tm}{\psi}\right)^\frac{1}{\beta+1} \right)^{\frac{2+2\beta}{3+\beta}} \right\rceil,
\end{equation}

For clarity in the following steps we further define $z$ as follows:
\begin{equation}
\label{z}
    z= \left(\delta \sqrt{\log(T)} \left(\frac{Tm}{\psi}\right)^\frac{1}{\beta+1} \right)^{\frac{2+2\beta}{3+\beta}} ,
\end{equation}

Moreover, we choose $\epsilon^\star$ as follows:
\begin{equation}
    \epsilon^\star = (\frac{\psi r^\star}{T m})^{\frac{1}{\beta+1}} \mathbf{1}_{\{\beta>0 \text{ OR } \gamma>0\}}.
\end{equation}

Therefore, from Eq. \eqref{eq:upperbound_regret},
\begin{align}
\label{eq:refined_upperbound_regret}
    \mathbb{E}[\mathcal{R}_i(T)|\mathcal{E}] &\leq \psi (\frac{1}{\epsilon^\star})^\beta \log^{\gamma}(\frac{1}{\epsilon^\star})\frac{r^\star}{m}  + \epsilon^\star T + T\delta \sqrt{\log(T)/r^\star} \\
    &\leq \psi (\frac{1}{\epsilon^\star})^\beta \log^{\gamma}(\frac{1}{\epsilon^\star})\left\lceil \frac{z}{m} \right\rceil  + \epsilon^\star T + T\delta \sqrt{\log(T)/(m \left\lceil \frac{z}{m} \right\rceil)} \\
    &\leq \psi (\frac{1}{\epsilon^\star})^\beta \log^{\gamma}(\frac{1}{\epsilon^\star})\left\lceil \frac{z}{m} \right\rceil  + \epsilon^\star T + T\delta \sqrt{\log(T)/ (m \frac{z}{m} )} \nonumber \tag{Since $\lceil z/m \rceil \geq z / m$} \\
    &\leq 2\psi (\frac{1}{\epsilon^\star})^\beta \log^{\gamma}(\frac{1}{\epsilon^\star})\frac{z}{m}  + \epsilon^\star T + T\delta \sqrt{\log(T)/ z } \nonumber \tag{Since $z \geq m$, $\lceil z/m \rceil \leq 2z$}  
\end{align}
where the last inequality follows when $z\geq m$, which holds for $T\geq \frac{\psi m^{\frac{1+\beta}{2}}}{\delta^{\beta+1}}$.

Therefore, 
\begin{equation}
\label{eq:general_regret_boud}
\mathbb{E}[\mathcal{R}_i(T)|\mathcal{E}] \leq 2\psi (\frac{1}{\epsilon^\star})^\beta \log^{\gamma}(\frac{1}{\epsilon^\star})\frac{z}{m}  + \epsilon^\star T + T\delta \sqrt{\log(T)/ z }  
\end{equation}

Recall that $\beta \geq 0$ and $\gamma \in \{0,1\}$. Therefore, we consider all the possible cases (three), the first when $\beta = \gamma = 0$, the second when $\beta =0$ and $\gamma > 0$, and the third when $\beta > 0$.

\textbf{Case 1: $\beta = \gamma = 0$.}

In this case using Eq. \eqref{z}, $z$ becomes the following
\begin{equation}
    z=\left(\delta \sqrt{\log(T)} \left(\frac{Tm}{\psi}\right) \right)^{\frac{2}{3}}.
\end{equation}

Moreover, using Eq.\eqref{epsilon_star}, $\epsilon^\star$ becomes the following
\begin{equation}
    \epsilon^\star = 0.
\end{equation}

From Eq.\eqref{eq:general_regret_boud} we have
\begin{align}
\mathbb{E}[\mathcal{R}_i(T)|\mathcal{E}] &\leq  \frac{2\psi}{m} \left(\delta \sqrt{\log(T)} \left(\frac{Tm}{\psi}\right) \right)^{\frac{2}{3}} + T\delta \sqrt{\log(T)} \left(\delta \sqrt{\log(T)} \left(\frac{Tm}{\psi}\right) \right)^{-\frac{1}{3}}  \\
&\leq  2\frac{\psi}{\psi^{\frac{2}{3}}} \frac{m^{\frac{2}{3}}}{m} \delta^{\frac{2}{3}} \log(T)^{\frac{1}{3}} T^{\frac{2}{3}} + T^{\frac{2}{3}} \delta^{\frac{2}{3}} \log(T)^{\frac{1}{3}} m^{-\frac{1}{3}} \psi^{\frac{1}{3}} \\
&= \mathcal{O}\left(\delta^\frac{2}{3}\psi^\frac{1}{3} m^{-\frac{1}{3}} T^\frac{2}{3}\log(T)^\frac{1}{3}\right). \\
&= \Tilde{\mathcal{O}}\left(\delta^\frac{2}{3}\psi^\frac{1}{3} m^{-\frac{1}{3}} T^\frac{2}{3}\right)  \\
&= \Tilde{\mathcal{O}}\left(\delta^\frac{2}{3+\beta}\psi^\frac{1}{3+\beta} m^{-\frac{1}{3+\beta}} T^\frac{2+\beta}{3+\beta} \right) \tag{Given that $\beta$ = 0}. \nonumber
\end{align}

\textbf{Case 2: $\beta = 0$ and $\gamma \neq 0$.} 

We have $\gamma \neq 0$ and $\gamma \in \{0,1\}$, thus $\gamma = 1$. In this case using Eq. \eqref{z}, $z$ becomes the following
\begin{equation}
    z=\left(\delta \sqrt{\log(T)} \left(\frac{Tm}{\psi}\right) \right)^{\frac{2}{3}}.
\end{equation}

Moreover, using Eq.\eqref{epsilon_star}, $\epsilon^\star$ becomes the following
\begin{equation}
    \epsilon^\star = \frac{\psi r^\star}{T m} .
\end{equation}

From Eq. \eqref{eq:general_regret_boud} we have
\begin{align}
\mathbb{E}[\mathcal{R}_i(T)|\mathcal{E}] &\leq \frac{2\psi}{m} \log(\frac{T m}{\psi r^\star}) z + \frac{\psi r^\star}{T m}  T + T\delta \sqrt{\log(T)} \left(\delta \sqrt{\log(T)} \left(\frac{Tm}{\psi}\right) \right)^{-\frac{1}{3}}  \\
&\leq \frac{2\psi}{m} \log(\frac{T m}{\psi r^\star}) z + \frac{2\psi}{m} z  + T\delta \sqrt{\log(T)} \left(\delta \sqrt{\log(T)} \left(\frac{Tm}{\psi}\right) \right)^{-\frac{1}{3}}  \\
&\leq \frac{2\psi}{m} \log(T) z + \frac{2\psi}{m} z  + T\delta \sqrt{\log(T)} \left(\delta \sqrt{\log(T)} \left(\frac{Tm}{\psi}\right) \right)^{-\frac{1}{3}}  \\
&\leq \frac{4\psi}{m} \log(T) \left(\delta \sqrt{\log(T)} \left(\frac{Tm}{\psi}\right) \right)^{\frac{2}{3}} + T\delta \sqrt{\log(T)} \left(\delta \sqrt{\log(T)} \left(\frac{Tm}{\psi}\right) \right)^{-\frac{1}{3}}  \\
&= \Tilde{\mathcal{O}}\left(\delta^\frac{2}{3}\psi^\frac{1}{3} m^{-\frac{1}{3}} T^\frac{2}{3}\right)  \\
&= \Tilde{\mathcal{O}}\left(\delta^\frac{2}{3+\beta}\psi^\frac{1}{3+\beta} m^{-\frac{1}{3+\beta}} T^\frac{2+\beta}{3+\beta} \right) \tag{Given that $\beta$ = 0}. \nonumber
\end{align}

\textbf{Case 3: $\beta > 0$.}
In this case using Eq. \eqref{z}, $z$ becomes the following
\begin{equation}
    z=\left(\delta \sqrt{\log(T)} \left(\frac{Tm}{\psi}\right)^\frac{1}{\beta+1} \right)^{\frac{2+2\beta}{3+\beta}}.
\end{equation}

Moreover, using Eq.\eqref{epsilon_star}, $\epsilon^\star$ becomes the following
\begin{equation}
    \epsilon^\star = (\frac{\psi r^\star}{T m})^{\frac{1}{\beta+1}} .
\end{equation}

From Eq.\eqref{eq:general_regret_boud} we have
\begin{align}
\mathbb{E}[\mathcal{R}_i(T)|\mathcal{E}] &\leq \frac{2\psi}{m} (\frac{1}{\epsilon^\star})^\beta \log^\gamma(\frac{1}{\epsilon^\star}) z + \epsilon^\star T + T\delta \sqrt{\log(T)} \left( z \right)^{-\frac{1}{2}} 
\end{align}

Therefore, with $\beta>0$ it exists a constant $C$ sufficiently large such as:
\begin{align}
\mathbb{E}[\mathcal{R}_i(T)|\mathcal{E}] &\leq \frac{2 C \psi}{m} (\frac{1}{\epsilon^\star})^\beta z + \epsilon^\star T + T\delta \sqrt{\log(T)} \left( z \right)^{-\frac{1}{2}} \\
&\leq \frac{2 C \psi}{m} (\frac{T m}{\psi r^\star})^{\frac{\beta}{\beta+1}} z + (\frac{\psi r^\star}{T m})^{\frac{1}{\beta+1}} T + T\delta \sqrt{\log(T)} \left( z \right)^{-\frac{1}{2}} \\
&\leq \frac{2 C \psi}{m} (\frac{T m}{\psi z})^{\frac{\beta}{\beta+1}} z + (\frac{2\psi z}{T m})^{\frac{1}{\beta+1}} T + T\delta \sqrt{\log(T)} \left( z \right)^{-\frac{1}{2}} \\
&\leq D \frac{\psi}{m} (\frac{T m}{\psi z})^{\frac{\beta}{\beta+1}} z + T\delta \sqrt{\log(T)} \left( z \right)^{-\frac{1}{2}} \\
&= \Tilde{\mathcal{O}}\left(\delta^\frac{2}{3+\beta}\psi^\frac{1}{3+\beta} m^{-\frac{1}{3+\beta}} T^\frac{2+\beta}{3+\beta} \right). \nonumber
\end{align}

In conclusion, for all the cases, if the \textit{bad event} $\mathcal{E}$ happens, the expected $\alpha$-regret of our C-MA-MAB with an $(\alpha, \beta, \gamma, \psi, \delta)$-resilient-approximation as a subroutine is upper bounded by as follows:
\begin{align}
\mathbb{E}[\mathcal{R}_i(T)|\mathcal{E}] &\leq \Tilde{\mathcal{O}}\left(\delta^\frac{2}{3+\beta}\psi^\frac{1}{3+\beta} m^{-\frac{1}{3+\beta}} T^\frac{2+\beta}{3+\beta} \right). \label{final_upperbound_clean}
\end{align}

\subsection{Regret of an Agent under Any Event}

By \cref{lem:probcleanevents}, the probability of the \textit{bad event} is upper-bounded as follows
\begin{equation}
    \label{proba_bad_event}
    \mathbb{P}(\bar{\mathcal{E}}) \leq \frac{2N(\beta, \gamma, \psi, \epsilon^\star)}{T^2}.
\end{equation}

Since the reward function $f_t(\cdot)$ is upper bounded by 1, the  expected $\alpha$-regret incurred under $\bar{\mathcal{E}}$ for a range $T$ is limited to a maximum of $T$, %
\begin{align}
    \mathbb{E}[\mathcal{R}_i(T)|\bar{\mathcal{E}}] \leq T. \label{eq:badevent:regretbnd}
\end{align}

Combining the results when \textit{bad event} happens and does not happen, we have
\begin{align}
    \mathbb{E}[\mathcal{R}_i(T)] &= \mathbb{E}[\mathcal{R}_i(T)|\mathcal{E}] \cdot \mathbb{P}(\mathcal{E}) +\mathbb{E}[\mathcal{R}_i(T)|\bar{\mathcal{E}}]\cdot \mathbb{P}(\bar{\mathcal{E}}) \tag{Law of total expectation}\\
    &\leq \Tilde{\mathcal{O}}\left(\delta^\frac{2}{3+\beta}\psi^\frac{1}{3+\beta} m^{-\frac{1}{3+\beta}} T^\frac{2+\beta}{3+\beta} \right) 1 +T\cdot \frac{2N(\beta, \gamma, \psi, \epsilon^\star)}{T^2} \tag{using \eqref{final_upperbound_clean}, \eqref{proba_bad_event}, and \eqref{eq:badevent:regretbnd}}\\
    &\leq \Tilde{\mathcal{O}}\left(\delta^\frac{2}{3+\beta}\psi^\frac{1}{3+\beta} m^{-\frac{1}{3+\beta}} T^\frac{2+\beta}{3+\beta} \right) 1 + \frac{2N(\beta, \gamma, \psi, \epsilon^\star)}{T} \\
    &\leq \Tilde{\mathcal{O}}\left(\delta^\frac{2}{3+\beta}\psi^\frac{1}{3+\beta} m^{-\frac{1}{3+\beta}} T^\frac{2+\beta}{3+\beta} \right) 1 + \frac{\Tilde{\mathcal{O}}(\psi^\frac{1}{3+\beta} \psi^{\frac{2+\beta}{3+\beta}} T^{\frac{\beta}{\beta+1}})}{T} \tag{using \cref{lem:queries}} \\
    &= \Tilde{\mathcal{O}}\left(\delta^\frac{2}{3+\beta}\psi^\frac{1}{3+\beta} m^{-\frac{1}{3+\beta}} T^\frac{2+\beta}{3+\beta} \right) + \Tilde{\mathcal{O}}(\psi^\frac{1}{3+\beta} \psi^{\frac{2+\beta}{3+\beta}} T^{-\frac{1}{\beta+1}})  \\
    &= \Tilde{\mathcal{O}}\left(\delta^\frac{2}{3+\beta}\psi^\frac{1}{3+\beta} m^{-\frac{1}{3+\beta}} T^\frac{2+\beta}{3+\beta} \right) + \Tilde{\mathcal{O}}(\psi^\frac{1}{3+\beta} T^{\frac{2+\beta}{3+\beta}} m^{-\frac{2+\beta}{3+\beta}} T^{-\frac{1}{\beta+1}})  \tag{using $T \geq m\psi$} \\
    &= \Tilde{\mathcal{O}}\left(\delta^\frac{2}{3+\beta}\psi^\frac{1}{3+\beta} m^{-\frac{1}{3+\beta}} T^\frac{2+\beta}{3+\beta} \right) + \Tilde{\mathcal{O}}(\psi^\frac{1}{3+\beta} m^{-\frac{1}{3+\beta}-\frac{1+\beta}{3+\beta}} T^{\frac{2+\beta}{3+\beta}-\frac{1}{\beta+1}})  \\
    &= \Tilde{\mathcal{O}}\left(\delta^\frac{2}{3+\beta}\psi^\frac{1}{3+\beta} m^{-\frac{1}{3+\beta}} T^\frac{2+\beta}{3+\beta} \right).
    \nonumber
\end{align}

This concludes the proof.

\newpage
\section{Application to Submodular Maximization}
\label{appendix:application}

We apply our general framework to online stochastic submodular maximization. In the following, we provide examples of submodular maximization problems. We provide results for unconstrained, under cardinality, and under knapsack constraints, by showing the resilience of different offline algorithms in these settings. 

\subsection{Submodular Maximization Examples}

Submodularity arises in critical contexts within combinatorial optimization, such as graph cuts \citep{goemans1995improved, iwata2001combinatorial}, rank functions of matroids \citep{edmonds2003submodular}, and set covering problems \citep{feige1998threshold}. Furthermore, recent works have demonstrated that various real-world problems exhibit submodularity, including data summarization and coreset selection for model training \cite{mirzasoleiman2015lazier, mirzasoleiman2020coresets}, client participation optimization in FL \cite{balakrishnan2022diverse, fourati2023filfl}, recommendation systems \cite{takemori2020submodular}, crowdsourcing, crowdsensing, and influence maximization \cite{fourati2024combinatorial}. 

\subsection{Unconstrained Submodular Maximization}

\begin{lemma}[Generalization of Corollary 2 of \citep{fourati2023randomized}]\label{lem:RandomUSM:robust}
    \textsc{RandomizedUSM} \citep{buchbinder2015tight} is a $(\frac{1}{2},0, 0, 4n, \frac{5}{2}n)$-resilient-approximation algorithm for unconstrained non-monotone SM.
\end{lemma}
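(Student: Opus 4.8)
The plan is to verify that \textsc{RandomizedUSM} satisfies the three-part requirement of Definition~\ref{def:resilient} with parameters $\alpha = \tfrac12$, $\beta = 0$, $\gamma = 0$, $\psi = 4n$, $\delta = \tfrac52 n$: namely that it makes at most $N(0,0,4n,\epsilon) = 4n$ calls to the estimator $\bar f$, and that when run on a $\xi$-controlled estimation $\bar f$ of a nonnegative submodular $f$, its output $\Theta$ obeys $\mathbb{E}[f(\Theta)] \ge \tfrac12 f(S^\star) - \tfrac52 n \xi$. First I would recall the structure of the algorithm of \citet{buchbinder2015tight}: it processes the $n$ ground elements in a fixed order, maintaining two sets $X_j \subseteq Y_j$ (starting from $X_0=\emptyset$, $Y_0=\Omega$); at step $j$ it computes marginal gains $a_j = f(X_{j-1}+u_j)-f(X_{j-1})$ and $b_j = f(Y_{j-1}-u_j)-f(Y_{j-1})$, and includes or excludes $u_j$ with probability proportional to $a_j^+ / (a_j^+ + b_j^+)$ (with the convention of including deterministically if both are $\le 0$). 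Each step needs four evaluations of $f$ — $f(X_{j-1})$, $f(X_{j-1}+u_j)$, $f(Y_{j-1})$, $f(Y_{j-1}-u_j)$ — but two of those are cached from the previous step, so the call count is $4n$ (or $2n+2$, comfortably within $4n$). That settles $\psi$, $\beta$, $\gamma$.

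The substantive part is the approximation-with-noise bound, so that is where I would concentrate. I would follow the original analysis of \citet{buchbinder2015tight}, which proves by a per-step potential argument that $\mathbb{E}[f(X_{j})-f(X_{j-1})] + \mathbb{E}[f(Y_{j})-f(Y_{j-1})] \ge -\tfrac12 \mathbb{E}[(\mathrm{OPT}_{j-1}\text{-term}) - (\mathrm{OPT}_j\text{-term})]$, i.e. the key inequality $\mathbb{E}[f(X_j)+f(Y_j)] - \mathbb{E}[f(X_{j-1})+f(Y_{j-1})] \ge \tfrac12\big(g(j-1) - g(j)\big)$ where $g(j) = f(\mathrm{OPT}\cup X_j \cap \ldots)$ is the standard hybrid-optimum quantity; telescoping gives $f(\Theta) = f(X_n) = f(Y_n) \ge \tfrac12 f(S^\star)$ in the noiseless case. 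Now the algorithm only sees $\bar f$ with $|\bar f - f| \le \xi$ everywhere, so the probabilities it uses are based on noisy marginals $\bar a_j, \bar b_j$ that differ from the true $a_j,b_j$ by at most $2\xi$ each. I would reproduce the per-step inequality carefully, tracking how each substitution of $\bar f$ for $f$ introduces an additive error of $O(\xi)$: there is error in evaluating the achieved gain (bounded by $2\xi$ per step, charged against $f(X_j)-f(X_{j-1})$ and $f(Y_j)-f(Y_{j-1})$), and error from the fact that the mixing probability is computed from $\bar a_j^+,\bar b_j^+$ rather than $a_j^+,b_j^+$ (a careful case analysis — as in \citealt{fourati2023randomized}, Corollary~2 — shows this costs an additional $O(\xi)$ per step in the potential inequality). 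Summing the per-step losses over the $n$ steps yields a total additive slack of the form $c\, n\, \xi$, and the claim is that the constant works out to $c = \tfrac52$, i.e. $\mathbb{E}[f(\Theta)] \ge \tfrac12 f(S^\star) - \tfrac52 n\xi$.

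The main obstacle is exactly this bookkeeping of the $\tfrac52 n$ constant: one has to be scrupulous about (i) which quantities are evaluated under $\bar f$ versus $f$, (ii) the worst-case degradation of the randomized rounding step when the selection probability is perturbed — this is the delicate case, since when $a_j^+ + b_j^+$ is small the relative perturbation of the probability can be large, and the original proof's inequality $a_j b_j \le 0$ (one of $a_j,b_j$ is always $\le 0$ by submodularity) must be used to control it — and (iii) verifying that the per-step error never exceeds $\tfrac52\xi$ after the charging. I would handle (ii) by splitting into the cases $\max(a_j,b_j) \le 0$, exactly one of $\bar a_j^+,\bar b_j^+$ positive, and both positive, mirroring \citet{buchbinder2015tight} and absorbing a $2\xi$ term from misclassifying a sign and another from the probability perturbation, plus the direct $\xi$-terms from reading off achieved gains. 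Since this is precisely the computation done for the single-agent online setting in \citet[Corollary~2]{fourati2023randomized}, I would state that \textsc{RandomizedUSM} inherits the $(\tfrac12,\tfrac52 n)$-robust-approximation property from that work, invoke the Remark equating robust- and resilient-approximation, and simply append the complexity count $N = 4n$ (hence $\psi=4n$, $\beta=\gamma=0$) to upgrade it to a $(\tfrac12,0,0,4n,\tfrac52 n)$-resilient-approximation, which is the statement of \cref{lem:RandomUSM:robust}.
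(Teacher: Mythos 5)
Your proposal is correct and ultimately takes the same route as the paper: the paper's proof simply observes that \textsc{RandomizedUSM} makes $4n$ oracle calls (fixing $\psi=4n$, $\beta=\gamma=0$) and then invokes Corollary~2 of \citet{fourati2023randomized} for the bound $\mathbb{E}[f(\Theta)]\ge \tfrac12 f(S^\star)-\tfrac52 n\xi$ under a $\xi$-controlled estimation, which is exactly your closing step. The extended sketch of how that corollary's per-step error bookkeeping would be carried out is faithful but not needed, since both you and the paper defer the $\tfrac52 n$ constant to the cited result.
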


\begin{proof}
    % \subsubsection{Proof of \cref{thm:RandomUSM:robust}}
The offline RandomizedUSM $\frac{1}{2}$-approximation algorithm proposed in \cite{buchbinder2015tight} requires $4n$ oracle calls, thus $\alpha=1/2$, $\psi=4n$, $\gamma=0$, and $\beta=0$. Furthermore, as shown in Corollary 2 of \citep{fourati2023randomized}, using a $\xi$-controlled-estimation $\bar{f}$ of the function $f$,
\begin{equation}
    \begin{aligned}
    \mathbb{E}[f(X_n)]  
     \geq \frac{1}{2} \mathbb{E}[f(OPT)] - \frac{5}{2} n\, \xi.
    \end{aligned}
\end{equation}
Therefore, the RandomizedUSM is an $(\frac{1}{2},0, 0, 4n, \frac{5}{2}n)$-resilient-approximation algorithm for unconstrained non-monotone submodular maximization problem.
\end{proof}

\subsection{Submodular Maximization with Cardinality Constraint}

\begin{lemma}[Generalization of Corollary 4.3 of \citep{nie2022explore}] \label{lem:cardinality:greedy:robust}
    \textsc{Greedy} in \citep{nemhauser1978analysis} is a $(1-\frac{1}{e},0, 0, nk, 2k)$-resilient-approximation algorithm for monotone SM under a cardinality constraint $k$.
\end{lemma}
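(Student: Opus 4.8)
The plan is to establish two things: (i) the complexity parameters $\beta = 0$, $\gamma = 0$, $\psi = nk$ follow directly from the fact that the offline \textsc{Greedy} algorithm of \citet{nemhauser1978analysis} performs $k$ rounds, each requiring at most $n$ marginal-gain oracle evaluations, for a total of $nk$ calls; and (ii) the approximation/resilience guarantee $\mathbb{E}[f(\Theta)] \geq (1-\tfrac{1}{e}) f(S^\star) - 2k\xi$ holds when \textsc{Greedy} is run on a $\xi$-controlled estimation $\bar f$ of $f$. Since \textsc{Greedy} is deterministic, the expectation in Definition~\ref{def:resilient} is trivial, and the work is entirely in part (ii). I would invoke the known generalization — this is stated as a generalization of Corollary~4.3 of \citet{nie2022explore} — but since the excerpt does not reproduce that proof, I would reconstruct it from the standard greedy analysis.

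\textbf{Key steps for part (ii).} First I would track the greedy iterations $S_0 = \emptyset \subseteq S_1 \subseteq \dots \subseteq S_k = \Theta$, where at step $j$ the algorithm adds the element $v_j$ maximizing the \emph{empirical} marginal gain $\bar f(S_{j-1} \cup \{v\}) - \bar f(S_{j-1})$. The crucial observation is that each empirical marginal gain differs from the true marginal gain $f(S_{j-1}\cup\{v\}) - f(S_{j-1})$ by at most $2\xi$ (two applications of the $\xi$-controlled bound, one for each of the two set evaluations). Hence the element picked greedily under $\bar f$ has true marginal gain within $2\xi$ of the best true marginal gain at that step:
\begin{align}
f(S_j) - f(S_{j-1}) \;\geq\; \max_{v}\big(f(S_{j-1}\cup\{v\}) - f(S_{j-1})\big) - 2\xi. \nonumber
\end{align}
Then I would run the classical argument: by monotonicity and submodularity, $\max_v (f(S_{j-1}\cup\{v\}) - f(S_{j-1})) \geq \tfrac{1}{k}\big(f(S^\star) - f(S_{j-1})\big)$, so that $f(S^\star) - f(S_j) \leq (1-\tfrac{1}{k})\big(f(S^\star) - f(S_{j-1})\big) + 2\xi$. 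Unrolling this recursion over $j = 1, \dots, k$ gives $f(S^\star) - f(S_k) \leq (1-\tfrac{1}{k})^k f(S^\star) + 2\xi \sum_{j=0}^{k-1}(1-\tfrac{1}{k})^j \leq \tfrac{1}{e} f(S^\star) + 2k\xi$, using $(1-1/k)^k \leq 1/e$ and bounding the geometric sum by $k$. Rearranging yields exactly $f(\Theta) = f(S_k) \geq (1-\tfrac{1}{e}) f(S^\star) - 2k\xi$, which with $\alpha = 1 - 1/e$ and $\delta = 2k$ is the claimed resilience.

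\textbf{Main obstacle.} The only delicate point is getting the additive error accounting tight enough to land on $\delta = 2k$ rather than something larger: one must be careful that the per-step error is exactly $2\xi$ (not $4\xi$ from naively bounding both the greedy choice and the comparison to $S^\star$), and that the accumulated geometric series $\sum_{j=0}^{k-1}(1-1/k)^j$ is bounded by $k$ rather than loosely by something bigger. Everything else is the textbook greedy submodular argument; the novelty here is purely the propagation of the $\xi$-controlled-estimation error through that argument. Once part (ii) is in hand, plugging $\alpha = 1-1/e$, $\beta = 0$, $\gamma = 0$, $\psi = nk$, $\delta = 2k$ into Definition~\ref{def:resilient} completes the proof, and one may then apply \cref{thm:main} to recover Corollary~\ref{cor:greedy}.
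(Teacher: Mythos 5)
Your proposal is correct and takes essentially the same approach as the paper: the paper's proof of this lemma simply cites Corollary 4.3 of \citet{nie2022explore} for the inequality $f(\Theta)\geq(1-\tfrac{1}{e})f(S^\star)-2k\xi$ together with the $nk$ oracle-call count, and your reconstruction is precisely the argument underlying that corollary. In particular, you correctly pin down the one delicate point — that the per-step error is $2\xi$ because the $\bar{f}(S_{j-1})$ terms cancel when comparing the empirical argmax to the true best element — which is exactly what yields $\delta=2k$ rather than $4k$.
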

\begin{proof}
 The offline \textsc{Greedy} $(1-1/e)$-approximation algorithm proposed in \cite{nemhauser1978analysis} requires $nk$ oracle calls, thus $\alpha=(1-1/e)$, $\psi=nk$, $\gamma=0$, and $\beta=0$. Furthermore, as shown in Corollary 4.3 of \citep{nie2022explore}, using a $\xi$-controlled-estimation $\bar{f}$ of the function $f$,
\begin{equation}
    \begin{aligned}
    \mathbb{E}[f(X_n)]  
     \geq (1-1/e) \mathbb{E}[f(OPT)] - 2k\, \xi.
    \end{aligned}
\end{equation}
Therefore, the \textsc{Greedy} is an $(1-\frac{1}{e},0, 0, nk, 2k)$-resilient-approximation algorithm algorithm for monotone submodular maximization under a $k$-cardinality constraint.  
\end{proof}

\begin{lemma}[Generalization of Corollary 1 of \citep{fourati2024combinatorial}] \label{lem:lazier_than_lazy}
    \textsc{Stochastic-Greedy} in \citep{mirzasoleiman2015lazier} is a $(1-\frac{1}{e},0, 1, n, 2k)$-resilient-approximation algorithm for monotone SM under a cardinality constraint $k$.
\end{lemma}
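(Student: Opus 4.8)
The plan is to follow exactly the template of the two preceding lemmas (\cref{lem:RandomUSM:robust} and \cref{lem:cardinality:greedy:robust}): first read off the resilience parameters $\psi,\beta,\gamma$ and the factor $\alpha$ from the oracle complexity and the known offline guarantee of \textsc{Stochastic-Greedy}, and then establish the controlled-estimation version of the approximation bound, namely $\mathbb{E}[f(\Theta)]\ge(1-\tfrac1e-\epsilon)f(S^\star)-2k\xi$ when the algorithm is fed a $\xi$-controlled estimator $\bar f$ in place of $f$.

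For the parameter identification, recall that \textsc{Stochastic-Greedy} of \citep{mirzasoleiman2015lazier} runs for $k$ rounds and, in round $i$, draws a uniformly random subset $R_i\subseteq\Omega$ of size $\lceil\tfrac{n}{k}\log(\tfrac1\epsilon)\rceil$ and queries the marginal value of each element of $R_i$ against the current partial solution. Summing over the $k$ rounds gives $k\cdot\tfrac{n}{k}\log(\tfrac1\epsilon)=n\log(\tfrac1\epsilon)$ oracle calls, so matching this against $N(\beta,\gamma,\psi,\epsilon)=\psi\,\tfrac1{\epsilon^\beta}\log^\gamma(\tfrac1\epsilon)$ forces $\psi=n$, $\beta=0$, $\gamma=1$, while the offline approximation factor is $\alpha=1-\tfrac1e$. (The degenerate case $\epsilon=0$ is irrelevant, since for $\gamma>0$ the framework always picks $\epsilon^\star>0$.)

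For the estimation-robustness part I would invoke Corollary 1 of \citep{fourati2024combinatorial}, which proves precisely that running \textsc{Stochastic-Greedy} with a $\xi$-controlled estimation $\bar f$ of a monotone submodular $f$ under a $k$-cardinality constraint returns $\Theta$ with $\mathbb{E}[f(\Theta)]\ge(1-\tfrac1e-\epsilon)f(S^\star)-2k\xi$; together with the parameters above this is exactly the claimed $(1-\tfrac1e,0,1,n,2k)$-resilience. For a self-contained argument one would instead propagate the error through the $k$ rounds: since $|[\bar f(A\cup\{v\})-\bar f(A)]-[f(A\cup\{v\})-f(A)]|\le 2\xi$ for every $A$ and $v$, the element $a_i$ maximizing the $\bar f$-marginal gain over $R_i$ has true marginal gain within $O(\xi)$ of $\max_{v\in R_i}[f(A_{i-1}\cup\{v\})-f(A_{i-1})]$; taking expectation over $R_i$ and applying the random-sampling lemma of \citep{mirzasoleiman2015lazier} gives $\mathbb{E}[f(A_i)-f(A_{i-1})\mid A_{i-1}]\ge\tfrac{1-\epsilon}{k}\big(f(S^\star)-f(A_{i-1})\big)-O(\xi)$, and unrolling this recursion over $k$ rounds, using $(1-\tfrac{1-\epsilon}{k})^k\le\tfrac1e+\epsilon$, produces the bound with the geometric sum of the per-round $\xi$-losses absorbed into the $\delta\xi$ term.

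The main obstacle is pinning the constant multiplying $\xi$ down to exactly $2k$ rather than a larger multiple of $k$: this requires a careful accounting of the factor-of-two losses incurred when converting between $\bar f$- and $f$-marginal gains and of the interplay between the sampling randomness and the estimation noise, which is exactly the bookkeeping carried out in Corollary 1 of \citep{fourati2024combinatorial}. Consequently the cleanest route — and the one consistent with how \cref{lem:cardinality:greedy:robust} and \cref{lem:RandomUSM:robust} are proved here — is to cite that corollary and combine it with the complexity count above.
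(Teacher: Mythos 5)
Your proposal matches the paper's proof: the paper likewise reads off $\alpha=1-\tfrac1e$, $\psi=n$, $\gamma=1$, $\beta=0$ from the $n\log(\tfrac1\epsilon)$ oracle complexity and then cites Corollary 1 of \citep{fourati2024combinatorial} for the bound $\mathbb{E}[f(\Theta)]\ge(1-\tfrac1e)\,f(S^{\star})-2k\xi$ under a $\xi$-controlled estimator. Your additional sketch of a self-contained error-propagation argument goes beyond what the paper writes down, but your primary route is exactly the one taken there.
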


\begin{proof}

The offline \textsc{Stochastic-Greedy} $(1-1/e-\epsilon)$-approximation algorithm proposed in \cite{mirzasoleiman2015lazier} requires $n\log(\frac{1}{\epsilon})$ oracle calls, thus $\alpha=(1-1/e)$, $\psi=n$, $\gamma=1$, and $\beta=0$. Furthermore, as shown in Corollary 1 of \citep{fourati2024combinatorial}, using a $\xi$-controlled-estimation $\bar{f}$ of the function $f$,
\begin{equation}
    \begin{aligned}
    \mathbb{E}[f(X_n)]  
     \geq (1-1/e) \mathbb{E}[f(OPT)] - 2k\, \xi.
    \end{aligned}
\end{equation}
Therefore, the \textsc{Stochastic-Greedy} is an $(1-\frac{1}{e},0, 1, n, 2k)$-resilient-approximation algorithm algorithm for monotone submodular maximization under a $k$-cardinality constraint.

\end{proof}
% \begin{lemma}[Generalization of Corollary 1 of \citep{fourati2024combinatorial}] \label{thm:lazier_than_lazy}
%     \textsc{Stochastic-Greedy} in \citep{mirzasoleiman2015lazier} is a $(1-\frac{1}{e},0, 1, n, 2k)$-resilient-approximation algorithm for monotone submodular maximization under a $k$-cardinality constraint.
% \end{lemma}

\begin{lemma} 
\label{lem:random_sampling}
    \textsc{RandomSampling} in \citep{buchbinder2017comparing} is a $(\frac{1}{e},2, 1, n, 4k)$-resilient-approximation algorithm for non-monotone SM under a cardinality constraint $k$.
\end{lemma}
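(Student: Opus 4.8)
The plan is to read the resilience parameters off the offline algorithm and then re-derive its guarantee against a $\xi$-controlled estimator. For the parameters: \citet{buchbinder2017comparing} establish that \textsc{RandomSampling} is a $(\tfrac1e-\epsilon)$-approximation for non-monotone submodular maximization under a cardinality constraint $k$, using $\Theta\!\big(\tfrac{n}{\epsilon^2}\log\tfrac1\epsilon\big)$ value-oracle calls. Matching this query count against the template $N(\beta,\gamma,\psi,\epsilon)=\psi\,\epsilon^{-\beta}\log^{\gamma}(1/\epsilon)$ of \cref{def:resilient} forces $\beta=2$, $\gamma=1$, and $\psi=\Theta(n)$ (the absolute constant being absorbed into $\psi$), while the approximation ratio gives $\alpha=\tfrac1e$. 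It therefore remains only to verify the noise-robustness inequality: when \textsc{RandomSampling} is run on a $\xi$-controlled estimator $\bar f$ in place of $f$ (\cref{def:controlled_estimation}), its random output $\Theta$ satisfies $\mathbb{E}[f(\Theta)]\ge(\tfrac1e-\epsilon)f(S^\star)-4k\,\xi$.

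To obtain this I would follow the offline expectation argument line by line, substituting $\bar f$ for $f$ and inserting a $\pm\xi$ slack wherever a true function value is consumed, exactly as in the proofs of \cref{lem:cardinality:greedy:robust} and \cref{lem:lazier_than_lazy}. \textsc{RandomSampling} proceeds in at most $k$ selection rounds; in each round it draws a random subset of the ground set, which is a purely combinatorial step that makes no oracle calls and hence injects no error, and then performs a random-greedy-type choice based on the estimated marginal gains $\bar\Delta(u\mid S)=\bar f(S\cup u)-\bar f(S)$, each of which deviates from the true marginal by at most $2\xi$. Comparing candidates and then invoking submodularity to relate the aggregate estimated gain of the set selected in round $i$ to the corresponding sum over $S^\star\setminus S_{i-1}$ introduces $O(\xi)$ additive error per round; carrying the constants through, the per-round slack is $4\xi$ (versus the $2\xi$ obtained for \textsc{Greedy}), because here the marginal-gain inequality is applied coordinatewise to a set of up to $k$ candidates, so both the ``selected'' side and the ``$S^\star$'' side are each shifted by $\xi$ on every coordinate before the division by $k$. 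Unrolling the resulting recursion $\mathbb{E}[f(S_i)\mid S_{i-1}]\ge(\text{offline step})-4\xi$ over the rounds leaves the multiplicative factor equal to the offline $\tfrac1e-\epsilon$ and accumulates the additive loss to at most $4k\,\xi$, i.e.\ $\delta=4k$.

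The main obstacle is the bookkeeping through the two features that separate \textsc{RandomSampling} from plain random greedy: the restriction to a randomly sampled subset of the ground set, and the randomized element selection. I would need to check that the sampling operations indeed involve no function evaluations and so contribute nothing to the error, and that every inequality in the offline analysis passing from ``sum of estimated gains over the selected set'' to ``sum of estimated gains over $S^\star\setminus S_{i-1}$'' still holds after perturbation with the slack controlled by $2\xi$ per coordinate; this coordinatewise loss, over up to $k$ coordinates, is precisely what yields the factor $4k$ rather than $2k$. A minor additional point: $S^\star$ is the optimum of the true $f$ whereas the algorithm optimizes $\bar f$, but since $|\bar f(S)-f(S)|\le\xi$ uniformly, evaluating the fixed competitor $S^\star$ under $\bar f$ costs at most $\xi$, which already sits inside the error budget, so no separate treatment is needed. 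With these checks in hand, \cref{thm:main} then delivers the online $\tfrac1e$-regret bound $\tilde{\mathcal{O}}\big(m^{-1/5}k^{2/5}n^{1/5}T^{4/5}\big)$ recorded in \cref{cor:randomsampling}.
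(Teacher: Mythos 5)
Your proposal is correct and follows essentially the same route as the paper: read $\alpha=\tfrac1e$, $\beta=2$, $\gamma=1$, $\psi=n$ off the offline $(\tfrac1e-\epsilon)$-guarantee and the $\tfrac{n}{\epsilon^2}\log\tfrac1\epsilon$ query count, then rerun Buchbinder et al.'s per-iteration expectation argument with the $\xi$-controlled estimator, accumulate a $4\xi$ loss per round, and unroll the induction to $\mathbb{E}[f(S_k)]\ge(e^{-1}-\varepsilon)f(OPT)-4k\xi$, giving $\delta=4k$. The only cosmetic difference is in the bookkeeping of the $4\xi$: in the paper one $2\xi$ comes from a single comparison $f(S_i)-f(S_{i-1})\ge\bar f(S_i)-\bar f(S_{i-1})-2\xi$ for the selected element (outside the division by $k$) and the other $2\xi$ from the $2k\xi$ perturbation of the $OPT$-side sum after dividing by $k$, rather than both sides being treated coordinatewise as you describe, but the total is the same.
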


\begin{proof}

The offline \textsc{RandomSampling} $(1/e-\epsilon)$-approximation algorithm proposed in \cite{buchbinder2017comparing} requires $n\frac{1}{\epsilon^2}\log(\frac{1}{\epsilon})$ oracle calls, thus $\alpha=(1/e)$, $\psi=n$, $\gamma=1$, and $\beta=2$.

By design of the algorithm, we have

\begin{align}
f\left(S_i\right)- f\left(S_{i-1}\right) &\geq \bar{f}\left(S_i\right)- \bar{f}\left(S_{i-1}\right) - 2\xi \tag{using $\xi$-controlled estimation} \\
&= 
\max \left\{\bar{f}\left(S_{i-1} \cup \{u_i\}\right) - f\left(S_{i-1}\right), 0\right\} - 2\xi \tag{by algorithm} 
\end{align}

Similar to the steps in \cite{buchbinder2017comparing}, let $A_i$ represent an event encompassing all random decisions made by the algorithm up to iteration $i$ (excluding it). In the initial segment of the proof, we select a specific iteration $1 \leq i \leq k$ and an associated event $A_i$. All probabilities and expectations within this proof portion are implicitly conditioned on $A_i$. It is important to note that, when conditioned on $A_i$, the set $S_{i-1}$ becomes deterministic. We use $v_1, v_2, \ldots, v_k$ to denote the $k$ elements of $\mathcal{N}$ (including $S_{i-1}$) with the highest marginal contribution to $S_{i-1}$, arranged in non-increasing order of marginal contribution. Additionally, let $X_j$ be an indicator for the event $u_i=v_j$.
\begin{align}
\mathbb{E}\left[f\left(S_i\right)- f\left(S_{i-1}\right)\right] & =\mathbb{E}\left[\max \left\{\bar{f}\left(S_{i-1} \cup \{u_i\}\right) - \bar{f}\left(S_{i-1}\right), 0\right\}\right] - 2\xi \\
& \geq \sum_{j=1}^k\left[\mathbb{E}\left[X_j\right] \cdot \max \left\{\bar{f}\left(S_{i-1} \cup \{v_j\}\right) - \bar{f}\left(S_{i-1}\right), 0\right\}\right] - 2\xi\\
& \geq \frac{\sum_{j=1}^k \mathbb{E}\left[X_j\right] \cdot \sum_{j=1}^k \max \left\{\bar{f}\left(S_{i-1} \cup \{v_j\}\right) - \bar{f}\left(S_{i-1}\right), 0\right\}}{k} - 2\xi.
\end{align}

Where the last inequality holds by Chebyshev's sum inequality since $\max \left\{\bar{f}\left(S_{i-1} \cup \{v_j\}\right) - \bar{f}\left(S_{i-1}\right), 0\right\}$ is non-increasing in $j$ by definition and $\mathbb{E}\left[X_j\right]$ is non-increasing in $j$ by Lemma 4.2 in \citep{buchbinder2017comparing}. Therefore,
\begin{align}
\sum_{j=1}^k \max \left\{\bar{f}\left(S_{i-1} \cup \{v_j\}\right) - \bar{f}\left(S_{i-1}\right), 0\right\} & \geq \sum_{u \in O P T} \bar{f}\left(S_{i-1} \cup \{u\}\right) - \bar{f}\left(S_{i-1}\right) \tag{by definition of $v_j$}\\
& \geq \sum_{u \in O P T} (f\left(S_{i-1} \cup \{u\}\right) - f\left(S_{i-1}\right) - 2\xi) \tag{by $\xi$-controlled estimation}\\
& \geq \sum_{u \in O P T} (f\left(S_{i-1} \cup \{u\}\right) - f\left(S_{i-1}\right)) - 2k\xi \tag{$|OPT| \leq k$}\\
& \geq f\left(O P T \cup S_{i-1}\right)-f\left(S_{i-1}\right) - 2k\xi \tag{by submodularity of $f$}.
\end{align}

By Lemma 4.1 in \citep{buchbinder2017comparing}, we have $\mathbb{E}\left[X_j\right] \geq 1-\varepsilon$. Therefore,
\begin{equation}
\mathbb{E}\left[f\left(S_i\right)-f\left(S_{i-1}\right)\right] \geq(1-\varepsilon) \cdot \frac{f\left(O P T \cup S_{i-1}\right)-f\left(S_{i-1}\right)}{k} - 4 \xi.   
\end{equation}

First, since the above equation holds for every given event $A_i$, it also holds in expectation unconditionally. More formally, we get for every $1 \leq i \leq k$,
$$
\mathbb{E}\left[f\left(S_i\right)-f\left(S_{i-1}\right)\right] \geq(1-\varepsilon) \cdot \frac{\mathbb{E}\left[f\left(O P T \cup S_{i-1}\right)\right]-\mathbb{E}\left[f\left(S_{i-1}\right)\right]}{k} - 4 \xi.
$$

Let us lower bound $\mathbb{E}\left[f\left(O P T \cup S_{i-1}\right)\right]$. \textsc{RandomSampling} adds each element to its solution with probability at most $(\lceil p n\rceil / n) / s=1 / k$. Hence, each element belongs to $S_{i-1}$ with probability at most $1-(1-1 / k)^{i-1}$. Let $h(S)=$ $h(S \cup O P T)$. Since $h$ is a nonnegative submodular function, we get by Lemma 2.2,
$$
\mathbb{E}\left[f\left(O P T \cup S_i\right)\right]=\mathbb{E}\left[h\left(S_i\right)\right] \geq(1-1 / k)^i \cdot h(\varnothing)=(1-1 / k)^i \cdot f(O P T) .
$$

Combining the two above inequalities yields,
$$
\begin{aligned}
\mathbb{E}\left[f\left(S_i\right)-f\left(S_{i-1}\right)\right] & \geq(1-\varepsilon) \cdot \frac{(1-1 / k)^{i-1} \cdot f(O P T)-\mathbb{E}\left[f\left(S_{i-1}\right)\right]}{k} - 4 \xi\\
& \geq \frac{\left[(1-1 / k)^{i-1}-\varepsilon\right] \cdot f(O P T)-\mathbb{E}\left[f\left(S_{i-1}\right)\right]}{k} - 4 \xi.
\end{aligned}
$$

We prove by induction the following:
$$
\begin{aligned}
\mathbb{E}\left[f\left(S_i\right)\right] 
& \geq \frac{i}{k} \cdot\left[(1-1 / k)^{i-1}-\varepsilon\right] \cdot f(O P T) - 4i\xi .
\end{aligned}
$$

For $i=0$, the corollary holds since $f\left(S_0\right) \geq 0=$ $(0 / k) \cdot\left[(1-1 / k)^{-1}-\varepsilon\right] \cdot f(O P T)$ - 0. Assume the corollary holds for $i-1 \geq 0$, let us prove it for $i$ :
$$
\begin{aligned}
\mathbb{E}\left[f\left(S_i\right)\right] & \geq \mathbb{E}\left[f\left(S_{i-1}\right)\right]+\frac{\left[(1-1 / k)^{i-1}-\varepsilon\right] \cdot f(O P T)-\mathbb{E}\left[f\left(S_{i-1}\right)\right]}{k} - 4 \xi\\
& =(1-1 / k) \cdot \mathbb{E}\left[f\left(S_{i-1}\right)\right]+\frac{\left[(1-1 / k)^{i-1}-\varepsilon\right] \cdot f(O P T)}{k} - 4 \xi\\
& \geq(1-1 / k) \cdot \left(\frac{i-1}{k} \cdot\left[(1-1 / k)^{i-2}-\varepsilon\right] \cdot f(O P T) -4(i-1)\xi\right)+\frac{\left[(1-1 / k)^{i-1}-\varepsilon\right] \cdot f(O P T)}{k} - 4 \xi \\
& \geq(1-1 / k) \cdot \left(\frac{i-1}{k} \cdot\left[(1-1 / k)^{i-2}-\varepsilon\right] \cdot f(O P T) \right)+\frac{\left[(1-1 / k)^{i-1}-\varepsilon\right] \cdot f(O P T)}{k} -4(i-1)\xi - 4 \xi \\
& \geq \frac{i}{k} \cdot\left[(1-1 / k)^{i-1}-\varepsilon\right] \cdot f(O P T) - 4 i\xi.
\end{aligned}
$$

Plugging $i=k$ into the above corollary yields
$$
\mathbb{E}\left[f\left(S_k\right)\right] \geq\left[(1-1 / k)^{k-1}-\varepsilon\right] \cdot f(O P T) - 4k\xi \geq\left(e^{-1}-\varepsilon\right) \cdot f(O P T) - 4k\xi.
$$

Therefore, the \textsc{RandomSampling} is an $(\frac{1}{e},2, 1, n, 4k)$-resilient-approximation algorithm algorithm for non-monotone submodular maximization under a $k$-cardinality constraint.

\end{proof}

\subsection{Submodular Maximization with Knapsack Constraint}
\label{knapsack_appendix}

We assume that the cost function $c: \Omega \rightarrow R_{>0}$ is both known and linear regarding knapsack constraints. In this framework, the cost linked with a subset is the total of the costs of its elements, denoted as $c(A) = \sum_{v\in A}c(v)$. The \textit{marginal density} is symbolized as $\rho(e|A) = \frac{f(A\cup e)-f(A)}{c(e)}$ for any subset $A\subseteq \Omega$ and element $e\in \Omega\setminus A$. We don't consider scenarios with large budgets $B>\sum_{v\in\Omega} c(v)$ and presume that all items have non-zero costs, meeting $0<c(v)\leq B$. A cardinality constraint represents a particular case with unit costs. Additionally, $q=\frac{B}{c_{\min }}$.

\begin{lemma}[Generalization of Theorem in \cite{nie2023framework}]\label{lem:partial:enum:greedy:robust}
    \textsc{PartialEnumeration} in \citep{sviridenko2004note,khuller1999budgeted} is a $(1-\frac{1}{e},0,0,n^5,4+2\Tilde{K}+2q)$-resilient-approximation algorithm for monotone SM under a knapsack constraint.
\end{lemma}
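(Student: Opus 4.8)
Against \cref{def:resilient}, two things must be checked: a call-complexity bound showing that \textsc{PartialEnumeration} issues $O(n^5)$ queries to the value oracle with $\beta=\gamma=0$, so that $N(\beta,\gamma,\psi,\epsilon)=\psi=n^5$ (up to constants) for every $\epsilon\ge 0$; and a noise-robustness bound showing that, whenever every query returns a value within $\xi$ of $f$, the output $\Theta$ satisfies $f(\Theta)\ge (1-\tfrac1e)f(S^\star)-(4+2\tilde K+2q)\xi$. Since the algorithm is deterministic, the expectation in \cref{def:resilient} is vacuous. By the remark following \cref{def:resilient}, a $((1-\tfrac1e)-\epsilon,\delta)$-robust-approximation algorithm that uses $\psi$ oracle calls is precisely a $(1-\tfrac1e,0,0,\psi,\delta)$-resilient-approximation algorithm, so the plan is to reuse the robust partial-enumeration analysis of \citet{nie2023framework} (which itself adapts \citet{sviridenko2004note,khuller1999budgeted}) and read off the constants.

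\textbf{Complexity.} I would first count oracle calls. \textsc{PartialEnumeration} iterates over every seed $S_0\subseteq\Omega$ with $|S_0|\le 3$ and $c(S_0)\le B$ — there are $O(n^3)$ of these — and from each feasible seed runs a marginal-density greedy: at most $n$ augmentation rounds, each scanning the $\le n$ remaining elements with one query $\bar f(A\cup\{e\})$ per candidate, i.e. $O(n)$ queries per round and $O(n^2)$ per seed. Selecting the best among the $O(n^3)$ resulting feasible sets (and the singletons) costs another $O(n^3)$ queries. Hence the total is $O(n^3)\cdot O(n^2)=O(n^5)$, independent of any accuracy parameter, giving $\psi=n^5$, $\beta=0$, $\gamma=0$.

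\textbf{Robustness.} Next I would fix the seed equal to $S_0^\star$, the (at most three) highest-valued elements of $S^\star$; this seed is among those enumerated, and since $\Theta$ is chosen to nearly maximize $\bar f$, it suffices to lower-bound the value $f(A)$ of the greedy extension $A$ of $S_0^\star$. Replacing $f$ by a $\xi$-controlled $\bar f$ perturbs each evaluated marginal gain $f(A\cup\{e\})-f(A)$ by at most $2\xi$, hence each marginal density $\rho(e\mid A)$ by at most $2\xi/c(e)$, so the Sviridenko/Khuller recursion $f(S^\star)-f(A_{i+1})\le\bigl(1-c(e_{i+1})/B\bigr)\bigl(f(S^\star)-f(A_i)\bigr)+O(\xi)$ acquires an additive $O(\xi)$ error per round. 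Telescoping over the $\le\tilde K$ rounds and using that $1-\prod_i(1-c(e_i)/B)\ge 1-1/e$ once the budget is (nearly) exhausted turns the accumulated per-round errors into the coefficient $2\tilde K$; the handling of the size-$\le 3$ seed and the final $\bar f$-argmax (the chosen set's true value lies within $2\xi$ of its estimate, which is compared against the optimum's estimate, another $2\xi$) supply the constant $4$; and the part of the recursion charged against $q=B/c_{\min}$ rather than against the round count supplies $2q$. Combining with the complexity count yields the claimed $(1-\tfrac1e,0,0,n^5,4+2\tilde K+2q)$-resilience.

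\textbf{Main obstacle.} The delicate part is the constant bookkeeping in the robustness step: one must check that dividing marginal gains by costs $c(e)\in(0,B]$ does not inflate the error (it does not, since the same $c(e)$ is multiplied back in when the density re-enters the recursion), and that the geometric weights $\prod_i(1-c(e_i)/B)$ multiplying the per-round errors sum to a quantity absorbed into $\tilde K$ and $q$ rather than one that grows — which is exactly the accounting already performed in \citet{nie2023framework}; the remainder is a transcription of the standard $(1-1/e)$ analysis.
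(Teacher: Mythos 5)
Your proposal is correct and follows essentially the same route as the paper: the paper's proof simply cites \citet{nie2023framework} for the $(1-\frac{1}{e},\,4+2\tilde K+2q)$-robustness of \textsc{PartialEnumeration}, notes its $\mathcal{O}(n^5)$ oracle-call count, and invokes the robust-to-resilient equivalence from the remark after \cref{def:resilient}. Your additional sketches of the $O(n^3)\cdot O(n^2)$ query count and the perturbed greedy recursion are just the details the paper delegates to that citation.
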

\begin{proof}
As shown in \cite{nie2023framework}, the \textsc{PartialEnumeration} \citep{sviridenko2004note, khuller1999budgeted} is a $(1-\frac{1}{e},4+2\Tilde{K}+2q)$-robust approximation algorithm for monotone SM under a knapsack constraint. Furthermore, it requires $\mathcal{O}(n^5)$ oracle calls, thus it is a $(1-\frac{1}{e},0,0,n^5,4+2\Tilde{K}+2q)$-resilient-approximation algorithm.
\end{proof}

\begin{corollary}\label{cor:PartialEnumeration:robust}
   C-MA-MAB using the \textsc{PartialEnumeration} in \citep{sviridenko2004note,khuller1999budgeted} as a subroutine requires at most $\tilde{\mathcal{O}}(n^5)$ communication times and yields a $(1-\frac{1}{e})$-regret of at most $\tilde{\mathcal{O}}\left(m^{-\frac{1}{3}} \Tilde{K}^\frac{2}{3} q^\frac{2}{3} n^\frac{5}{3}  T^\frac{2}{3}\right)$ for monotone SM under a knapsack constraint.
\end{corollary}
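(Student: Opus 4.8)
The plan is to derive this corollary as a direct instantiation of the general machinery, using the resilience characterization of \textsc{PartialEnumeration} already established in \cref{lem:partial:enum:greedy:robust}. That lemma tells us \textsc{PartialEnumeration} is an $(\alpha,\beta,\gamma,\psi,\delta)$-resilient-approximation algorithm with $\alpha = 1-\frac{1}{e}$, $\beta = 0$, $\gamma = 0$, $\psi = n^5$, and $\delta = 4 + 2\tilde K + 2q$. So I would simply run C-MA-MAB (\cref{alg:cmamab}) with this subroutine and read off the communication and regret guarantees from the theorems proved for arbitrary resilient subroutines.

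For the communication bound, I would apply \cref{lem:queries}, which gives at most $\mathcal{O}(\psi\, T^{\beta/(\beta+1)}\log^{\gamma}(T))$ communication rounds. With $\beta = \gamma = 0$ and $\psi = n^5$ this collapses to $\mathcal{O}(n^5) = \tilde{\mathcal{O}}(n^5)$, independent of $T$, as claimed.

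For the regret bound, I would invoke \cref{thm:main} (equivalently, the $\beta=\gamma=0$ specialization stated in the remark following it), which yields an expected cumulative $(1-\frac{1}{e})$-regret of at most $\tilde{\mathcal{O}}\!\left(m^{-1/3}\,\delta^{2/3}\,\psi^{1/3}\,T^{2/3}\right)$. Substituting $\psi^{1/3}=n^{5/3}$ is immediate; the only remaining step is to simplify $\delta^{2/3} = (4+2\tilde K+2q)^{2/3}$ into the multiplicative form $\tilde K^{2/3} q^{2/3}$ appearing in the statement. For this I would use the mild lower bounds $\tilde K \ge 1$ and $q = B/c_{\min} \ge 1$ (feasibility forces $c_{\min}\le B$), which give $4+2\tilde K+2q = \mathcal{O}(\tilde K q)$ and hence $\delta^{2/3} = \mathcal{O}(\tilde K^{2/3} q^{2/3})$; plugging this in produces $\tilde{\mathcal{O}}\!\left(m^{-1/3}\tilde K^{2/3} q^{2/3} n^{5/3} T^{2/3}\right)$. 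I would also check the precondition of \cref{thm:main}, namely $T \ge \max\{\psi m,\ \psi m^{(1+\beta)/2}/\delta^{\beta+1}\}$, which for $\beta=0$ reduces to $T \ge \max\{n^5 m,\ n^5 m/\delta\}$ and is implied by $T \ge n^5 m$ since $\delta \ge 1$.

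The main obstacle is not a genuine mathematical difficulty but the bookkeeping in the last step: justifying the passage from the additive expression $\delta = 4+2\tilde K+2q$ to the multiplicative $\tilde K^{2/3}q^{2/3}$ in the stated bound, which hinges on the harmless facts $\tilde K \ge 1$ and $q \ge 1$. Everything else is a verbatim substitution of the parameters $(\alpha,\beta,\gamma,\psi,\delta)$ from \cref{lem:partial:enum:greedy:robust} into the already-proved \cref{thm:main} and \cref{lem:queries}.
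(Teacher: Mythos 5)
Your proposal is correct and follows exactly the route the paper intends: instantiate \cref{lem:partial:enum:greedy:robust} with $(\alpha,\beta,\gamma,\psi,\delta)=(1-\tfrac{1}{e},0,0,n^5,4+2\Tilde{K}+2q)$ and plug into \cref{lem:queries} and \cref{thm:main}, absorbing the additive $\delta$ into $\mathcal{O}(\Tilde{K}q)$ via $\Tilde{K},q\ge 1$. The only nit is that for $\beta=0$ the precondition reads $T\ge\max\{n^5 m,\ n^5\sqrt{m}/\delta\}$ rather than $n^5 m/\delta$, but this does not affect your conclusion since the first term dominates.
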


\begin{lemma}[Generalization of Theorem in \cite{nie2023framework}]\label{lem:greedy:max:robust}
    \textsc{Greedy+Max} in \citep{yaroslavtsev2020bring} is a $(\frac{1}{2},0, 0, n\Tilde{K}, \frac{1}{2}+\Tilde{K}+2q)$-resilient-approximation algorithm for monotone SM problem under a knapsack constraint.
\end{lemma}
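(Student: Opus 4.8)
The plan is to mirror the structure of \cref{lem:partial:enum:greedy:robust}: first invoke the robustness guarantee already established for \textsc{Greedy+Max} in the offline setting, then read off its value-oracle query count, and finally translate these two facts into the five resilience parameters via the equivalence between robust- and resilient-approximation noted after \cref{def:resilient}.

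Concretely, I would first recall from \citet{nie2023framework} that \textsc{Greedy+Max} of \citet{yaroslavtsev2020bring} is a $(\tfrac{1}{2},\,\tfrac{1}{2}+\Tilde{K}+2q)$-robust-approximation algorithm for monotone submodular maximization under a knapsack constraint; that is, when it is run using any $\xi$-controlled estimation $\bar f$ of the true objective $f$ (in the sense of \cref{def:controlled_estimation}), its output $\Theta$ satisfies $\mathbb{E}[f(\Theta)]\geq \tfrac{1}{2}f(S^\star)-(\tfrac{1}{2}+\Tilde{K}+2q)\,\xi$. This fixes $\alpha=\tfrac{1}{2}$ and $\delta=\tfrac{1}{2}+\Tilde{K}+2q$. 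Next I would observe that \textsc{Greedy+Max} performs at most $\Tilde{K}$ greedy rounds, each of which scans the $n$ ground-set elements and additionally evaluates the single best feasible augmenting element, so the total number of value-oracle calls is $\mathcal{O}(n\Tilde{K})$, independent of any accuracy parameter $\epsilon$. Hence its query complexity has the form $N(\beta,\gamma,\psi,\epsilon)=\psi$ with $\psi=n\Tilde{K}$, $\beta=0$, $\gamma=0$ (the $\epsilon=0$ regime of \cref{def:resilient}). Combining, \textsc{Greedy+Max} satisfies \cref{def:resilient} with parameters $(\tfrac{1}{2},0,0,n\Tilde{K},\tfrac{1}{2}+\Tilde{K}+2q)$, which is exactly the claim.

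If instead one wants a self-contained derivation of the robustness bound rather than citing \citet{nie2023framework}, the key steps would be: (i) each greedy marginal-gain comparison performed on $\bar f$ differs from the true marginal gain by at most $2\xi$, so the greedily selected element loses at most $2\xi$ per round relative to the $\bar f$-optimal choice, accumulating to $\mathcal{O}(\Tilde{K}\xi)$ across the at most $\Tilde{K}$ rounds (using $c(v)\ge\cmin$ to bound feasible cardinalities); (ii) the ``+Max'' augmentation step adds one extra element and contributes a further $\mathcal{O}(\xi)$ term together with a $q$-scaled term arising from the density/budget argument of \citet{yaroslavtsev2020bring} that yields the $\tfrac{1}{2}$ guarantee; and (iii) assembling (i)--(ii) with the standard submodular telescoping/concavity argument recovers $\mathbb{E}[f(\Theta)]\ge\tfrac{1}{2}f(S^\star)-\delta\xi$ with $\delta=\tfrac{1}{2}+\Tilde{K}+2q$. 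The main obstacle is step (ii): faithfully tracking how the controlled-estimation error propagates through the budget-normalized (marginal-density) part of the \textsc{Greedy+Max} analysis, since that is where the $q=B/\cmin$ dependence of $\delta$ is introduced and where the accounting is most delicate; the rest is routine error bookkeeping. Because \citet{nie2023framework} already carried this out, I would present the short citation-based argument as the actual proof.
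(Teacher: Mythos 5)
Your proposal is correct and matches the paper's own proof essentially verbatim: the paper likewise cites \citet{nie2023framework} for the $(\tfrac{1}{2},\tfrac{1}{2}+\Tilde{K}+2q)$-robust-approximation guarantee of \textsc{Greedy+Max}, notes the $\mathcal{O}(n\Tilde{K})$ oracle-call count, and concludes the resilience parameters $(\tfrac{1}{2},0,0,n\Tilde{K},\tfrac{1}{2}+\Tilde{K}+2q)$ via the robust-to-resilient equivalence. The optional self-contained derivation you sketch is not carried out in the paper either, so the citation-based argument is exactly what is intended.
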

\begin{proof}
As shown in \cite{nie2023framework}, \textsc{Greedy+Max} \citep{yaroslavtsev2020bring} is a $(\frac{1}{2},\frac{1}{2}+\Tilde{K}+2q)$-robust approximation algorithm for monotone SM under a knapsack constraint. Furthermore, it requires at most $\mathcal{O}(n\Tilde{K})$ oracle calls, thus it is a $(\frac{1}{2},0, 0, n\Tilde{K}, \frac{1}{2}+\Tilde{K}+2q)$-resilient-approximation algorithm.
\end{proof}

\begin{corollary}
\label{gredyandmax}
 C-MA-MAB using the \textsc{Greedy+Max} in \citep{yaroslavtsev2020bring} as a subroutine requires at most $\tilde{\mathcal{O}}(n\Tilde{K})$ communication times and yields a $(1/2)$-regret of at most $\tilde{\mathcal{O}}\left(m^{-\frac{1}{3}} q^\frac{2}{3} \Tilde{K}  n^\frac{1}{3}  T^\frac{2}{3}\right)$ for monotone SM under a knapsack constraint.
\end{corollary}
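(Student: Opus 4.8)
The plan is to obtain \cref{gredyandmax} as an immediate instantiation of \cref{thm:main} and \cref{lem:queries}, feeding in the resilience guarantee for \textsc{Greedy+Max} from \cref{lem:greedy:max:robust}. First I would fix the subroutine parameters: by \cref{lem:greedy:max:robust}, \textsc{Greedy+Max} is an $(\alpha,\beta,\gamma,\psi,\delta)$-resilient-approximation algorithm for monotone SM under a knapsack constraint with $\alpha=\tfrac12$, $\beta=0$, $\gamma=0$, $\psi=n\Tilde{K}$, and $\delta=\tfrac12+\Tilde{K}+2q$, where $q=B/\cmin$. Since $\beta=0$ and $\delta\ge 1$, the horizon requirement $T\ge\max\{\psi m,\psi m^{(1+\beta)/2}/\delta^{\beta+1}\}$ of \cref{thm:main} collapses to $T\ge n\Tilde{K} m$, the usual large-horizon condition subsumed in the $\tilde{\mathcal{O}}$ statement, so \cref{thm:main} applies.

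For the communication count, I would read off \cref{lem:queries}: the number of communication rounds is at most $\mathcal{O}(\psi T^{\beta/(\beta+1)}\log^{\gamma}(T))$, which with $\beta=\gamma=0$ equals $\mathcal{O}(n\Tilde{K})$ — in particular it does not grow with $T$ — giving the stated $\tilde{\mathcal{O}}(n\Tilde{K})$ communication complexity.

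For the regret, I would substitute the parameters into the guarantee of \cref{thm:main}, namely $\tilde{\mathcal{O}}(m^{-1/(3+\beta)}\delta^{2/(3+\beta)}\psi^{1/(3+\beta)}T^{(2+\beta)/(3+\beta)})$, which at $\beta=0$ becomes $\tilde{\mathcal{O}}(m^{-1/3}\delta^{2/3}\psi^{1/3}T^{2/3})$. It then remains only to simplify the constant-type factors: using $q\ge 1$ and $\Tilde{K}\ge 1$ one has $\delta=\tfrac12+\Tilde{K}+2q=\mathcal{O}(\Tilde{K} q)$, hence $\delta^{2/3}\psi^{1/3}=\mathcal{O}((\Tilde{K} q)^{2/3}(n\Tilde{K})^{1/3})=\mathcal{O}(q^{2/3}\Tilde{K}\,n^{1/3})$, which yields exactly $\tilde{\mathcal{O}}(m^{-1/3}q^{2/3}\Tilde{K}\,n^{1/3}T^{2/3})$.

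I do not anticipate a genuine obstacle: the full content of the corollary is already carried by \cref{thm:main}, \cref{lem:queries}, and \cref{lem:greedy:max:robust}. The only points needing care are (i) checking that the horizon hypothesis of \cref{thm:main} indeed simplifies to the usual large-horizon condition at $\beta=0$ so that the theorem may be invoked within the asymptotic claim, and (ii) the bookkeeping that turns $(\tfrac12+\Tilde{K}+2q)^{2/3}(n\Tilde{K})^{1/3}$ into $q^{2/3}\Tilde{K}\,n^{1/3}$; here it is cleanest to overapproximate $\delta=\mathcal{O}(\Tilde{K} q)$ before applying the $2/3$ power, since a termwise treatment of $(\Tilde{K}+q)^{2/3}$ would leave the equivalent but less tidy factor $(\Tilde{K}+q)^{2/3}\Tilde{K}^{1/3}n^{1/3}$.
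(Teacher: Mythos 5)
Your proposal is correct and matches the paper's intended derivation: the corollary is stated as a direct instantiation of \cref{thm:main} and \cref{lem:queries} with the parameters $(\alpha,\beta,\gamma,\psi,\delta)=(\tfrac12,0,0,n\Tilde{K},\tfrac12+\Tilde{K}+2q)$ supplied by \cref{lem:greedy:max:robust}, and your simplification $\delta^{2/3}\psi^{1/3}=\mathcal{O}\bigl(q^{2/3}\Tilde{K}\,n^{1/3}\bigr)$ is exactly the bookkeeping needed to reach the stated bound. No gaps.
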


\begin{lemma}[Generalization of Theorem in \cite{nie2023framework}]\label{lem:greedy:robust}
    \textsc{Greedy+} in \citep{khuller1999budgeted} is a $(\frac{1}{2}(1-\frac{1}{e}),0, 0, n^2,2+\Tilde{K}+q)$-resilient-approximation algorithm for monotone SM problem under a knapsack constraint.
\end{lemma}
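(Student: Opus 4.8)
The plan is to mirror the short arguments already used for \cref{lem:partial:enum:greedy:robust} and \cref{lem:greedy:max:robust}: invoke the robustness analysis of \textsc{Greedy+} carried out by \citet{nie2023framework}, combine it with the oracle-complexity bound from \citet{khuller1999budgeted}, and then translate the resulting robust-approximation guarantee into a resilient-approximation guarantee via the equivalence recorded in the remark of \cref{sec:robust} that identifies $(\alpha-\epsilon,\delta)$-robust-approximation algorithms (with oracle cost $\psi\epsilon^{-\beta}\log^\gamma(1/\epsilon)$) with $(\alpha,\beta,\gamma,\psi,\delta)$-resilient-approximation algorithms.

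First I would recall the offline facts. \citet{khuller1999budgeted} show that \textsc{Greedy+} is a $\tfrac{1}{2}(1-\tfrac{1}{e})$-approximation for monotone submodular maximization under a knapsack constraint, so $\alpha=\tfrac{1}{2}(1-\tfrac{1}{e})$. Its running time is dominated by two greedy-style passes over the $n$ elements (a density-greedy pass together with the auxiliary best-singleton handling), each touching $\mathcal{O}(n)$ candidates across $\mathcal{O}(n)$ rounds, for a total of $\mathcal{O}(n^2)$ value-oracle calls; hence $\psi=n^2$. Since \textsc{Greedy+} has no accuracy parameter $\epsilon$, we set $\beta=\gamma=0$, so that $N(\beta,\gamma,\psi,\epsilon)=\psi=n^2$, exactly as demanded by \cref{def:resilient}.

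Next I would quote the robustness statement established in \citet{nie2023framework}: when \textsc{Greedy+} is executed on a $\xi$-controlled estimator $\bar f$ of $f$ instead of $f$, its output $\Theta$ satisfies $\mathbb{E}[f(\Theta)]\geq \tfrac{1}{2}(1-\tfrac{1}{e})\,f(S^\star)-(2+\Tilde{K}+q)\,\xi$, i.e.\ \textsc{Greedy+} is a $\bigl(\tfrac{1}{2}(1-\tfrac{1}{e}),\,2+\Tilde{K}+q\bigr)$-robust-approximation algorithm. Matching $\delta=2+\Tilde{K}+q$ together with the values of $\alpha,\beta,\gamma,\psi$ above against \cref{def:resilient}, and applying the robust-to-resilient equivalence from \cref{sec:robust}, yields that \textsc{Greedy+} is a $\bigl(\tfrac{1}{2}(1-\tfrac{1}{e}),0,0,n^2,2+\Tilde{K}+q\bigr)$-resilient-approximation algorithm, which is the claim.

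The only genuine content — which \citet{nie2023framework} already supply and which I would simply cite — is checking that the accumulated estimation error never exceeds the additive budget $(2+\Tilde{K}+q)\xi$: both passes of \textsc{Greedy+} rank elements by marginal density $\rho(e\mid A)=(f(A\cup e)-f(A))/c(e)$ computed on $\bar f$, and one must track how the $\pm\xi$ slack propagates through those density comparisons, through the final thresholding against the best singleton, and through the $\mathcal{O}(q)$ greedy insertions bounded by $q=B/c_{\min}$. If a self-contained proof were wanted instead of a citation, redoing this error-propagation argument for the two-phase structure of \textsc{Greedy+} — in particular controlling the cost-normalized density estimates under a $\xi$-controlled estimator — would be the main obstacle; the remaining parameter bookkeeping is routine.
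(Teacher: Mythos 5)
Your proposal matches the paper's proof: the paper likewise cites \citet{nie2023framework} for the $(\frac{1}{2}(1-\frac{1}{e}),\,2+\Tilde{K}+q)$-robust-approximation guarantee of \textsc{Greedy+}, notes the $\mathcal{O}(n^2)$ oracle calls, and reads off the resilient-approximation parameters with $\beta=\gamma=0$ exactly as you do. The extra discussion of how the error propagation would be redone from scratch is not needed, since the paper also relies on the citation rather than reproving it.
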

\begin{proof}
As shown in \cite{nie2023framework}, the \textsc{Greedy+} \citep{khuller1999budgeted} is a $(\frac{1}{2}(1-\frac{1}{e}),2+\Tilde{K}+q)$-robust approximation algorithm for monotone SM under a knapsack constraint. Furthermore, it requires at most $n^2$ oracle calls, thus it is a $(\frac{1}{2}(1-\frac{1}{e}),0, 0, n^2 ,2+\Tilde{K}+q)$-resilient-approximation algorithm.
\end{proof}

\begin{corollary}
 C-MA-MAB using the \textsc{Greedy+} in \citep{khuller1999budgeted} as a subroutine requires at most $\tilde{\mathcal{O}}(n^2)$ communication times and yields a $(\frac{1}{2}(1-\frac{1}{e}))$-regret of at most $\tilde{\mathcal{O}}\left(m^{-\frac{1}{3}} q^\frac{2}{3} \Tilde{K}^\frac{2}{3}  n^\frac{2}{3}  T^\frac{2}{3}\right)$ for monotone SM under a knapsack constraint.
\end{corollary}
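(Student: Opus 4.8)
The plan is to obtain this corollary as a direct instantiation of the general framework, feeding the resilience parameters of \textsc{Greedy+} established in \cref{lem:greedy:robust} into the communication bound of \cref{lem:queries} and the regret bound of \cref{thm:main}. By \cref{lem:greedy:robust}, \textsc{Greedy+} of \citep{khuller1999budgeted} is an $(\alpha,\beta,\gamma,\psi,\delta)$-resilient-approximation algorithm for monotone SM under a knapsack constraint with $\alpha=\tfrac12(1-\tfrac1e)$, $\beta=0$, $\gamma=0$, $\psi=n^2$, and $\delta=2+\Tilde{K}+q$; this is the only nontrivial ingredient, and it is already in hand (in particular the $\mathcal{O}(n^2)$ oracle-call count is inherited from that lemma, which itself only translates the robustness result of \citep{nie2023budgeted} into the $(\alpha,\beta,\gamma,\psi,\delta)$ vocabulary).

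For the communication claim I would apply \cref{lem:queries}: running C-MA-MAB with this subroutine uses at most $\mathcal{O}(\psi T^{\beta/(\beta+1)}\log^{\gamma}(T))$ communication rounds, and substituting $\beta=\gamma=0$ and $\psi=n^2$ removes the $T$-dependence entirely, leaving $\mathcal{O}(n^2)=\tilde{\mathcal{O}}(n^2)$.

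For the regret claim I would first note that the hypothesis inherited from \cref{thm:main}, namely $T\ge\max\{\psi m,\ \psi m^{(1+\beta)/2}/\delta^{\beta+1}\}$, becomes (with $\beta=0$ and $\delta=2+\Tilde{K}+q\ge 1$) implied by $T\ge n^2 m$. Under that condition \cref{thm:main} gives expected cumulative $\tfrac12(1-\tfrac1e)$-regret at most $\tilde{\mathcal{O}}\big(m^{-1/(3+\beta)}\delta^{2/(3+\beta)}\psi^{1/(3+\beta)}T^{(2+\beta)/(3+\beta)}\big)$, and plugging in $\beta=0$, $\psi=n^2$, $\delta=2+\Tilde{K}+q$ yields $\tilde{\mathcal{O}}\big(m^{-1/3}(2+\Tilde{K}+q)^{2/3}n^{2/3}T^{2/3}\big)$. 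Finally, since $\Tilde{K}\ge 1$ and $q\ge 1$ one has $2+\Tilde{K}+q=\mathcal{O}(\Tilde{K}q)$, so $(2+\Tilde{K}+q)^{2/3}=\mathcal{O}(\Tilde{K}^{2/3}q^{2/3})$, which collapses the bound to the stated $\tilde{\mathcal{O}}\big(m^{-1/3}q^{2/3}\Tilde{K}^{2/3}n^{2/3}T^{2/3}\big)$.

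I do not expect a genuine obstacle here: the corollary is a bookkeeping instantiation. The only points needing care are plugging the five resilience parameters from \cref{lem:greedy:robust} into the correct slots of \cref{thm:main} and \cref{lem:queries}, and checking that loosening $(2+\Tilde{K}+q)^{2/3}$ to $\Tilde{K}^{2/3}q^{2/3}$ is harmless inside $\tilde{\mathcal{O}}(\cdot)$. The single step that is not purely mechanical is confirming the $\mathcal{O}(n^2)$ oracle-call bound for \textsc{Greedy+}, but that is exactly what \cref{lem:greedy:robust} already supplies.
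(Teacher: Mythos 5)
Your proposal is correct and follows exactly the route the paper intends: the corollary is a direct instantiation of \cref{thm:main} and \cref{lem:queries} with the resilience parameters $(\tfrac12(1-\tfrac1e),0,0,n^2,2+\Tilde{K}+q)$ supplied by \cref{lem:greedy:robust}, followed by the harmless loosening $(2+\Tilde{K}+q)^{2/3}=\mathcal{O}(\Tilde{K}^{2/3}q^{2/3})$. The only nit is a citation slip (the robustness result for \textsc{Greedy+} comes from \citet{nie2023framework}, not a ``nie2023budgeted'' reference), which has no bearing on the argument.
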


\section{Experiments on Data Summarization}
\label{additional_experiements}

We conduct experiments on stochastic data summarization. 

\subsection{Motivation for Stochastic Data Summarization}

Data summarization is a primary challenge in machine learning, mainly when dealing with a large dataset. While this problem has been extensively studied in the deterministic case, i.e., with access to a deterministic oracle \cite{mirzasoleiman2015lazier, mirzasoleiman2020coresets, sivasubramanian2024gradient}, this work is the first to address online data summarization under a noisy stochastic objective function. 

\subsubsection{Deterministic Data summarization}
In data summarization, agents must play an action $A$ of at most $k$ images to summarize a large dataset $\mathcal{D}$. Based on a similarity metric $C$ between two images, the deterministic objective is:
\begin{equation}
   \underset{\mathcal{A} \subseteq \mathcal{D}:|\mathcal{A}| \leq k}{\arg \max } \sum_{i \in \mathcal{D} } \max _{v \in \mathcal{A}} C(i,v).  
\end{equation}

Adding more images always increases this objective, and it follows a diminishing return property. Thus, it falls in the monotone SM under a cardinality constraint $k$ \cite{mirzasoleiman2015lazier}. 

\subsubsection{Stochastic Data summarization}

Notice that even evaluating the above deterministic objective for a given action $A$ may become very expensive with a large dataset $\mathcal{D}$; more precisely, the evaluation for one action has a complexity of $\mathcal{O}(|\mathcal{A}||\mathcal{D}|)$. We propose a stochastic version of the above optimization problem and solve it through our framework. We consider a stochastic objective where the chosen subset is compared only to a random subset $\mathcal{R} \subseteq \mathcal{D}$ drawn uniformly at random from the large dataset $\mathcal{D}$, 
%
% \begin{equation*}
% \underset{\mathcal{A} \subseteq \mathcal{D}:|\mathcal{A}| \leq k}{\arg \max } \sum_{i \in \mathcal{R}} \max _{v \in \mathcal{A}} C(i,v), 
% \end{equation*}
%
resulting in a noisier but lower complexity evaluations of $\mathcal{O}(|\mathcal{A}||\mathcal{R}|)$. We do not solve the problem for a given realization $\mathcal{R}$, but we solve it in expectation:

\begin{equation}
  \underset{\mathcal{A} \subseteq \mathcal{D}:|\mathcal{A}| \leq k}{\arg \max } \mathbb{E}_{\mathcal{R}} \left[ \sum_{i \in \mathcal{R}} \max _{v \in \mathcal{A}} C(i,v)   \right].   
\end{equation}

% {\bf \color{blue} Seems from above that you are doing such stochastic version for the first time. If that is not the case, mention who did it and you are following their approach. } {\bf \color{red} Fares: yes we are doing such stochastic version for the first time. }

\begin{figure}[t]
    \centering
    \includegraphics[width=0.39\textwidth]{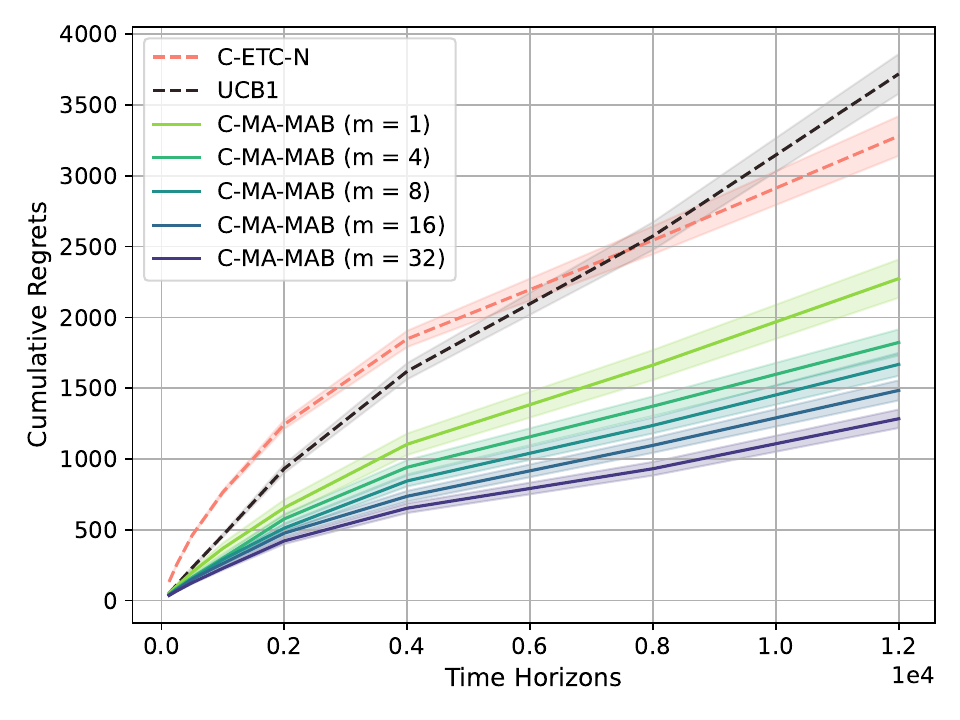}
    \caption{\small Cumulative regrets of summarizing images from FMNIST for different time horizons $T$ using our framework with different number of agents against C-ETC framework and UCB.}
    \label{fig:regrets_famnist}
\end{figure}

\begin{figure}[t]
    \centering
    
    \begin{subfigure}{0.4\textwidth}
        \centering
        \includegraphics[width=\textwidth]{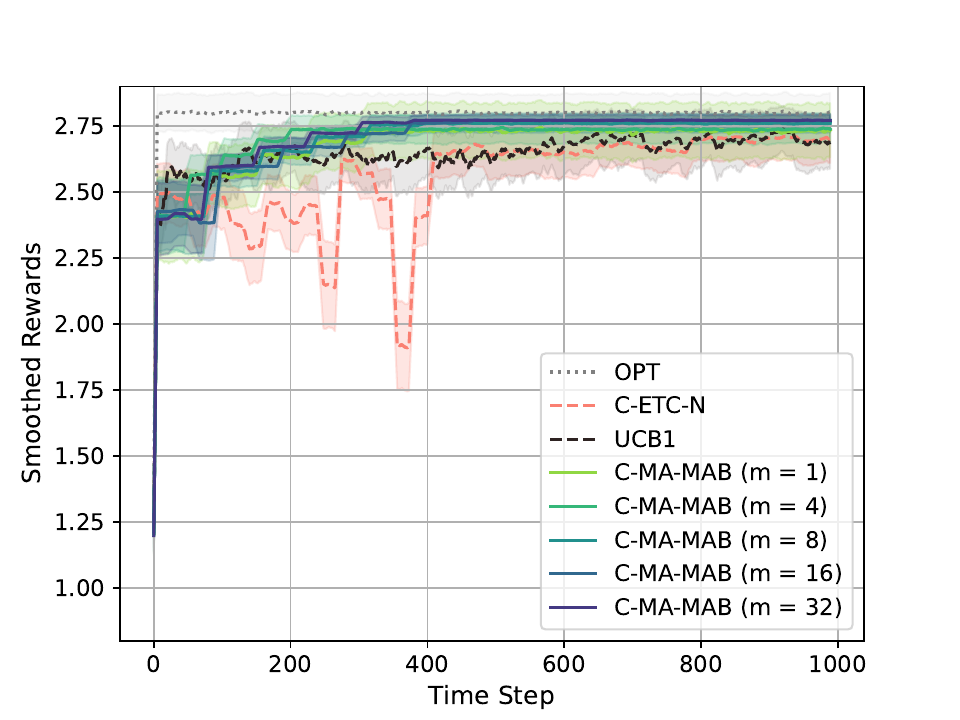}
        \caption{\small CIFAR10}
        \label{fig:cifar-rewards}
    \end{subfigure}
    % \hfill % Add horizontal space between subfigures
    \begin{subfigure}{0.4\textwidth}
        \centering
        \includegraphics[width=\textwidth]{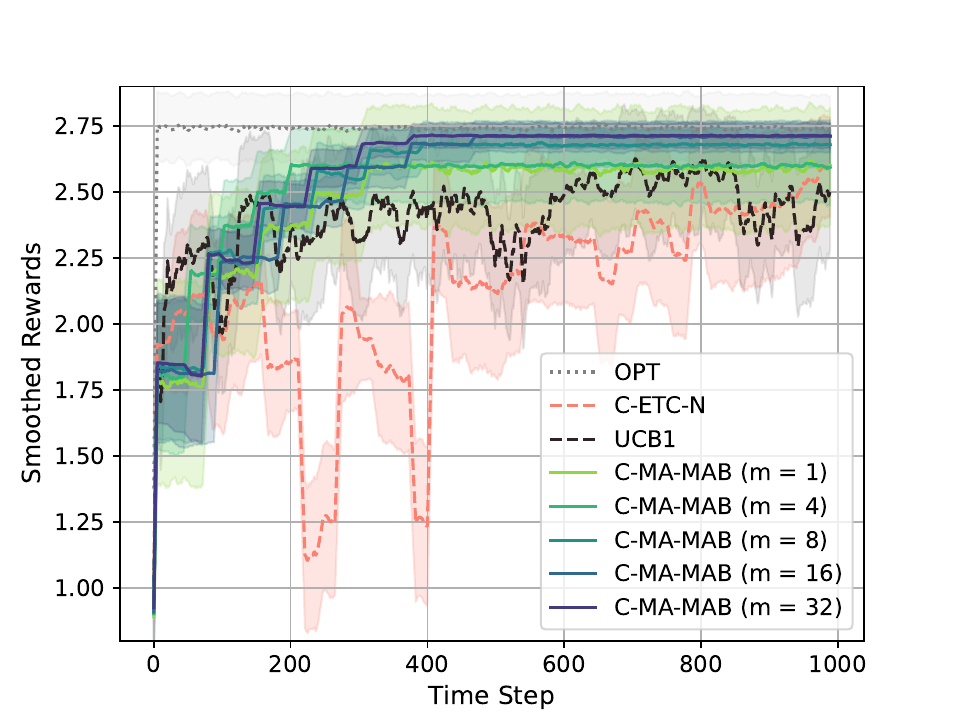}
        \caption{\small FMNIST}
        \label{fig:fmnist-rewards}
    \end{subfigure}
    
    \caption{\small Compare the instantaneous rewards of summarizing images from CIFAR10 and FMNIST for different time horizons $T$ using our framework with one, two, four, sixteen, and thirtytwo agents against C-ETC-N, UCB1, and OPT.}
    \label{fig:combined-rewards}
\end{figure}

\subsection{Experimental Details}

We test our method when using the $\textsc{Stochastic-Greedy}$ algorithm as a subroutine \cite{mirzasoleiman2015lazier} for one agent and multiple agents and compare it to the proposed algorithm in C-ETC framework for SMC (C-ETC-N) \cite{nie2023framework}, and the upper confidence bound (UCB1) algorithm \cite{auer2002finite}. We consider settings with one, four, eight, sixteen, and thirty-two agents. We resize the images to have sixteen pixels. We set a cardinality constraint of $k=5$. Our goal is to summarize information from fifteen images, and instead of comparing it to all the images, we only consider a random batch $\mathcal{R}$ of 3 images. We run the experiments 100 times. We test for several time horizons in $\{125, 250, 500, 1000, 2000, 4000, 8000, 12000, 16000, 20000\}$.

\subsection{Additional Results}

As depicted in Fig. \ref{fig:regrets_cifar10}, the C-MA-MAB demonstrates sub-linear regret guarantees across different agent scenarios within a time horizon $T$. Additionally, it is apparent that, for varying values of $m$, including the single-agent scenario, the C-MA-MAB consistently outperforms both C-ETC-N and UCB1, exhibiting lower regrets over diverse time horizons. Notably, an increase in the number of agents correlates with a reduction in regret for these agents. These observations reinforce the same conclusions drawn from the theoretical analysis.

\end{document}